\renewcommand{\cite}{\citep}
\definecolor{green}{rgb}{0,0.5,0}
\newcommand{\blue}[1]{\textcolor{blue}{#1}}
\newtheorem{theorem}{Theorem}
\newtheorem{assumption}{Assumption}
\newtheorem{proposition}[theorem]{Proposition}
\newtheorem{corollary}{Corollary}
\newtheorem{lemma}{Lemma}
\newtheorem{claim}{Claim}
\newtheorem{definition}{Definition}
\DeclareMathOperator*{\argmax}{arg\,max}
\DeclareMathOperator*{\argmin}{arg\,min}
\newcommand{\bv}[1]{{\boldsymbol{#1}}}	    % Bold variable for English letters
\newcommand{\bvgrk}[1]{{\boldsymbol{#1}}}   % Bold variable for Greek letters
\newcommand{\R}{\mathbb{R}}
\newcommand{\Z}{\mathbb{Z}}
\newcommand{\E}{\mathbb{E}}
\newcommand{\norm}[1]{\left\|#1\right\|}
\newcommand{\fronorm}[1]{\left\|#1\right\|_{F}}
\newcommand{\twonorm}[1]{\left\|#1\right\|_{2}}
\newcommand{\infnorm}[1]{\left\|#1\right\|_{\infty}}
\newcommand{\vvec}{{\rm \bf vec}}
\newcommand{\unif}{{\rm Unif}}
\newcommand{\vSigma}{\bvgrk{\Sigma}}
\newcommand{\vDelta}{\bvgrk{\Delta}}
\newcommand{\vtheta}{\bvgrk{\theta}}
\newcommand{\vphi}{\bvgrk{\phi}}
\newcommand{\veta}{\bvgrk{\eta}}
\newcommand{\tr}{{\rm \bf tr}}
\newcommand{\Fcal}{\mathcal{F}}
\newcommand{\Ical}{\mathcal{I}}
\newcommand{\Jcal}{\mathcal{J}}
\newcommand{\Ncal}{\mathcal{N}}
\newcommand{\Ocal}{\mathcal{O}}
\newcommand{\Ucal}{\mathcal{U}}
\newcommand{\vzero}{\bv{0}}
\newcommand{\vb}{\bv{b}}
\newcommand{\vq}{\bv{q}}
\newcommand{\vr}{\bv{r}}
\newcommand{\vs}{\bv{s}}
\newcommand{\vu}{\bv{u}}
\newcommand{\vv}{\bv{v}}
\newcommand{\vw}{\bv{w}}
\newcommand{\vx}{\bv{x}}
\newcommand{\vz}{\bv{z}}
\newcommand{\vutil}{\tilde{\vu}}
\newcommand{\vubar}{\bar{\vu}}
\newcommand{\vwbar}{\bar{\vw}}
\newcommand{\vA}{\bv{A}}
\newcommand{\vB}{\bv{B}}
\newcommand{\vC}{\bv{C}}
\newcommand{\vF}{\bv{F}}
\newcommand{\vG}{\bv{G}}
\newcommand{\vI}{\bv{I}}
\newcommand{\vM}{\bv{M}}
\newcommand{\vR}{\bv{R}}
\newcommand{\vS}{\bv{S}}
\newcommand{\vU}{\bv{U}}
\newcommand{\vV}{\bv{V}}
\newcommand{\vW}{\bv{W}}
\newcommand{\vX}{\bv{X}}
\newcommand{\vGhat}{\hat{\bv{G}}}
\newcommand{\vMhat}{\hat{\bv{M}}}
\newcommand{\vShat}{\hat{\bv{S}}}
\newcommand{\vStil}{\tilde{\bv{S}}}
\newcommand{\vUtil}{\tilde{\bv{U}}}
\newcommand{\distas}{\overset{\text{i.i.d.}}{\sim}}
\definecolor{darkred}{RGB}{150,0,0}
\definecolor{darkgreen}{RGB}{0,150,0}
\definecolor{darkblue}{RGB}{0,0,200}
\newcommand{\diag}{\textup{\textbf{diag}}} 		% Diag
\newcommand{\leqsym}[1]{\stackrel{\text{(#1)}}{\leq}}
\newcommand{\eqsym}[1]{\stackrel{\text{(#1)}}{=}}
\begin{document}

\runningauthor{Sunmook Choi, Yahya Sattar, Yassir Jedra, Maryam Fazel, Sarah Dean}

\twocolumn[

\aistatstitle{Explore-then-Commit for Nonstationary Linear Bandits with Latent Dynamics}

\aistatsauthor{ 
Sunmook Choi \hspace{-5mm} \And
Yahya Sattar \And 
Yassir Jedra \And 
Maryam Fazel \And 
Sarah Dean }

\aistatsaddress{ 
Cornell \hspace{-5mm} \And 
Cornell  \And 
Imperial College London  \And 
U Washington 
\And Cornell} 
]

\begin{abstract}
    We study a nonstationary bandit problem where rewards depend on both actions and latent states, the latter governed by unknown linear dynamics. Crucially, the state dynamics also depend on the actions, resulting in tension between short-term and long-term rewards. We propose an explore-then-commit algorithm for a finite horizon $T$. During the exploration phase, random Rademacher actions enable estimation of the Markov parameters of the linear dynamics, which characterize the action-reward relationship. In the commit phase, the algorithm uses the estimated parameters to design an optimized action sequence for long-term reward. Our proposed algorithm achieves $\tilde{\mathcal{O}}(T^{2/3})$ regret. Our analysis handles two key challenges: learning from temporally correlated rewards, and designing action sequences with optimal long-term reward. We address the first challenge by providing near-optimal sample complexity and error bounds for system identification using bilinear rewards. We address the second challenge by proving an equivalence with indefinite quadratic optimization over a hypercube, a known NP-hard problem. We provide a sub-optimality guarantee for this problem, enabling our regret upper bound. Lastly, we propose a semidefinite relaxation with Goemans-Williamson rounding as a practical approach.
\end{abstract}

\vspace{-12pt}
\section{Introduction}
Many application domains, like personalized recommendations or online advertising, require sequential decision-making under uncertainty. 
Classical bandit algorithms address the trade-off between reducing uncertainty and optimizing performance in environments where rewards do not depend on the algorithm's past decisions. 
Many real world systems exhibit temporal dependencies -- actions influence not only immediate reward, but also the future state of the environment~\cite{schedl2018current}. 
This paper studies such a setting where decisions propagate through latent dynamics, leading to correlations that fundamentally change both how to learn and how to act optimally.

We study a nonstationary bandit problem in which reward depends \emph{bilinearly} on the \emph{current action} and an \emph{unobserved latent state} that evolves according to a stable linear dynamical system. 
Formally,
\begin{align}\label{eqn:sys}
    r_t = \vu_t^\top \vC \vx_t + z_t,
    \quad
    \vx_{t+1} = \vA \vx_t + \vB \vu_t + \vw_t,
\end{align}
where $\vA,\vB,\vC$ are unknown matrices, $\vu_t$ is a bounded action chosen by the learner, $\vw_t,z_t$ are random noise processes, and $\vx_t$ is the latent state, at time $t{\geq}0$. 
Unlike classical multi-armed bandits with i.i.d.\ rewards, here actions not only determine immediate payoffs but also propagate through the state dynamics, creating temporal correlations in rewards, and making parameter estimation and optimal action sequence search significantly more challenging. 

We observe that the problem can be resolved in the following ways. 
The issue of temporal correlation to estimate unknown parameters is addressed by improving upon recent results from system identification \cite{sattar2025learning}.
While the state-space parameters may not be identifiable from the observed rewards~(without additional assumptions on $\vA,\vB,\vC$), it is possible to obtain high-probability bounds on the estimation error for the \emph{Markov parameters}, which characterize the action-reward relationship. 
In order to obtain an optimal long-term reward, we show that it is sufficient to use only this action-reward representation.
In particular, selecting an optimal action sequence is equivalent to solving an indefinite quadratic problem over a hypercube, where the quadratic function is defined in terms of the Markov parameters. 

Combining these two insights, we propose an \emph{explore--then--commit} (ETC) algorithm. 
In the exploration phase, the learner applies random Rademacher actions to estimate the system’s Markov parameter.
In the commit phase, it commits to an action sequence which is a solution to an indefinite quadratic optimization problem. 
We show that, with high probability, this algorithm achieves sublinear regret, scaling as $\tilde{\Ocal}(T^{2/3})$.

Our setting bears relation to several other nonstationary bandits,
including restless bandits~\cite{whittle1988restless},
rebounding bandits~\cite{leqi2021rebounding}, and bandits with underlying state~\cite{khosravi2023bandits}, to name a few (see \textsection\ref{sec:related-work}  for further discussion).
Key novelties of our setting include that
states can be of arbitrary dimension, the evolution
can depend on interactions between dimensions (i.e., the matrix $\vA$ need not be diagonal),
and actions directly affect states in a potentially correlated manner (through the matrix $\vB$).
Furthermore, unlike many multi-armed bandit settings, we consider a continuous action space.
Our problem also bears relation to model-based reinforcement learning for linear dynamics~\cite{dean2018regret,simchowitz2020making,mania2019certainty,lale2020regret,lale2020logarithmic}, but the reward differs and provides only partial information about the underlying state.

\noindent{\bf Contributions:} In this paper, we make following contributions: 
\begin{itemize}[leftmargin=*,noitemsep,topsep=0pt]
\item {\bf Framework:} We propose a novel nonstationary bandit problem in which the action affects both the current reward (through a bilinear interaction) as well as the future rewards (through a latent state).
\item {\bf Algorithm:} We propose an explore-then-commit algorithm for our bandit problem.  
After exploration with random Rademacher actions, we estimate the Markov parameters which characterize the action-reward relationship. We then commit to an optimized action sequence obtained by solving a semidefinite relaxation of indefinite quadratic optimization over a hypercube --- a known NP-hard problem.

\item {\bf Estimation:} We provide near-optimal sample complexity and estimation error bounds for learning Markov parameters. Specifically, in terms of the exploration length $H$, our error rate scales as $\tilde{\Ocal}(1/\sqrt{H})$, and we require $H \gtrsim \tilde{\Ocal}(d_M)$, where $d_M$ is the dimension of unknown Markov parameters. 
\item {\bf Regret:} We prove an upper bound on regret scaling as $\tilde{\Ocal}(T^{2/3})$, and additionally provide a sub-optimality guarantee for the semidefinite relaxation with the Goemans-Williamson rounding approach.
\end{itemize}

\paragraph{Organization:}
\textsection\ref{sec:problem-formulation} formalizes the model and the regret benchmark, defines the action set, and discuss the computational difficulty of the problem.
\textsection\ref{sec:main-results} introduces the ETC algorithm and its regret guarantee. 
\textsection\ref{sec:sys-id} presents our parameter estimation results with sample complexity and error bounds. 
\textsection\ref{sec:regret-analysis} discuss our regret analysis. 
\textsection\ref{sec:exp} presents numerical experiments. 
\textsection\ref{sec:related-work} shows related work to this paper, and \textsection\ref{sec:conclusion} concludes the paper.

\noindent{ \bf Notations:} We use boldface lowercase/uppercase letters to denote vectors/matrices. 
The $\ell_2$-norm and $\ell_\infty$-norm of a vector $\vx$ are denoted by $\twonorm{\vx}$ and $\infnorm{\vx}$, respectively.
The spectral radius, the spectral norm, and the Frobenius norm of a matrix $\vX$ are denoted by $\rho(\vX), \norm{\vX}$, and $\fronorm{\vX}$, respectively.
The largest and smallest eigenvalue of a square matrix $\vX$ are denoted by $\lambda_{\max}(\vX)$ and $\lambda_{\min}(\vX)$. 
The operation $\otimes$ denotes the Kronecker product.
We use $\gtrsim$ and $\lesssim$ for inequalities that hold up to an absolute constant factor. 
The notation $\tilde{\Ocal}$ hides constants and logarithmic terms. 
Lastly, we use $\vzero_d$ and $\vzero_{m\times n}$ to denote the zero vector in $\R^d$ and the zero matrix in $\R^{m\times n}$, respectively.

\section{Problem Formulation} \label{sec:problem-formulation}


\begin{figure}[t]
\centering
\begin{tikzpicture}[
    node distance=1.2cm and 1.5cm,
    latent/.style={circle, draw, minimum size=0.8cm},
    obs/.style={circle, draw, fill=gray!30, minimum size=0.8cm},
    action/.style={circle, draw, minimum size=0.8cm}
]
% Nodes for t=1
\node[latent] (z1) {$\vx_1$};
\node[obs, below of=z1] (r1) {$r_1$};
\node[action, below of=r1] (a1) {$\vu_1$};
% Nodes for t=2
\node[latent, right=of z1] (ztk) {$\vx_{2}$};
\node[obs, below of=ztk] (rtk) {$r_{2}$};
\node[action, below of=rtk] (atk) {$\vu_{2}$};
% Nodes for t=3
\node[latent, right=of ztk] (zt1) {$\vx_3$};
\node[obs, below of=zt1] (rt1) {$r_{3}$};
\node[action, below of=rt1] (at1) {$\vu_{3}$};
% Nodes for t=t
\node[latent, right=of zt1] (zt) {$\vx_T$};
\node[obs, below of=zt] (rt) {$r_T$};
\node[action, below of=rt] (at) {$\vu_T$};
% Horizontal arrows between latent states
\draw[->] (z1) -- (ztk);
\draw[->] (ztk) -- (zt1);
\draw[->, densely dotted, line width=0.8pt] (zt1) -- (zt);
% Arrows from latent to observed
\draw[->] (z1) -- (r1);
\draw[->] (ztk) -- (rtk);
\draw[->] (zt1) -- (rt1);
\draw[->] (zt) -- (rt);
% Arrows from action to observed
\draw[->] (a1) -- (r1);
\draw[->] (atk) -- (rtk);
\draw[->] (at1) -- (rt1);
\draw[->] (at) -- (rt);
% Arrows from action to next latent state
\draw[->] (a1) -- (ztk);
\draw[->] (atk) -- (zt1);
\draw[->, densely dotted, line width=0.8pt] (at1) -- (zt);

\path (at1) -- (at) node[midway] {$\cdots$};
\end{tikzpicture}
\caption{Graphical Model for Non-Stationary Bandits with controlled Latent Dynamics.}
\label{fig:graphical_model}
\end{figure}

We consider a nonstationary stochastic bandit problem with controlled\footnote{i.e., the state is nontrivially influenced by the actions} latent dynamics and a bilinear reward model.
At each round $t{=}0,1, \dots, T$, the learner selects an action $\vu_t$ from a bounded action set $\Ucal {\subseteq} \R^p$, and receives a reward $r_t {\in} \R$. 
More specifically, the reward is bilinear in the latent state $\vx_t {\in} \R^n$ and the action $\vu_t {\in} \Ucal$, whereas the latent state  evolves according to a linear dynamical system, 
as defined in~\eqref{eqn:sys},
where $\vA, \vB, \vC$ are unknown matrices of appropriate dimensions,  and $\vw_t$, $z_t$ are random zero-mean noise processes. 
Without loss of generality, we assume $\vx_0{=}\vzero_n$. 
Throughout, we assume $\vA$ is Schur-stable, that is, $\rho(\vA){<}1$, and the noise processes are i.i.d. zero-mean sub-Gaussian, with $\vw_t$ having variance proxy $\vSigma_w$ and $z_t$ having variance proxy $\sigma_z^2$.

\subsection{Objective, Action Set, and Regret}
Our objective is to maximize the expected cumulative reward $\E\left[ \sum_{t=0}^T r_t\right]$ over the horizon $T$, by choosing actions from a bounded set.  
For simplicity, we consider the action set to be a centered hypercube in $\R^p$. In other words, we choose
\begin{align}
    \Ucal =\{\vu \in \R^p \colon \infnorm{\vu}\le 1 \} = [-1,1]^p. \label{eqn:Ucal_def}
\end{align}
Let $\{r_t^\star\}_{t=0}^T$ denote the rewards collected under the \emph{optimal open-loop action sequence}, and $\{r_t^\pi\}_{t=0}^T$ denote the rewards collected by policy $\pi$. 
Then, the regret up to round $T$ of a policy $\pi$ is defined as:
\begin{align} \label{eqn:regret_def}
    R_T(\pi) := \E \left[ \sum_{t=0}^T r_t^\star - \sum_{t=0}^T r_t^\pi\right]. 
\end{align}

\subsection{Optimal Open-Loop Actions}\label{sec:open-loop}

In this section, we suppose that $\vA, \vB, \vC$ are known, and show that the optimal action sequence is the solution to an indefinite quadratic optimization problem with $\ell_{\infty}$-norm constraint.
By unrolling the latent state through~\eqref{eqn:sys}, we have $r_0 = z_0$, and for $t \ge 1$,
\begin{align}
    r_t = \vu_t^\top \vC \sum_{i=0}^{t-1} \vA^{t-i-1} (\vB\vu_i + \vw_i) + z_t.
\end{align}
Since $\vw_t$ and $z_t$ have zero-mean, the expected cumulative reward is given by
\begin{align} 
    \E \left[ \sum_{t=0}^T r_t\right] = \vu_{0:T}^\top \vM_T \vu_{0:T},\label{eqn:exp-cumul-reward}
\end{align}
where $\vu_{0:T}:= \left[ \vu_T^\top ~~ \vu_{T-1}^\top ~~ \cdots ~~ \vu_0^\top\right]^\top \in \R^{p(T+1)}$, 
and $\vM_T \in \R^{p(T{+1}) \times p(T{+1})}$ is a block Toeplitz matrix with $(i,j)$-th $p\times p$ block given by
\begin{align} \label{eqn:block-toeplitz-M}
    (\vM_T)_{ij} := 
    \begin{cases}
        \vC\vA^{j-i-1}\vB & (i<j) \\
        \vzero_{p\times p} & (i \ge j).
    \end{cases}
\end{align}
With these definitions, the optimal action sequence and the corresponding cumulative reward are given as follows. 
This cumulative reward is our regret baseline.

\begin{proposition} \label{prop:regret-benchmark}
    Let $\vS_T {:=} \vM_T {+} \vM_T^\top$. Then, the optimal open-loop action sequence $\vu_{0:T}^\star$ is the solution to the problem:
    \begin{equation} \label{eqn:baseline-regret-form}
        \begin{split}
            \max_{\vu_{0:T}} & \quad \frac{1}{2}\vu_{0:T}^\top \vS_T \vu_{0:T} \\
            \text{subject to}& \quad \vu_t \in [-1,1]^p, \quad \forall\, t=0,1,\dots,T
        \end{split}
    \end{equation}
\end{proposition}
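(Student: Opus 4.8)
The plan is to reduce the open-loop optimization directly to the stated quadratic program by symmetrizing the already-derived reward expression. The starting point is~\eqref{eqn:exp-cumul-reward}, which expresses the expected cumulative reward of any fixed action sequence as the quadratic form $\vu_{0:T}^\top \vM_T \vu_{0:T}$. The conceptual observation I would emphasize first is that, because the actions are \emph{open-loop} (committed in advance and independent of the noise realizations $\{\vw_t, z_t\}$), the optimal sequence is by definition the maximizer of this deterministic function of $\vu_{0:T}$. There is no feedback to exploit, so the stochastic control problem collapses to a finite-dimensional deterministic optimization over the stacked decision vector.

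Next I would pin down the feasible set. Since the constraint $\vu_t \in \Ucal = [-1,1]^p$ from~\eqref{eqn:Ucal_def} is imposed separately at each round $t = 0, 1, \dots, T$, the stacked variable $\vu_{0:T}$ ranges over the product hypercube $[-1,1]^{p(T+1)}$, which is precisely the per-round constraint $\vu_t \in [-1,1]^p$ appearing in~\eqref{eqn:baseline-regret-form}. No coupling across time steps is introduced by the action set itself; all temporal coupling lives in the objective matrix $\vM_T$.

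The final step is the symmetrization that converts $\vM_T$ into $\vS_T$. Because $\vu_{0:T}^\top \vM_T \vu_{0:T}$ is a scalar, it equals its own transpose $\vu_{0:T}^\top \vM_T^\top \vu_{0:T}$; averaging the two representations gives $\vu_{0:T}^\top \vM_T \vu_{0:T} = \tfrac{1}{2}\vu_{0:T}^\top(\vM_T + \vM_T^\top)\vu_{0:T} = \tfrac{1}{2}\vu_{0:T}^\top \vS_T \vu_{0:T}$, so maximizing the expected reward is identical to maximizing $\tfrac{1}{2}\vu_{0:T}^\top \vS_T \vu_{0:T}$ over the hypercube, which is the claimed equivalence. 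I do not anticipate a genuine obstacle: the only delicate points are already settled upstream, namely the correct unrolling and the reversed block-Toeplitz indexing behind~\eqref{eqn:block-toeplitz-M}, together with the (standard) fact that open-loop optimality corresponds to maximizing the expectation rather than competing against an adaptive or clairvoyant benchmark. If anything merits a sentence of justification, it is this last definitional point, since it is what licenses replacing the random cumulative reward by its deterministic expectation before optimizing.
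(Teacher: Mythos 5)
Your proposal is correct and follows essentially the same route as the paper, which states Proposition~\ref{prop:regret-benchmark} as an immediate consequence of the derivation in \textsection\ref{sec:open-loop}: the expected cumulative reward of an open-loop sequence is the quadratic form in \eqref{eqn:exp-cumul-reward}, and symmetrizing $\vM_T$ into $\tfrac{1}{2}\vS_T$ together with the per-round hypercube constraint gives the stated optimization. Your explicit remarks on the open-loop/deterministic nature of the benchmark and the product structure of the feasible set are exactly the (unstated) justifications the paper relies on.
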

Note that the maximum value of \eqref{eqn:baseline-regret-form} (i.e. the optimal expected cumulative reward) is at least
$\frac{1}{2} \lambda_{\max}\left(\vS_T\right)$ 
because $\twonorm{\vu_{0:T}} \le 1$ implies $\infnorm{\vu_{0:T}}\leq 1$.

This optimal action sequence represents the optimal \emph{open-loop} strategy for accruing reward,
in contrast to a \emph{feedback} or \emph{closed-loop} policy \cite{bar2003dual}.
It captures the problem of selecting optimal \emph{sequences} of actions to maximize the long-term reward, a consideration which is not present in the classical bandit settings.

\subsection{Combinatorial Actions} \label{subsec:action-set-NP-hard} 

The problem~\eqref{eqn:baseline-regret-form} is the maximization of an indefinite quadratic function over the hypercube.
We show that it suffices to consider a discrete set of actions corresponding to the vertices of the hypercube,
\begin{align}
    \Ucal = \{-1,+1\}^p
\end{align}
so that each coordinate of every action is restricted to $\pm 1$. 

\begin{proposition} \label{prop:max-vertex}
    If $\vA \in \R^{n\times n}$ is symmetric with nonnegative diagonal entries, a maximizer of $\vz^\top \vA \vz$ over $[-1,1]^n$ exists at a vertex $\vs \in \{-1,+1\}^n$, that is,
    \begin{align}
        \max_{\|\vz\|_\infty\le1} \vz^\top\vA\vz \;= \max_{\vs \in \{-1,+1\}^n} \vs^\top\vA\vs.
    \end{align}
\end{proposition}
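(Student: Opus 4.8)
The plan is to prove this by a coordinate-wise rounding argument that exploits the nonnegative-diagonal hypothesis. First I would invoke compactness: since $\vz \mapsto \vz^\top\vA\vz$ is continuous and $[-1,1]^n$ is compact, a maximizer $\vz^\star$ exists by the extreme value theorem. The goal is then to transform $\vz^\star$ into a vertex of the hypercube without ever decreasing the objective, which immediately yields the claimed equality.

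The key observation is to examine how the objective depends on a single coordinate with all others held fixed. Writing $f(\vz) := \vz^\top\vA\vz = \sum_{i,j} A_{ij}\, z_i z_j$ and using symmetry, the dependence on $z_i$ alone takes the form $f(\vz) = A_{ii}\, z_i^2 + 2 z_i \sum_{j\neq i} A_{ij} z_j + c_i$, where $c_i$ collects the terms not involving $z_i$. Since $A_{ii}\ge 0$, this is a convex (possibly affine) function of $z_i$ on the interval $[-1,1]$, so its maximum over that interval is attained at an endpoint $z_i = \pm 1$. Hence, holding the other coordinates fixed, I can replace the $i$-th entry of $\vz^\star$ by a value in $\{-1,+1\}$ without decreasing $f$.

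Next I would iterate this rounding across $i = 1, \dots, n$. Each step leaves the objective nondecreasing, and---crucially---rounding coordinate $i$ does not disturb coordinates already set, so after a single sweep every coordinate lies in $\{-1,+1\}$ and we obtain a vertex $\vs$ with $f(\vs) \ge f(\vz^\star)$. Since $\vz^\star$ is a global maximizer over $[-1,1]^n$ and $\{-1,+1\}^n \subseteq [-1,1]^n$, the reverse inequality $f(\vs)\le f(\vz^\star)$ is immediate, giving $\max_{\|\vz\|_\infty\le 1}\vz^\top\vA\vz = \max_{\vs\in\{-1,+1\}^n}\vs^\top\vA\vs$.

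The only place the hypothesis enters is the convexity step: nonnegative diagonal entries $A_{ii}\ge 0$ are exactly what make each one-dimensional restriction convex and therefore maximized at an endpoint. This is also where the argument is tight---if some $A_{ii}<0$, the restriction is strictly concave in $z_i$ and its maximizer can lie in the interior, so vertex optimality fails in general. The degenerate case $A_{ii}=0$ poses no difficulty, since the restriction is then affine and still maximized at an endpoint (either one works), so the sweep proceeds unchanged. I expect the whole argument to be routine once the per-coordinate convexity is identified; the main conceptual point is recognizing that indefiniteness of $\vA$ is harmless provided its diagonal is nonnegative.
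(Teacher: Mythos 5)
Your proof is correct and takes essentially the same approach as the paper: both exploit that, with $A_{ii}\ge 0$, the objective restricted to a single coordinate is convex on $[-1,1]$ and hence maximized at an endpoint, and both round coordinates one by one to reach a vertex without decreasing the objective. The only cosmetic difference is that you round a maximizer (obtained via compactness) while the paper rounds an arbitrary feasible point and then takes the supremum, which avoids invoking the extreme value theorem explicitly.
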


The proof of Proposition~\ref{prop:max-vertex} is deferred to the appendix.  
We therefore take the action set to be $\Ucal=\{-1,+1\}^p$ instead of the hypercube $[-1,1]^p$ in the remainder of this paper.
Under this choice of action set, the optimization~\eqref{eqn:baseline-regret-form} reduces to an instance of \emph{quadratic unconstrained binary optimization} (QUBO), which is known to be NP-hard, as it generalizes classical problems such as MaxCut. Consequently, solving for the optimal actions exactly is computationally intractable.
This motivates the use of semidefinite relaxations and randomized rounding schemes in \textsection\ref{sec:exp}.

\section{Main Result: Explore-then-Commit} \label{sec:main-results}

We propose an explore–then–commit (ETC) algorithm for minimizing regret tailored to bandits with latent linear dynamics and bilinear rewards. The ETC algorithm runs in two phases, \emph{an exploration phase} and \emph{a commit phase}.  In \textsection\ref{subsec:exploration-phase}, we describe the \emph{exploration phase}, which consists in estimating the so-called \emph{Markov parameters}\footnote{The Markov parameters corresponding to the system \eqref{eqn:sys} are typically defined, for example in \citet{sattar2025learning}, as the sequence of matrices $\lbrace \vC \vA^k \vB\rbrace_{k \ge 0}$.} of the system. In \textsection\ref{subsec:commit-phase}, we describe the \emph{commit phase}, where the learner uses the estimated  Markov parameters to formulate an open-loop optimization problem and design a sequence of actions for the remaining horizon. Finally, in \textsection\ref{subsec:regret-guarantee}, we state our main theoretical result: with appropriate choices of exploration and truncation lengths, the ETC algorithm achieves a regret of order $\tilde{\Ocal}(T^{2/3})$. We summarize the overall procedure in Algorithm~\ref{alg:EtC}.

\begin{algorithm}[ht]
\caption{Explore-then-Commit}
\label{alg:EtC}
\begin{algorithmic}[1]
\Require Horizon $T$, exploration length $H$, truncation length $L$

\Statex{{\color{blue}\underline{\textbf{I. Exploration Phase}} }} 

\State For $t {=} 0, \dots, H $, play $\vu_t {\sim} \unif(\{-1,+1\}^p)$ and observe reward $r_t$.

\State Find
an estimate $\vGhat$ of the first $L$ Markov parameters via least squares using $ \{(\vu_t, r_t)\}_{t=0}^H$ as in \eqref{eqn:least_squares}.

\State Construct $\vShat_{T-H-1} = \frac{1}{2}(\vMhat_{T-H-1} + \vMhat_{T-H-1}^\top)$ where $\vM_{T-H-1}$ is defined via $\vGhat$ as in \eqref{eqn:block-toeplitz-M}.

\Statex{\underline{\blue{\textbf{II. Commit Phase}}}}
\State Find a sequence $(\vu^\pi_{H+1}, \dots, \vu^\pi_{T})$ that solves,
\vspace{-5pt}
\begin{align*}
    \max_{\vu_{H+1:T}}& \tfrac{1}{2}\vu_{H+1:T}^\top \vShat_{T-H-1} \vu_{H+1:T} \\
    \text{s.t. }& \, \vu_t \in \{-1,+1\}^p, \;\; \forall \, t=H+1,\dots,T.
\end{align*}
by {\bf (a)} SDP relaxation with Goemans-Williamson rounding, or {\bf (b)} sign-iteration method.

\State For $t {=} H{+}1, \dots, T $, play  $\vu_{t}^\pi$ and observe reward $r_t$.
\end{algorithmic}
\end{algorithm}

\subsection{Exploration Phase} \label{subsec:exploration-phase}
In the exploration phase, the learner selects actions independently from the Rademacher distribution $\unif\{-1,+1\}^p$. This distribution satisfies the action constraint (see \textsection\ref{subsec:action-set-NP-hard}) and provides sufficient excitation of the system, ensuring that the Markov parameters can be consistently estimated~\cite{sattar2025learning}. Further discussion of persistence of excitation is deferred to the appendix.

The trajectory \(\{(\vu_t,r_t)\}_{t=0}^H\) collected in this phase is used to estimate the system’s Markov parameters. Concretely, we form nonlinear regressors/features using the previous $L$ actions $(\vu_{t-1},\dots,\vu_{t-L})$ together with the current action $\vu_t$, and use linear regression to predict the current reward $r_t$. The least-squares solution gives an estimate of the first $L$ Markov parameters $\{\vC\vA^k\vB\}_{k=0}^{L-1}$.

Since the matrix $\vA$ is Schur-stable, the influence of terms beyond lag $L$ decays geometrically. Hence, it suffices to estimate only the first $L$ parameters. The details of the regression procedure and error analysis are deferred to \textsection\ref{sec:sys-id}.

\subsection{Commit Phase} \label{subsec:commit-phase}

After exploration, the learner uses the estimated Markov parameters to construct an open-loop optimization problem aimed at maximizing the expected cumulative reward from $t=H{+}1$ to $T$. 
Because the Rademacher actions and the noise processes are zero-mean, the theoretical objective will be
\begin{align} \label{eqn:commit-theoretical-objective}
    \E\left[\sum_{t=H+1}^T r_t \right] = \frac{1}{2}\vu_{H+1:T}^\top \vS_{T\!-\!H\!-\!1} \vu_{H+1:T} 
\end{align}
with $\vS_{T\!-\!H\!-\!1} = \vM_{T\!-\!H\!-\!1} + \vM_{T\!-\!H\!-\!1}^\top$ where $\vM_{T\!-\!H\!-\!1}$ is a block Toeplitz matrix with similar structure to  Equation \eqref{eqn:block-toeplitz-M} (see the appendix for derivation).

Since the true parameters are unknown, we replace them with estimates from the exploration phase. 
Specifically, we form $\vShat_{T-H-1}$ by substituting the estimated blocks for $\vC \vA^k \vB$ whenever $k < L$ and setting the blocks to $\vzero_{p\times p}$ for $k \ge L$, consistent with the Schur-stability of $\vA$. This yields the commit-phase optimization problem:
\vspace{-3pt}
\begin{equation} \label{eqn:commit-obj}
    \begin{split}
        \max_{\vu_{H+1:T}}& \quad \frac{1}{2}\vu_{H+1:T}^\top \vShat_{T-H-1} \vu_{H+1:T} \\
        \text{subject to}&  \quad  \vu_{H+1:T} \in \{-1,+1\}^{p(T-H)}
    \end{split}
\end{equation}
\vspace{-3pt}

The structure of this problem is identical to \eqref{eqn:baseline-regret-form}, but with estimated rather than true parameters. As noted in \textsection\ref{subsec:action-set-NP-hard}, the problem \eqref{eqn:commit-obj} is NP-hard. We therefore rely on practical approaches to obtain tractable approximations, which will be discussed in \textsection\ref{subsec:SDP-GW} and \textsection\ref{subsec:sign-iter}. 

\subsection{Regret Guarantee} \label{subsec:regret-guarantee}

Our main result establishes that the ETC algorithm achieves sublinear regret with high probability. 

\begin{theorem} \label{thm:regret-bound}
    With exploration length $H=\tilde{\Ocal}(T^{2/3})$ and truncation length $L=\Theta(\log T)$, the explore-then-commit algorithm $\pi$~(Alg.~\ref{alg:EtC}) achieves regret 
    \begin{align*}
        R_T(\pi) = \tilde{\Ocal}(T^{2/3})
    \end{align*}
    with high probability.
\end{theorem}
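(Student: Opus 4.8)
The plan is to split the regret as $R_T(\pi) = R_{\mathrm{expl}} + R_{\mathrm{comm}}$, where $R_{\mathrm{expl}} = \E\big[\sum_{t=0}^{H}(r_t^\star - r_t^\pi)\big]$ and $R_{\mathrm{comm}} = \E\big[\sum_{t=H+1}^{T}(r_t^\star - r_t^\pi)\big]$. The first thing I would establish is that the per-round expected reward is bounded by an absolute constant: unrolling \eqref{eqn:sys} and using Schur-stability $\rho(\vA)<1$ together with $\infnorm{\vu_t}\le 1$ and zero-mean sub-Gaussian noise gives $\E\,\twonorm{\vx_t} = \Ocal(1)$ via a geometric series, hence $|\E[r_t]| = \Ocal(1)$ for any bounded (random or deterministic) action sequence. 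Consequently both reward sums over the $H{+}1$ exploration rounds are $\Ocal(H)$ in magnitude, so $R_{\mathrm{expl}} = \Ocal(H)$.

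For the commit term I would pass to the expected quadratic objective. Conditioning on the exploration data (so that $\vShat_{T-H-1}$ and the committed vector $\vu^\pi_{H+1:T}$ are fixed), the pre-$H$ actions enter the post-$H$ rewards only through cross terms $\vu_i^\top \vC\vA^{\,j-i-1}\vB\,\vu_j$ with $i\le H<j$; since $\norm{\vC\vA^{\,j-i-1}\vB}\lesssim \rho^{\,j-i-1}$ and $j-i-1\ge j-H-1$, summing the geometric tail over all such pairs bounds the total cross contribution by $\Ocal(1)$. Hence, up to $\Ocal(1)$ and using the Toeplitz identity that the commit-commit block of $\vS_T$ equals $\vS_{T-H-1}$, both $\E[\sum_{t>H}r_t^\star]$ and $\E[\sum_{t>H}r_t^\pi]$ equal $\tfrac12\vu_{H+1:T}^\top \vS_{T-H-1}\vu_{H+1:T}$ evaluated at the true-optimal $\vu^\star$ and at $\vu^\pi$. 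Writing $f(\vu)=\tfrac12\vu^\top\vS_{T-H-1}\vu$ and $\hat f(\vu)=\tfrac12\vu^\top\vShat_{T-H-1}\vu$, the task reduces to bounding $f(\vu^\star)-f(\vu^\pi)$, which I would split as $[f(\vu^\star)-\hat f(\hat{\vu}^\star)] + [\hat f(\hat{\vu}^\star)-\hat f(\vu^\pi)] + [\hat f(\vu^\pi)-f(\vu^\pi)]$, where $\hat{\vu}^\star$ maximizes $\hat f$.

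The first and third brackets are estimation error, each at most $\max_{\vu}\tfrac12|\vu^\top(\vS_{T-H-1}-\vShat_{T-H-1})\vu|\le \tfrac12\,p(T-H)\,\norm{\vS_{T-H-1}-\vShat_{T-H-1}}$, since sign vectors satisfy $\twonorm{\vu}^2=p(T-H)=\Ocal(T)$. To control the spectral norm I would use that the operator norm of a block-Toeplitz matrix is at most the sum of its block-diagonal norms, giving $\norm{\vS_{T-H-1}-\vShat_{T-H-1}}\lesssim \sum_{k=0}^{L-1}\norm{\vG_k-\vGhat_k} + \sum_{k\ge L}\norm{\vC\vA^k\vB}$. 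The first sum is $\tilde{\Ocal}(L/\sqrt{H})$ by the Markov-parameter error bound of \textsection\ref{sec:sys-id}, and the truncation tail is $\Ocal(\rho^L/(1-\rho))$, which is $\mathrm{poly}(1/T)$ once $L=\Theta(\log T)$. Thus $\norm{\vS_{T-H-1}-\vShat_{T-H-1}}=\tilde{\Ocal}(1/\sqrt{H})$, and multiplying by the $\Ocal(T)$ norm factor yields an estimation contribution of $\tilde{\Ocal}(T/\sqrt{H})$. Balancing this against the exploration cost $\Ocal(H)$ forces $H\asymp T^{2/3}$, at which point both terms are $\tilde{\Ocal}(T^{2/3})$ while the cross and truncation terms are lower order. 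The high-probability qualifier enters because the identification bound holds on an event of high probability, on which all the deterministic estimates above apply.

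The main obstacle is the middle bracket $\hat f(\hat{\vu}^\star)-\hat f(\vu^\pi)$, i.e.\ the sub-optimality of the commit-phase solver on the NP-hard estimated QUBO. Because the optimal commit value can itself be $\Theta(T)$, a purely multiplicative approximation guarantee (as in worst-case Goemans--Williamson rounding) would only give $\Theta(T)$, i.e.\ linear regret; so this bracket must be driven to $\tilde{\Ocal}(T^{2/3})$ either by treating the commit optimization as a near-exact oracle or by invoking the problem-specific sub-optimality guarantee provided separately. Reconciling the additive regret budget with the inherently multiplicative nature of hypercube-QUBO approximation is the crux; the remaining pieces --- the exploration bound, the block-Toeplitz norm estimate, and the $\tilde{\Ocal}(1/\sqrt{H})$ identification rate --- combine routinely to deliver $R_T(\pi)=\tilde{\Ocal}(T^{2/3})$.
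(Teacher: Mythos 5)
Your proposal is correct and follows essentially the same route as the paper: its $R_{1,T}$ term is your exploration-plus-cross-term cost of $\Ocal(H)+\Ocal(1)$, its $R_{2,T}$ and $R_{3,T}$ terms are your first and third brackets (bounded, as you do, via the add-and-subtract max-max comparison, a block-Toeplitz norm estimate of the form $\max_{\infnorm{\vu}=1}\vu^\top(\vS_{T-H-1}-\vShat_{T-H-1})\vu \lesssim p(T-H)\left(\epsilon + \rho^L/(1-\rho)\right)$, and the $\tilde{\Ocal}(1/\sqrt{H})$ identification rate of Theorem~\ref{thm:sysID}), and the same balancing $H \asymp T^{2/3}$, $L=\Theta(\log T)$ finishes. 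The ``crux'' you flag is resolved in the paper exactly by your first suggested option: $\vu^\pi_{H+1:T}$ is defined in the regret analysis as the exact maximizer of the estimated QUBO, so your middle bracket is identically zero, while the Goemans--Williamson sub-optimality guarantee is stated separately in the appendix and never enters the proof of the regret theorem.
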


The proof of Theorem~\ref{thm:regret-bound} is discussed in \textsection\ref{sec:regret-analysis}. Note that the regret inevitably has a high-probability bound because the Markov parameter estimation error, which affects the regret, has a high-probability bound.

This rate arises from a simple trade-off. The algorithm incurs linear loss during exploration ($\sim H$) and suffers estimation error of order $\tilde{\Ocal}(\sqrt{1/H})$ across the remaining horizon ($\sim T$).
Optimizing the bound over $H$ yields the $\tilde{\Ocal}(T^{2/3})$ rate with $H=\tilde{\Ocal}(T^{2/3})$. 
Lastly, taking $L=\Theta(\log T)$ makes the truncation error negligible, as already noted in \textsection\ref{subsec:exploration-phase} where Schur stability implies geometric decay of higher-order terms.

\section{Main Result: Parameter Estimation} \label{sec:sys-id}
In this section, we use the action-reward samples $\{(\vu_t, r_t)\}_{t=0}^{H}$ from the exploration phase to estimate a map between the actions and rewards. Similar to \citet{sattar2025learning}, we can estimate Markov parameters by regressing the rewards $r_t$ to an expression defined by the history of inputs $\{\vu_\tau\}_{\tau \leq t}$. For $t \geq L$, the reward $r_t$ depends on the past $L$ actions as follows:
\begin{align}
    r_t &= \vu_t^\top \vC\vA^L\vx_{t-L} \nonumber \\
        &\quad + \vu_t^\top\vC \sum_{i=0}^{L-1} \vA^i(\vB\vu_{t-i-1} \!+\! \vw_{t-i-1}) + z_t. \label{eqn:reward_expansion}
\end{align}
To ease notation, for any sequence of vectors $\{\vq_t\}_{t=0}^T$, letc$\bar{\vq}_t:=\left[\vq_t^\top~~\vq_{t-1}^\top~~\cdots~~\vq_{t-L+1}^\top\right]^\top$ denote a concatenation of past $L$ vectors starting from $t {\geq} L$. If we let
\begin{align}
    \vG := 
    \begin{bmatrix}
        \vC\vB & \vC\vA\vB & \cdots & \vC\vA^{L-1}\vB 
    \end{bmatrix} \in \R^{p \times pL}
\end{align}
then \eqref{eqn:reward_expansion} can be compactly written as
\begin{align}
    r_t &= \vu_t^\top \vG \vubar_{t-1} + \vu_t^\top \vC\vA^L \vx_{t-L} + \vu_t^\top \vF \vwbar_{t-1} + z_t \nonumber \\
    &= \vvec(\vG)^\top \vutil_t + \zeta_t
    \label{eqn:reward_covariate_map}
\end{align}
where we define the covariates $\vutil_t {:=} \vubar_{t-1} \otimes \vu_t \in \R^{p^2L}$,  the effective noise $\zeta_t {:=} \vu_t^\top \vC\vA^L \vx_{t-L} {+} \vu_t^\top \vF \vwbar_{t-1} {+} z_t$, and $\vF {:=} [\vC~~ \vC\vA ~~ \cdots ~~\vC\vA^{L-1}]$. Hence, we can formulate the following least-squares problem to estimate the unknown parameter $\vG$ from $\{(\vu_t, r_t)\}_{t=0}^{H}$:
\begin{align}
    \hat{\vG}= \argmin_{\vG \in \R^{p \times pL}}  \sum_{t=L+1}^H 
    \left(r_t {-} \vvec(\vG)^\top \vutil_t \right)^2. \label{eqn:least_squares}
\end{align}
Let $\vUtil {:=} \sum_{t=L+1}^H \vutil_t\vutil_t^\top$ for brevity. When $\vUtil$ is full rank, the solution to the least-squares problem above is $\vvec(\vGhat) = \vUtil^{-1} \sum_{t=L+1}^H \vutil_t r_t$, and the corresponding estimation error is given by
\begin{align}
    \vvec(\vGhat) - \vvec(\vG) = \vUtil^{-1} \sum_{t=L+1}^H \vutil_t  \zeta_t. \label{eqn:estimation_error_form}
\end{align}
In this paper, we provide improved sample complexity and error bounds (compared with \citet{sattar2025learning}), under the following assumption.
\begin{assumption}\label{assump:sysID}
    {\bf (a)} $\vA$ is Schur-stable; 
    {\bf (b)} $\vw_t$ and $z_t$ are i.i.d. centered sub-Gaussian with variance proxy $\vSigma_w$ and $\sigma_z$, respectively; {\bf (c)}  $\vu_t \distas  \unif\left(\{-1,+1\}^p\right)$.
\end{assumption}

Under Assumption~\ref{assump:sysID}, we derive a bound on sample complexity and an estimation error of \eqref{eqn:estimation_error_form} in Theorem~\ref{thm:sysID}. We note the following fact regarding the stability condition.
According to Gelfand's formula, for all $\rho > \rho(\vA)$, the quantity $\phi(\vA,\rho) {:=} \sup_{k \in \mathbb{Z}_+}(\|\vA^k\|/ \rho^k)$ is finite. Hence, if $\rho(\vA) {<} 1$, for all $\rho {\in} (\rho(\vA), 1)$, we have $\|\vA^k\| {\leq} \phi(\vA,\rho) \rho^k$ for all $k {\in} \mathbb{Z}_+$. A proof is shown in the appendix.

\begin{theorem}\label{thm:sysID}
    Under Assumption~\ref{assump:sysID}, let $\{(\vu_t,r_t)\}_{t=0}^H$ be a single trajectory of action-reward pairs collected from the system \eqref{eqn:sys}.
    For given $\delta \in (0,1)$, suppose 
    \begin{align*}
        H-L \gtrsim (L+1) \left( p^2L \log(p^2L) + \log \left(\frac{L+1}{\delta} \right)\right).
    \end{align*}
    With probability at least $1-\delta$, we have $  \|\vGhat - \vG\|_F \lesssim$
    \begin{align*}
         &\frac{\phi(\vA,\rho) \rho^L \|\vC\|}{1-\rho} \sqrt{\frac{p^3L\lambda_{\max}(\vB\vB^\top {+} \vSigma_w)\log(1/\delta)}{H{-}L}}\\
         &+ (\sigma_z {+} \sqrt{p}~\Lambda_w^{(L)}) \sqrt{\frac{p^2L\log (p^2L(H{-}L)) {+} \log(L/\delta)}{H{-}L}}
    \end{align*}
    for $\Lambda_w^{(L)} {:=} \sum_{k=0}^{L-1}\sqrt{\lambda_{\max}(\vC\vA^k \vSigma_w (\vA^k)^\top \vC^\top)}$.
\end{theorem}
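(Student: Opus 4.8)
The plan is to control the Frobenius-norm estimation error via the explicit formula \eqref{eqn:estimation_error_form}, namely $\vvec(\vGhat) - \vvec(\vG) = \vUtil^{-1} \sum_{t=L+1}^H \vutil_t \zeta_t$. Taking norms and using the sub-multiplicative bound $\fronorm{\vGhat - \vG} = \twonorm{\vUtil^{-1} \sum_t \vutil_t \zeta_t} \le \lambda_{\min}(\vUtil)^{-1} \twonorm{\sum_t \vutil_t \zeta_t}$, the problem splits into two essentially independent tasks: a lower bound on $\lambda_{\min}(\vUtil)$ (the persistence-of-excitation / sample-complexity part), and an upper bound on the noise-weighted sum $\twonorm{\sum_{t=L+1}^H \vutil_t \zeta_t}$ (the self-normalized-martingale part). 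The final rate then comes from dividing the latter by the former; the two summands in the stated bound will correspond to the two pieces of $\zeta_t$ — the geometrically small $\rho^L$ term from the truncated state $\vu_t^\top\vC\vA^L\vx_{t-L}$, and the $(\sigma_z + \sqrt{p}\,\Lambda_w^{(L)})$ term from the in-window noise $\vu_t^\top\vF\vwbar_{t-1} + z_t$.

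First I would handle $\lambda_{\min}(\vUtil)$. Here the Rademacher structure helps: since $\vutil_t = \vubar_{t-1}\otimes\vu_t$ and the coordinates of each $\vu_t$ are independent signs, the expected one-step covariate covariance $\E[\vutil_t\vutil_t^\top]$ is the identity on $\R^{p^2L}$ (the Kronecker of two identity covariances), so $\E[\vUtil] = (H-L)\,\vI_{p^2L}$. The task is then to show concentration of the empirical sum around its mean despite the temporal dependence induced by the overlapping windows $\vubar_{t-1}$. I would establish a matrix concentration bound — via a matrix Bernstein or a covering/$\varepsilon$-net argument over the unit sphere in $\R^{p^2L}$, combined with bounded-difference/Azuma reasoning to handle dependence — yielding $\lambda_{\min}(\vUtil) \gtrsim H-L$ with probability $1-\delta$ precisely under the stated sample-complexity condition $H-L \gtrsim (L+1)(p^2L\log(p^2L) + \log((L+1)/\delta))$. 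The $p^2L\log(p^2L)$ factor is the covering-number/dimension cost, and the extra $(L+1)$ factor reflects that the dependence has effective block length $L$, so one would partition the time indices into $L+1$ interleaved subsequences of (nearly) independent terms.

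Next I would bound $\twonorm{\sum_{t=L+1}^H \vutil_t \zeta_t}$. The cleanest route is to split $\zeta_t = \zeta_t^{(1)} + \zeta_t^{(2)}$ with $\zeta_t^{(1)} := \vu_t^\top\vC\vA^L\vx_{t-L}$ and $\zeta_t^{(2)} := \vu_t^\top\vF\vwbar_{t-1} + z_t$, and bound each contribution separately. For the $\zeta^{(2)}$ part, note $\vu_t$ is independent of $(\vwbar_{t-1}, z_t)$ and the in-window noise is a martingale-difference-like sequence adapted to the natural filtration; a self-normalized vector martingale inequality (Abbasi-Yadkori-style, or a covering argument with sub-Gaussian tail bounds) gives $\twonorm{\sum_t \vutil_t \zeta_t^{(2)}} \lesssim (\sigma_z + \sqrt{p}\,\Lambda_w^{(L)})\sqrt{(H-L)(p^2L\log(\cdot) + \log(L/\delta))}$, where the sub-Gaussian proxy of $\zeta_t^{(2)}$ is exactly $\sigma_z + \sqrt{p}\,\Lambda_w^{(L)}$ (the $\vwbar_{t-1}$ contribution accumulates the $L$ lagged $\vC\vA^k\vw$ terms, giving the sum $\Lambda_w^{(L)}$, with the $\sqrt{p}$ from the Rademacher contraction of $\vu_t$). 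For the $\zeta^{(1)}$ part, I would bound $\twonorm{\vx_{t-L}}$ in probability using Schur stability — $\|\vA^k\| \le \phi(\vA,\rho)\rho^k$ from Gelfand's formula, so $\vx$ has bounded sub-Gaussian proxy $\lesssim \lambda_{\max}(\vB\vB^\top + \vSigma_w)/(1-\rho)^2$ — and the $\vC\vA^L$ prefactor contributes the decisive $\phi(\vA,\rho)\rho^L\|\vC\|/(1-\rho)$ geometric factor, producing the first summand.

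Dividing both noise bounds by $\lambda_{\min}(\vUtil) \gtrsim H-L$ collapses the $\sqrt{H-L}$ numerators against one factor of $H-L$, leaving the advertised $1/\sqrt{H-L}$ decay and reproducing both terms of the theorem. \textbf{The main obstacle} I anticipate is the temporal dependence: the covariates $\vutil_t$ share action blocks across overlapping windows and the effective noise $\zeta_t$ depends on the latent state (hence on \emph{past} actions and process noise), so neither $\vUtil$ nor $\sum_t\vutil_t\zeta_t$ is a sum of independent terms. Making the self-normalized martingale bound and the $\lambda_{\min}$ concentration rigorous under this dependence — carefully choosing the filtration so that $\vu_t$ is conditionally independent of the relevant noise, and paying only the mild $(L+1)$ block-length penalty rather than a crude mixing-time blowup — is the technical crux, and is precisely where the improvement over \citet{sattar2025learning} must come from.
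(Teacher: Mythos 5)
Your overall architecture matches the paper's: the same splitting of the error \eqref{eqn:estimation_error_form} into a persistence-of-excitation bound on $\lambda_{\min}(\vUtil)$ and a bound on the noise-weighted sum, the same decomposition of $\zeta_t$ into a truncation part and an in-window part, and essentially the paper's treatment of the in-window noise (condition on all actions so that $\vutil_t$ is predictable, apply an Abbasi-Yadkori-style self-normalized bound, decompose the $\vwbar$ contribution per lag $k$ with a union bound over the $L$ lags --- exactly where $\sigma_z + \sqrt{p}\,\Lambda_w^{(L)}$ and $\log(L/\delta)$ come from). The PE step also matches in spirit, though the paper does not prove it from scratch: it invokes the PE theorem of \citet{sattar2025learning} and its new ingredients are a computation showing the Kronecker covariates $\vutil_t = \vubar_{t-1}\otimes\vu_t$ built from Rademacher actions have fourth moment at most $9$, plus the almost-sure bound $\|\vUtil\| \le p^2L(H-L)$, which together give the improved sample complexity.

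The genuine gap is in your treatment of the truncation term $\zeta_t^{(1)} = \vu_t^\top\vC\vA^L\vx_{t-L}$. You propose to bound $\twonorm{\vx_{t-L}}$ in probability and let the prefactor $\vC\vA^L$ supply the geometric factor. This cannot produce the first summand of the theorem: a uniform norm bound on the state gives $\bigl\|\sum_t \vutil_t\zeta_t^{(1)}\bigr\|_2 \lesssim (H-L)\cdot\rho^L\cdot(\text{dimensional constants})$, and after dividing by $\lambda_{\min}(\vUtil)\gtrsim H-L$ you are left with a term of order $\rho^L$ that does not decay in $H$, whereas the theorem claims $\rho^L\sqrt{\log(1/\delta)/(H-L)}$. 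The missing $1/\sqrt{H-L}$ must come from cancellation across the sum, and you cannot recover it with a self-normalized martingale bound in the time index $t$ either: once you condition on the actions (which you must, for $\vutil_t$ to be predictable), $\zeta_t^{(1)}$ is essentially measurable with respect to the past, since $\vx_{t-L}$ is a function of $\vu_{0:t-L-1}$ and $\vw_{0:t-L-1}$, so it is not a martingale difference in $t$. The paper's key move is different: expand $\vx_{t-L} = \sum_{i}\vA^{t-L-i}(\vB\vu_{i-1}+\vw_{i-1})$ and \emph{swap the order of summation}, so the double sum becomes $\sum_i \vx_i^\top\veta_i$ indexed by the innovation time $i$, where $\veta_i = \vB\vu_{i-1}+\vw_{i-1}$ is zero-mean sub-Gaussian with proxy $\lambda_{\max}(\vB\vB^\top+\vSigma_w)$ and $\vx_i$ is a bounded weight vector (with $\twonorm{\vx_i} \le \sqrt{p}\,\|\vC\|\phi(\vA,\rho)\rho^L/(1-\rho)$) collecting the future actions; Freedman's inequality then yields $\bigl|\sum_i \vx_i^\top\veta_i\bigr| \lesssim \rho^L\sqrt{(H-L)\log(1/\delta)}$, which is exactly the $\sqrt{H-L}$ gain your argument misses. (Your weaker $\tilde{\Ocal}(\rho^L)$ bound would still suffice for the downstream regret result, since there $L = \Theta(\log T)$ makes $\rho^L$ polynomially small, but it does not prove Theorem~\ref{thm:sysID} as stated.)
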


The proof of Theorem~\ref{thm:sysID} is presented in the appendix. The recent work~\cite{sattar2025learning} shows that such a randomized design yields an estimation error bound of order $\tilde{\Ocal}(1/\sqrt{\delta H})$ with probability at least $1-\delta$. In this work, we sharpen this guarantee by exploiting the sub-Gaussian structure of the noise processes, obtaining an improved bound of order $\tilde{\Ocal}(\sqrt{\log(1/\delta)/H})$ with probability at least $1-\delta$. 

The first term in our error bound corresponds to the approximation error because we use only $L$ Markov parameters to reconstruct the reward function. It decays exponentially with $L$ when the latent dynamics is strictly stable because the norm of Markov parameters decreases exponentially. The second term corresponds to the error due to noisy reward model and noisy latent dynamics update. This term depends linearly on the noise variance proxies.   

The sample complexity in Theorem~\ref{thm:sysID} scales as $\tilde{\Ocal}(p^2L^2)$, 
which is optimal in the dimension of Markov parameters $p^2$. In contrast, the sample complexity in \citet{sattar2025learning} scales as $\tilde{\Ocal}(p^4L^3)$. The reason for this improvement is the choice of inputs $\vu_t \distas  \unif\left(\{-1,+1\}^p\right)$ which leads to better persistence of excitation. 
Secondly, the error bound in Theorem~\ref{thm:sysID} depends on the failure probability $\delta$ through $\log(1/\delta)$. This is significantly better than the $1/\delta$ dependence in \citet{sattar2025learning} which considers a setting where noise processes can be heavy-tailed. 
The sub-Gaussian noise assumption in this paper allows using better concentration arguments, such as self-normalized bounds for martingales and Freedman’s inequality, to obtain optimal dependence on the failure probability.

\section{Regret Analysis} \label{sec:regret-analysis}

In this section we prove Theorem~\ref{thm:regret-bound}. Additional details and proofs of intermediate results are deferred to the appendix. 
Recall the definition of regret from \eqref{eqn:regret_def}.
From Proposition~\ref{prop:regret-benchmark} and Equation~\eqref{eqn:commit-theoretical-objective}, we obtain
\begin{align*}
    R_T(\pi) {=} \frac{1}{2} (\vu_{0:T}^\star)^{\!\top} \vS_T \vu_{0:T}^\star 
    \!-\! \frac{1}{2}(\vu_{H\!+\!1:T}^\pi)^{\!\top} \vS_{T\!-\!H\!-\!1} \vu_{H\!+\!1:T}^\pi
\end{align*}
where $\vu_{H+1:T}^\pi$ is the optimized action sequence which maximizes (\ref{eqn:commit-obj}).
For the purpose of analysis, let $\vutil_{H+1:T}$ be an action sequence that maximizes Equation~\eqref{eqn:commit-theoretical-objective} with true parameters under the constraint $\vutil_t \in \{-1,+1\}^p$ for $t=H+1,\dots,T$.
Then, we decompose the regret as follows:
\begin{align}
    R_T(\pi) = \frac{1}{2} \left( R_{1,T} + R_{2,T} + R_{3,T}\right)
\end{align}
where
\begin{align*}
    R_{1,T} &:= (\vu_{0:T}^\star)^\top \vS_T \vu_{0:T}^\star - (\vutil_{H+1:T})^\top \vS_{T-H-1} \vutil_{H+1:T} \\
    R_{2,T} &:= (\vutil_{H+1:T})^\top \vS_{T-H-1} \vutil_{H+1:T}  \\
    &\qquad \qquad \quad - (\vu_{H+1:T}^\pi)^\top \vShat_{T-H-1} \vu_{H+1:T}^\pi \\
	R_{3,T} &:= (\vu_{H+1:T}^\pi)^\top (\vShat_{T-H-1}-\vS_{T-H-1}) \vu_{H+1:T}^\pi.
\end{align*}
Intuitively, $R_{1,T}$ captures the sub-optimality
between the full-time horizon and the commit-phase horizon, $R_{2,T}$ captures the sub-optimality between the true and estimated dynamics, and finally $R_{3,T}$ captures the error from parameter estimation. We analyze these three terms in detail in \textsection\ref{subsec:analysis-a} and \textsection\ref{subsec:analysis-b-c}.

\subsection{Upper Bounding $R_{1,T}$} \label{subsec:analysis-a}

The term $R_{1,T}$ quantifies the loss from truncating the horizon at $H$, capturing both short-term dependencies within the exploration phase and long-term dependencies between exploration and commit phases through the dynamics.

\begin{proposition} \label{prop:regret-a}
    Let $\rho\in(\rho(\vA),1)$ be given. Then
    \begin{align*}
        R_{1,T} \leq 2p\kappa^2 \left( \alpha H + \beta \right) ,
    \end{align*}
    for $\alpha = 1+\frac{\phi(\vA,\rho)\rho}{1-\rho}$, $\beta = \frac{\phi(\vA,\rho)\rho}{(1-\rho)^2} +1$, and $\kappa =\max\{ \|\vB\|, \|\vC\|\}$.
\end{proposition}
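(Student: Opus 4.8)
The strategy is to eliminate the commit-optimal sequence $\vutil_{H+1:T}$ from the bound using its optimality, and then to read off the remaining gap from the block-Toeplitz structure~\eqref{eqn:block-toeplitz-M}. Let $\vu^\star_{\mathrm c}\in\{-1,+1\}^{p(T-H)}$ be the restriction of the global optimizer $\vu^\star_{0:T}$ to the commit coordinates $t=H+1,\dots,T$. Since $\vu^\star_{\mathrm c}$ is feasible for the commit problem and $\vutil_{H+1:T}$ maximizes $\tfrac12\vu_{H+1:T}^\top\vS_{T-H-1}\vu_{H+1:T}$, we have $(\vutil_{H+1:T})^\top\vS_{T-H-1}\vutil_{H+1:T}\ge(\vu^\star_{\mathrm c})^\top\vS_{T-H-1}\vu^\star_{\mathrm c}$, so that
\[
    R_{1,T}\;\le\;(\vu^\star_{0:T})^\top\vS_T\,\vu^\star_{0:T}-(\vu^\star_{\mathrm c})^\top\vS_{T-H-1}\,\vu^\star_{\mathrm c}.
\]
Using $\vu^\top\vS\vu=2\,\vu^\top\vM\vu$, it remains to compare $\vM_T$ and $\vM_{T-H-1}$ evaluated at the relevant coordinates of the \emph{same} optimal vector $\vu^\star$.

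\textbf{Structural step.} Because $(\vM_T)_{ij}=\vC\vA^{j-i-1}\vB$ depends only on $j-i$, the commit objective's matrix $\vM_{T-H-1}$ is exactly the commit--commit principal block of $\vM_T$ (translation invariance of the Toeplitz structure). Writing both quadratic forms as sums over ordered pairs and subtracting, the commit--commit contribution cancels and only the interaction terms survive:
\[
    (\vu^\star_{0:T})^\top\vM_T\vu^\star_{0:T}-(\vu^\star_{\mathrm c})^\top\vM_{T-H-1}\vu^\star_{\mathrm c}=\underbrace{\sum_{\substack{t>i\\ t,i\le H}}}_{\text{explore--explore}}\!\!(\vu^\star_t)^\top\vC\vA^{t-i-1}\vB\,\vu^\star_i\;+\;\underbrace{\sum_{t>H\ge i}}_{\text{cross}}\!\!(\vu^\star_t)^\top\vC\vA^{t-i-1}\vB\,\vu^\star_i ,
\]
since no ordered pair has $t\le H<i$. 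Hence $R_{1,T}\le 2\big(|\text{explore--explore}|+|\text{cross}|\big)$.

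\textbf{Bounding step.} Each summand satisfies $|(\vu^\star_t)^\top\vC\vA^{k}\vB\,\vu^\star_i|\le p\,\norm{\vC\vA^{k}\vB}$ with $k=t-i-1$, using $\twonorm{\vu^\star_t}\le\sqrt p$. For the instantaneous coupling $k=0$ we use $\norm{\vC\vB}\le\kappa^2$; for $k\ge1$ we use $\norm{\vA^k}\le\phi(\vA,\rho)\rho^k$ to get $\norm{\vC\vA^k\vB}\le\kappa^2\phi(\vA,\rho)\rho^k$. The explore--explore block contains $H-k$ ordered pairs with gap $t-i-1=k$, so it is bounded by $p\kappa^2\big(H+\phi(\vA,\rho)\sum_{k\ge1}(H-k)\rho^k\big)\le p\kappa^2 H\big(1+\tfrac{\phi(\vA,\rho)\rho}{1-\rho}\big)=p\kappa^2\alpha H$. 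For the cross block, the substitution $a=t-H-1\ge0$, $b=H-i\ge0$ gives $k=a+b$ and factorizes the sum as $(\sum_a\rho^a)(\sum_b\rho^b)\le(1-\rho)^{-2}$; isolating the single $k=0$ pair (contributing $p\kappa^2$) from the geometrically decaying tail (of order $p\kappa^2\phi(\vA,\rho)\rho/(1-\rho)^2$) bounds it by $p\kappa^2\beta$. Adding the two blocks and the factor $2$ yields $R_{1,T}\le 2p\kappa^2(\alpha H+\beta)$.

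\textbf{Main obstacle.} The conceptual crux is the structural step: recognizing that $\vM_{T-H-1}$ is precisely the commit--commit principal block of $\vM_T$, which is what lets the optimality of $\vutil_{H+1:T}$ collapse the comparison onto the explore--explore and cross interactions alone. After that the work is purely geometric bookkeeping; the only care needed is to separate the instantaneous ($k=0$) interactions---which produce the linear $H$ term and the constants in $\alpha$ and $\beta$---from the geometrically decaying tails that contribute the $\phi(\vA,\rho)\rho/(1-\rho)$ factors.
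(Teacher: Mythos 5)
Your route is, at its core, the same as the paper's: the paper zero-pads $\vS_{T-H-1}$ into $\vStil_T$ and bounds $R_{1,T}\le\max_{\|\vu\|_\infty\le 1}\vu^\top(\vS_T-\vStil_T)\vu$ via a max--min step (take $\vv=\vu$), which is exactly equivalent to your step of restricting the global optimizer to the commit coordinates; the nonzero blocks of $\vS_T-\vStil_T$ are precisely your cross block and explore--explore block, and the geometric-series bookkeeping is the same. So the structural insight you flag as the crux matches the paper's proof.

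However, your final bounding step contains a concrete error: the cross block is \emph{not} bounded by $p\kappa^2\beta$ under your decomposition. With $a=t-H-1$, $b=H-i$, the tail ($k=a+b\ge 1$) of the cross sum is
\begin{align*}
\phi(\vA,\rho)\!\!\sum_{(a,b)\neq(0,0)}\!\!\rho^{a+b}
\;\le\;\phi(\vA,\rho)\left(\frac{1}{(1-\rho)^2}-1\right)
\;=\;\phi(\vA,\rho)\,\frac{\rho(2-\rho)}{(1-\rho)^2},
\end{align*}
which is strictly larger than $\frac{\phi(\vA,\rho)\rho}{(1-\rho)^2}$: the pairs with $a=0$, $b\ge1$ (i.e., $t=H+1$ interacting with $i\le H-1$) contribute an additional $\frac{\phi(\vA,\rho)\rho}{1-\rho}$ that your factorization silently drops, so the cross block can exceed $p\kappa^2\beta$ by up to a factor approaching $2$ in that term. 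The repair is one line, and it is exactly how the paper closes the books: in the explore--explore block use the exact pair count $H-k\le H-1$ for $k\ge1$, giving the tighter bound $p\kappa^2\bigl(H+(H-1)\tfrac{\phi(\vA,\rho)\rho}{1-\rho}\bigr)$ instead of $p\kappa^2\alpha H$; the spare $\tfrac{\phi(\vA,\rho)\rho}{1-\rho}$ then exactly absorbs the cross pairs with $a=0$, $b\ge1$, and the two blocks together sum to $p\kappa^2(\alpha H+\beta)$, which doubled gives the claim. In short: right decomposition, right rates, but the constants as you allocated them between the two blocks do not add up, and the per-block bound you assert for the cross block is false as stated.
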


The proof of Proposition~\ref{prop:regret-a} is deferred to the appendix. 
Note that the upper bound in \ref{prop:regret-a} depends linearly on $H$, implying that the available reward grows linearly in time horizon.
There is also a second term, a constant term independent of $H$.
The coefficients $\alpha$ and $\beta$ depending on the stability of the state dynamics $\vA$, and can be understood as effective memory capacity~\cite{kumar2024online}.
The first term arises from to short-term dependencies within the exploration-phase. The second term is related to long-term dependencies which, due to stability, do not depend on the horizon.

\subsection{Upper Bounding $R_{2,T}$ and $R_{3,T}$} \label{subsec:analysis-b-c}

The terms $R_{2,T}$ and $R_{3,T}$ both depend on the estimation error $\epsilon$ and the truncation error $\rho^L$, and their bounds are given in the following proposition. 

\begin{proposition} \label{prop:regret-b-c}
    Let $\rho \in (\rho(\vA),1)$ be given and $\epsilon>0$ be the high-probability parameter estimation error, 
    i.e., $\|\vG-\vGhat\|_F \le \epsilon$. Then, with high probability, 
    \begin{align*}
        \max\{R_{2,T}, R_{3,T} \} \leq 2p(T-H) \left( \epsilon + \kappa^2 \gamma_L\right)
    \end{align*}
    where $\kappa = \max\{\|\vB\|, \|\vC\|\}$ and $\gamma_L = \frac{\phi(\vA,\rho)\rho^L}{1-\rho}$.
\end{proposition}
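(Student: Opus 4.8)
The plan is to reduce both $R_{2,T}$ and $R_{3,T}$ to a single operator-norm discrepancy $\opnorm{\vShat_{T-H-1}-\vS_{T-H-1}}$, exploiting that every feasible sequence $\vu_{H+1:T}\in\{-1,+1\}^{p(T-H)}$ satisfies $\twonorm{\vu_{H+1:T}}^2 = p(T-H)$. Since $R_{3,T}$ is already a quadratic form in the perturbation, I would bound it directly by $R_{3,T} \le \twonorm{\vu_{H+1:T}^\pi}^2\,\opnorm{\vShat_{T-H-1}-\vS_{T-H-1}} = p(T-H)\,\opnorm{\vShat_{T-H-1}-\vS_{T-H-1}}$. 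For $R_{2,T}$ the extra ingredient is optimality: because $\vu_{H+1:T}^\pi$ maximizes the estimated objective over the vertices, $(\vu_{H+1:T}^\pi)^\top\vShat_{T-H-1}\vu_{H+1:T}^\pi \ge \vutil_{H+1:T}^\top\vShat_{T-H-1}\vutil_{H+1:T}$, whence $R_{2,T}\le \vutil_{H+1:T}^\top(\vS_{T-H-1}-\vShat_{T-H-1})\vutil_{H+1:T}\le p(T-H)\,\opnorm{\vShat_{T-H-1}-\vS_{T-H-1}}$, so both terms share the same bound.

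Next I would control $\opnorm{\vShat_{T-H-1}-\vS_{T-H-1}}$. Writing $\vS=\vM+\vM^\top$ for both the true and estimated matrices gives $\vShat-\vS=(\vMhat-\vM)+(\vMhat-\vM)^\top$, so $\opnorm{\vShat-\vS}\le 2\,\opnorm{\vMhat-\vM}$ --- this is the source of the factor $2$. I then split $\vMhat-\vM$ along its block-diagonals into an \emph{estimation} part, whose lag-$k$ blocks ($0\le k<L$) are $\vGhat_k-\vC\vA^k\vB$, and a \emph{truncation} part, whose lag-$k$ blocks ($k\ge L$) are $-\vC\vA^k\vB$. For a block-Toeplitz matrix the operator norm is subadditive over block-diagonals (each shifted block-diagonal factors as a nilpotent shift of operator norm at most $1$ tensored with the block), so $\opnorm{\cdot}\le\sum_k\opnorm{\text{block}_k}$. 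The truncation part is then bounded by $\sum_{k\ge L}\opnorm{\vC\vA^k\vB}\le\kappa^2\phi(\vA,\rho)\sum_{k\ge L}\rho^k=\kappa^2\phi(\vA,\rho)\rho^L/(1-\rho)=\kappa^2\gamma_L$, using $\opnorm{\vB},\opnorm{\vC}\le\kappa$ and the Gelfand bound $\opnorm{\vA^k}\le\phi(\vA,\rho)\rho^k$ stated earlier. The estimation part is bounded by $\sum_{k<L}\opnorm{\vGhat_k-\vC\vA^k\vB}$, which I relate to $\fronorm{\vGhat-\vG}\le\epsilon$, the high-probability guarantee from Theorem~\ref{thm:sysID}; this is the only place the high-probability qualifier enters, everything else being deterministic. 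Combining yields $\opnorm{\vShat-\vS}\le 2(\epsilon+\kappa^2\gamma_L)$ and hence the claimed $2p(T-H)(\epsilon+\kappa^2\gamma_L)$.

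The hard part will be the last step: passing from the per-lag sum $\sum_{k<L}\opnorm{\vGhat_k-\vC\vA^k\vB}$ to the aggregate Frobenius error $\epsilon$. Since $\fronorm{\vGhat-\vG}^2=\sum_{k<L}\fronorm{\vGhat_k-\vC\vA^k\vB}^2$ controls the blocks only jointly, Cauchy--Schwarz gives $\sum_{k<L}\opnorm{\vGhat_k-\vC\vA^k\vB}\le\sum_{k<L}\fronorm{\vGhat_k-\vC\vA^k\vB}\le\sqrt{L}\,\epsilon$, i.e.\ a bound $\sqrt{L}\,\epsilon$ rather than $\epsilon$; the extra $\sqrt{L}=\sqrt{\Theta(\log T)}$ is a logarithmic factor absorbed by $\tilde{\Ocal}$ in the final regret. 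I would therefore read the proposition with this understanding, or equivalently let $\epsilon$ denote a bound on the block-Toeplitz operator norm of the estimation part directly. The truncation/geometric-series estimate and the optimality step for $R_{2,T}$ are otherwise routine, so I expect this block-Toeplitz operator-norm conversion to be the only delicate point.
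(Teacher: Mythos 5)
Your argument is sound, and it takes a genuinely different technical route from the paper's --- but note that, as you yourself flag, it proves the bound with $\sqrt{L}\,\epsilon$ in place of $\epsilon$. You work in Euclidean geometry: every vertex sequence satisfies $\twonorm{\vu_{H+1:T}}^2 = p(T-H)$, so after the optimality step for $R_{2,T}$ (which coincides with the paper's max--min step) both terms reduce to $p(T-H)\opnorm{\vShat_{T-H-1}-\vS_{T-H-1}}$, and you control the block-Toeplitz perturbation by subadditivity of the operator norm over shifted block-diagonals (shift matrix tensored with the lag-$k$ block). The paper never leaves the $\ell_\infty$ geometry: Lemma~\ref{lem:frob} gives $\vx^\top\vM\vx\le n\fronorm{\vM}$ and $\vx^\top\vM\vx\le n\norm{\vM}$ whenever $\infnorm{\vx}\le 1$, and Proposition~\ref{prop:upperbound} applies this block-by-block, using the spectral norm on the truncation blocks and the Frobenius norm on the estimation blocks, then sums over rows of $\vM_{T-H-1}-\vMhat_{T-H-1}$. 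The geometric-series treatment of the truncation band and the final structure $2p(T-H)(\,\cdot\,+\kappa^2\gamma_L)$ are identical in the two arguments; only the tool for converting ``small blocks'' into ``small quadratic form over the hypercube'' differs.

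The $\sqrt{L}$ you could not remove is not a defect of your proof: it marks a gap in the paper's own derivation. In the paper's per-row accounting, the sum of the Frobenius norms of the $L$ estimation-error blocks in a row is asserted to \emph{equal} $\fronorm{\vG-\vGhat}$, i.e.\ $\sum_{\ell=0}^{L-1}\fronorm{\vC\vA^{\ell}\vB-\vGhat_{\ell}}=\fronorm{\vG-\vGhat}$, where $\vGhat_\ell$ denotes the $\ell$-th $p\times p$ block of $\vGhat$. Since $\fronorm{\vG-\vGhat}^2=\sum_{\ell=0}^{L-1}\fronorm{\vC\vA^\ell\vB-\vGhat_\ell}^2$, that sum is in fact \emph{at least} $\fronorm{\vG-\vGhat}$, and is only bounded above by $\sqrt{L}\,\fronorm{\vG-\vGhat}$ via Cauchy--Schwarz --- exactly the factor you obtain. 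Moreover, the loss is intrinsic to both strategies: take $p=1$, a nilpotent $\vA$ (so the truncation blocks vanish), and all $L$ block errors equal to $\epsilon/\sqrt{L}$; then $\fronorm{\vG-\vGhat}=\epsilon$, yet the all-ones vector gives $\max_{\infnorm{\vu}\le1}\vu^\top(\vM_{T-H-1}-\vMhat_{T-H-1})\vu\gtrsim (T-H)\sqrt{L}\,\epsilon$ once $T-H\gg L$. So no argument passing through this intermediate maximum --- the paper's or yours --- can deliver $p(T-H)\epsilon$, and the provable statement is $\max\{R_{2,T},R_{3,T}\}\le 2p(T-H)(\sqrt{L}\,\epsilon+\kappa^2\gamma_L)$. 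As you observe, with $L=\Theta(\log T)$ the extra $\sqrt{\log T}$ is absorbed into $\tilde{\Ocal}(\cdot)$, so Theorem~\ref{thm:regret-bound} is unaffected.
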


The proof of Proposition~\ref{prop:regret-b-c} is deferred to the appendix. 
We can observe that the bound depends on the length of commit-phase $T-H$, the estimation error $\epsilon$, and the truncation error $\rho^L$. 

\subsection{Final Regret Analysis}

Finally, combining Propositions~\ref{prop:regret-a} and \ref{prop:regret-b-c} yields the following high-probability bound:
\begin{align}
R_T(\pi) \le 2p\kappa^2(\alpha H + \beta) + 4pT(\epsilon + \kappa^2\gamma_L).
\end{align}
From Theorem~\ref{thm:sysID}, we can take  $\epsilon \lesssim \sqrt{\tfrac{\log(1/\delta)}{H-L}}$ with probability at least $1-\delta$. 
Optimizing over $H$ gives $H=\tilde{\Ocal}(T^{2/3})$, and with $L=\Theta(\log T)$ we obtain the high-probability regret bound $R_T(\pi)=\tilde{\Ocal}(T^{2/3})$.
Note that we define regret to be  the expected cumulative rewards, where expectation is taken both over actions (in the exploration phase) and noise processes. 
However, we still have a high-probability bound on the regret because the estimation error bound holds with high probability.

\section{Numerical Experiments} \label{sec:exp}

\begin{figure*}[ht]
    \centering
    \begin{subfigure}[t]{0.24\textwidth}
        \centering
        \includegraphics[width=\linewidth]{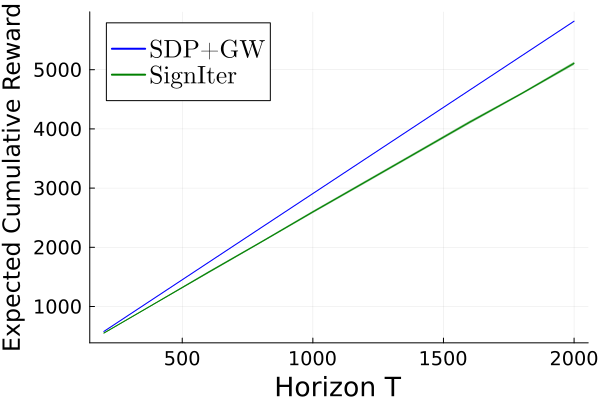}
        \caption{Oracle}
        \label{fig:regret-benchmark}
    \end{subfigure}
    \hfill
    \begin{subfigure}[t]{0.24\textwidth}
        \centering
        \includegraphics[width=\linewidth]{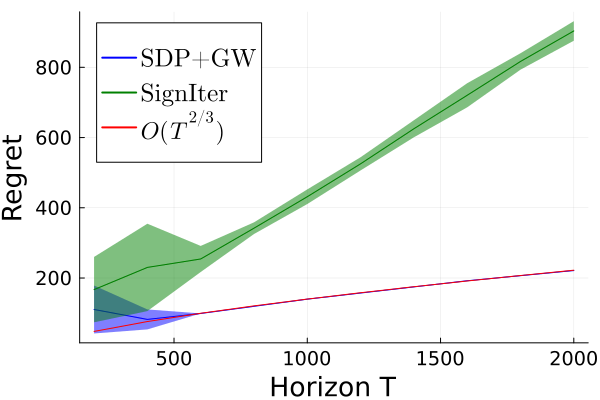}
        \caption{Regret}
        \label{fig:regret}
    \end{subfigure}
    \hfill
    \begin{subfigure}[t]{0.24\textwidth}
        \centering
        \includegraphics[width=\linewidth]{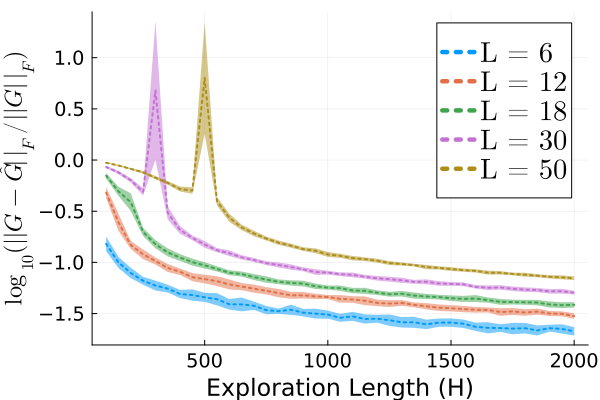}
        \caption{$\rho(\vA)=0.1$}
        \label{fig:param-est-rho-p1}
    \end{subfigure}
    \hfill
    \begin{subfigure}[t]{0.24\textwidth}
        \centering
        \includegraphics[width=\linewidth]{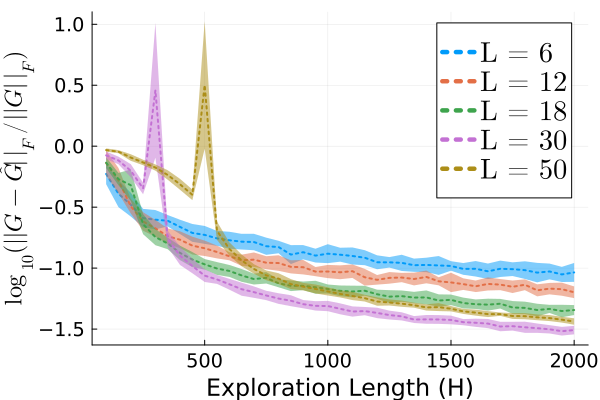}
        \caption{$\rho(\vA)=0.9$}
        \label{fig:param-est-rho-p9}
    \end{subfigure}
    \caption{Each curve shows a mean over 20 experiments, with shaded regions indicating $\pm1$ standard deviation. 
    (a) Expected cumulative reward under the oracle benchmark, approximated by semidefinite relaxation with Goemans-Williamson rounding (\texttt{SDP+GW}) and by the sign-iteration method (\texttt{SignIter}). (b) The regret of the explore-then-commit algorithm measured against the \texttt{SDP+GW} oracle benchmark, compared with the theoretical $\tilde{\Ocal}(T^{2/3})$ rate. 
    (c)--(d) Relative error of Markov parameter estimation for different truncation lengths $L$, under systems with spectral radii $\rho(\vA)=0.1$ and $\rho(\vA)=0.9$.}
    \label{fig:param-est}
\end{figure*}

In this section, we present numerical experiments with synthetic data. 
To obtain tractable solutions to \eqref{eqn:baseline-regret-form} and \eqref{eqn:commit-obj} and analyze sub-optimality, we investigate two methods: 
(i) a semidefinite relaxation combined with Goemans-Williamson random hyperplane rounding, and 
(ii) a heuristic sign-iteration method.
The code for experiments can be found in \url{https://github.com/sdean-group/EtC-latent-bandits}.

\subsection{SDP Relaxation and Rounding} \label{subsec:SDP-GW}

Our problem is to maximize a quadratic form $\vx^\top \vW \vx$ for some symmetric $\vW \in \R^{n\times n}$ over $\vx\in \{-1,1\}^n$, which is equivalent to
\begin{equation} \label{eqn:simple-form-matrix}
    \begin{split}
        \text{maximize } \; & \text{tr}(\vW\vX) \\
        \text{subject to } \; & \vX \succeq 0, \; \text{rank} (\vX) =1 \\
        & \vX_{ii} =1, \quad i=1,\dots,n.
    \end{split}
\end{equation}
Dropping the rank constraint yields the semidefinite relaxation, which we solve using the Mosek solver \cite{mosek}.

To obtain a feasible binary matrix from a solution $\vX$ to the relaxed problem, we apply Goemans-Williamson (GW) random hyperplane rounding algorithm \cite{goemans1995improved}: factor $\vX = \vV^\top \vV$, sample $\vr \sim \Ncal(0,\vI_n)$, and set a vector
\begin{align*}
    \vx = 
    \begin{bmatrix}
        \text{sign}(\vr^\top \vv_1) & \cdots & \text{sign}(\vr^\top \vv_n)
    \end{bmatrix}^\top
\end{align*}
where $\vv_i$ is the $i$th column vector of $\vV$.
Then the matrix $\vx\vx^\top$ is rank-one and feasible for \eqref{eqn:simple-form-matrix}. In practice, we repeat the rounding multiple times and keep the best value of $\vx^\top \vW \vx$.
We will hereafter refer to this method as \texttt{SDP+GW}.

For MaxCut problems, \texttt{SDP+GW} achieves an $\alpha$-approximation algorithm with $\alpha\approx0.87856$. While our objective is not exactly MaxCut,
we derive a similar lower bound which depend on $\vW$ (see the appendix). 

\subsection{Sign-Iteration Method} \label{subsec:sign-iter}

As an alternative to relaxation and rounding, we also consider a heuristic method called \emph{sign-iteration} (henceforth \texttt{SignIter}) to maximize $\vx^\top \vW \vx$ over $\vx \in \{-1,+1\}^n$. Starting from a random $\vx^{(0)}$, the update is
\[
x^{(k+1)}_i = 
\begin{cases}
\mathrm{sign}\big((\vW \vx^{(k)})_i\big), & (\vW \vx^{(k)})_i \neq 0, \\
x^{(k)}_i, & \text{otherwise},
\end{cases}
\]
and the procedure repeats until convergence or a maximum number of iterations. To mitigate dependence on initialization, the method is run multiple times and the best objective value is returned.

\subsection{Experimental Evaluation of Regret} \label{subsec:exp-regret}
We consider a simple latent dynamics and reward function specified by
\iffalse
\begin{align*}
    \vA{=} \diag([0.3~0.15~0.12]), ~ \vB {=} \begin{bmatrix}
           \vI_2 \\
            0.5 ~ 0.4
        \end{bmatrix}, ~ \vC {=} \begin{bmatrix}
           \vI_2 \\
            0 ~ 0.3
        \end{bmatrix}^\top
\end{align*}
\fi
\begin{equation*}
    \begin{split}
        \vA \! &= \!\!
        \begin{bmatrix}
        0.3 \!&\! 0    \!&\! 0 \\
        0   \!&\! 0.15 \!&\! 0 \\
        0   \!&\! 0    \!&\! 0.12
        \end{bmatrix} \!\!,
        \;
        \vB \!= \!\!
        \begin{bmatrix}
            1   \!&\! 0 \\
            0   \!&\! 1 \\
            0.5 \!&\! 0.4
        \end{bmatrix} \!\!,
        \;
        \vC \!=\!\!
        \begin{bmatrix}
            1 \!&\! 0 \!&\! 0 \\
            0 \!&\! 1 \!&\! 0.3
        \end{bmatrix} \!\!.
    \end{split}
\end{equation*}
We set the noise processes to be Gaussian with variances $\vSigma_w = (0.01)^2 \vI_3$ and $\sigma_z=0.01$.  
To select the exploration and truncation lengths, we perform a grid search to determine constants $c_1,c_2$ in $H=c_1T^{2/3}$ and $L=c_2\log T$, calibrated at $T=1500$. These constants were then fixed and applied across all regret experiments.

We ran the ETC algorithm under this setup, repeating each experiment $20$ times with different noise seeds.  
Regret was computed using the two benchmarks described in \textsection\ref{subsec:SDP-GW} and \textsection\ref{subsec:sign-iter}.  
As shown in Figure~\ref{fig:regret-benchmark}, \texttt{SDP+GW} consistently attains higher benchmark values than \texttt{SignIter}, indicating that the latter rarely approaches the true optimum.  
We therefore adopt the \texttt{SDP+GW} benchmark as the reference for regret.

Figure~\ref{fig:regret} reports regrets measured against this benchmark.  
The \texttt{SDP+GW} regret grows slowly and closely follows the $\tilde{\Ocal}(T^{2/3})$ rate, consistent with our theoretical analysis. 
In contrast, the \texttt{SignIter} regret increases much more rapidly, nearly linearly in $T$, demonstrating that the heuristic commit phase is substantially suboptimal relative to the \texttt{SDP+GW} benchmark.

\subsection{Parameter Estimation} \label{subsec:param-est-plots}

In this section, we show that the parameters are estimated effectively.
We generate random instances with $n=5$, $p=3$, and Schur-stable $\vA$ (scaled from i.i.d.\ $\mathcal N(0,1/n)$ entries), with $\vB,\vC$ similarly and noise levels $\vSigma_w=(0.05)^2 \vI_n$, $\sigma_z=0.05$. We study the estimation error of Markov parameters for two spectral radii, $\rho(\vA)=0.1$ and $\rho(\vA)=0.9$, by repeating each experiment 20 times with different noise seeds.

Figures~\ref{fig:param-est-rho-p1} and \ref{fig:param-est-rho-p9} show that the relative estimation error decreases with exploration length $H$. For small $\rho(\vA)$, shorter truncation lengths $L$ yield smaller errors, while for large $\rho(\vA)$ the trend reverses, reflecting a trade-off between memory of states and number of Markov parameters. We also observe a characteristic double-descent effect~\cite{nakkiran2020optimal} as the regression problem transitions from under- to over-determined. For further discussion and derivations, see~\citet{sattar2025learning} and the appendix.

\subsection{Comparison of Practical Methods} \label{subsec:comparison-practice}

To evaluate the performance of the two approaches, we compare them against the true optimum of the regret-benchmark problem~\eqref{eqn:baseline-regret-form}. Since this optimization is NP-hard in general, we restrict to small instances ($n{=}3$, $p{=}2$, $T{=}5{,\dots,}16$) where brute force is feasible. Both \texttt{SDP+GW} and \texttt{SignIter} are randomized, we vary the number of rounding trials ($r{=}1,10,30$) and repeat each experiment 20 times with different seeds.  

\begin{figure}[ht] 
\centering
\includegraphics[width=0.75\linewidth]{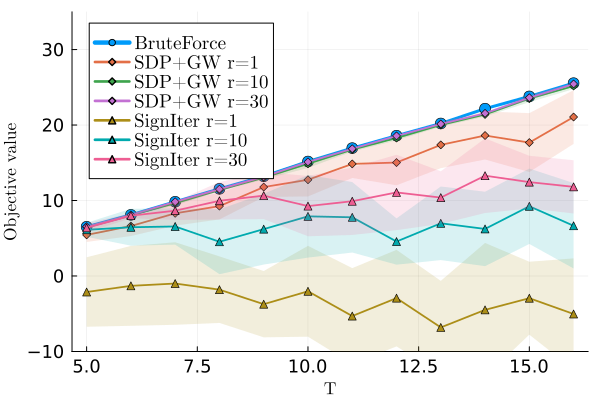}
\caption{Comparison between \texttt{SDP+GW} and \texttt{SignIter} to the true optimum from the brute force method.}
\label{fig:brute-force-comparison}
\end{figure}

Figure~\ref{fig:brute-force-comparison} shows \texttt{SignIter} becomes increasingly suboptimal as $T$ grows, even with more rounding trials. In contrast, \texttt{SDP+GW} consistently returns values very close to the brute-force optimum; even a single trial ($r{=}1$) mostly outperforms all \texttt{SignIter} cases. These results indicate that \texttt{SDP+GW} is an effective and reliable method for the commit-phase optimization.

\section{Related Work} \label{sec:related-work}

Our formulation is connected to, but fundamentally different from, several classical bandit settings. 
Unlike classic multi-armed bandits (MAB), we consider continuous (or combinatorially many) actions and temporally correlated rewards.
Many settings which generalize MAB to nonstationary rewards have been investigated, including restless bandits~\cite{whittle1988restless},
rotting bandits~\cite{levine2017rotting,seznec2019rotting}, recharging bandits~\cite{kleinberg2018recharging}, blocking bandits~\cite{basu2019blocking}, decaying/rebounding bandits~\cite{heidari2016tight,leqi2021rebounding}, delay-dependent rewards \cite{cella2020stochastic}, and bandits with underlying state~\cite{khosravi2023bandits}.
Most of these works consider finitely many discrete actions which affect rewards in a simple and uncorrelated manner, e.g. repeatedly playing the same action results in decayed or improved reward.

The linear bandit setting~\cite{abbasi2011improved} naturally models correlations between actions, but classically considers a static parameter, equivalent in our setting to a fixed but unknown state.
The nonstationary linear bandit setting~\cite{russac2019weighted} allows for drift, 
but requires bounded variation to provide meaningful regret guarantees.
Recently, \citet{trella2024non} consider a nonstationary linear bandit with auto-regressive structure,
but do not model any impact of past actions on the rewards.
Closely related to our work in motivation, \citet{clerici2024linear} study linear bandits with memory.
Our setting differs in two regards: the state depends on all past actions, not just finitely many, and the actions affect the state linearly, leading to a quadratic dependence in the reward, rather than through sublinear powers of a covariance-like ``memory matrix''.

The structure of our reward function bears resemblance to both linear contextual bandits~\cite{lattimore2020bandit} and bilinear bandits \cite{jun2019bilinear}. 
Unlike these settings, our latent state is unobserved. 
Furthermore, unlike the contexutal setting, the state is affected by past actions; unlike for bilinear bandits, the state is influenced through to dynamics and cannot be chosen directly.

Our setting is also related to model-based reinforcement learning for linear dynamical systems~\cite{dean2018regret,simchowitz2020making,mania2019certainty,lale2020regret,lale2020logarithmic},  with two key differences.
First, classical models assume action-independent observations, whereas our rewards depend bilinearly on actions.
Second, quadratic costs in prior work yield linear optimal policies, whereas our  selection is NP-hard.

\section{Conclusion} \label{sec:conclusion}

We study a nonstationary bandit problem where rewards depend bilinearly on actions and latent states evolving under unknown linear dynamics. 
We propose an explore-then-commit algorithm that combines the system parameter estimation and an open-loop action optimization.
Our analysis shows that the algorithm achieves $\tilde{\Ocal}(T^{2/3})$ regret with high probability. 
To address the NP-hard commit-phase optimization, we employed semidefinite relaxation and a rounding scheme, demonstrating its effectiveness empirically. 

A natural direction for future work is to move beyond open-loop policies toward adaptive strategies. 
The recent work~\cite{sattar2025sub} considers a related structure in control theory, studying an optimal control problem of minimizing quadratic cost under linear dynamics with bilinear observations. 
It shows that the optimal feedback policy is nonlinear in the estimated state, suggesting that extending open-loop analysis to feedback or closed-loop designs is both challenging and an important direction for future research.

\subsubsection*{Acknowledgements}

S.D. was partly supported by NSF CCF 2312774, NSF OAC-2311521, NSF IIS-2442137,  a PCCW Affinito-Stewart Award, a gift to the LinkedIn-Cornell Bowers CIS Strategic Partnership, and an AI2050 Early Career Fellowship program at Schmidt Sciences.
M.F. was supported in part by awards NSF TRIPODS II 2023166, NSF CCF 2007036, NSF CCF 2212261, NSF CCF 2312775, and by an Amazon-UW Hub gift award. 
Y.J. was partially supported by the Knut and Alice Wallenberg Foundation Postdoctoral Scholarship Program at MIT-KAW 2022.0366.

% \newpage
\bibliography{refs}

\section*{Checklist}

\begin{enumerate}

  \item For all models and algorithms presented, check if you include:
  \begin{enumerate}
    \item A clear description of the mathematical setting, assumptions, algorithm, and/or model. [Yes]
    \item An analysis of the properties and complexity (time, space, sample size) of any algorithm. [Yes]
    \item (Optional) Anonymized source code, with specification of all dependencies, including external libraries. [Yes]
  \end{enumerate}

  \item For any theoretical claim, check if you include:
  \begin{enumerate}
    \item Statements of the full set of assumptions of all theoretical results. [Yes]
    \item Complete proofs of all theoretical results. [Yes] 
    \item Clear explanations of any assumptions. [Yes]     
  \end{enumerate}

  \item For all figures and tables that present empirical results, check if you include:
  \begin{enumerate}
    \item The code, data, and instructions needed to reproduce the main experimental results (either in the supplemental material or as a URL). [Yes] 
    \item All the training details (e.g., data splits, hyperparameters, how they were chosen). [Yes]
    \item A clear definition of the specific measure or statistics and error bars (e.g., with respect to the random seed after running experiments multiple times). [Yes]
    \item A description of the computing infrastructure used. (e.g., type of GPUs, internal cluster, or cloud provider). [Not Applicable]
  \end{enumerate}

  \item If you are using existing assets (e.g., code, data, models) or curating/releasing new assets, check if you include:
  \begin{enumerate}
    \item Citations of the creator If your work uses existing assets. [Not Applicable]
    \item The license information of the assets, if applicable. [Not Applicable]
    \item New assets either in the supplemental material or as a URL, if applicable. [Not Applicable]
    \item Information about consent from data providers/curators. [Not Applicable]
    \item Discussion of sensible content if applicable, e.g., personally identifiable information or offensive content. [Not Applicable]
  \end{enumerate}

  \item If you used crowdsourcing or conducted research with human subjects, check if you include:
  \begin{enumerate}
    \item The full text of instructions given to participants and screenshots. [Not Applicable]
    \item Descriptions of potential participant risks, with links to Institutional Review Board (IRB) approvals if applicable. [Not Applicable]
    \item The estimated hourly wage paid to participants and the total amount spent on participant compensation. [Not Applicable]
  \end{enumerate}

\end{enumerate}

\clearpage
\appendix
\thispagestyle{empty}

\onecolumn

\onecolumn
\aistatstitle{Appendix: Explore-then-Commit for Nonstationary Linear Bandits with Latent Dynamics}

\section{Problem Formulation} \label{supp:sec:problem-formulation}
In Section~\ref{sec:problem-formulation}, we formulated the problem concretely, and we stated that it suffices to optimize the objective over the vertices of the hypercube in Section~\ref{subsec:action-set-NP-hard}.
We give a proof of Proposition~\ref{prop:max-vertex} in Section~\ref{supp:subsec:pf-of-prop2}.
Moreover, in Section~\ref{subsec:commit-phase}, we introduced a theoretical objective to the commit phase.
We derive the theoretical commit-phase objective in detail in Section~\ref{supp:subsec:derivation-commit-obj}.

\subsection{Proof of Proposition \ref{prop:max-vertex}} \label{supp:subsec:pf-of-prop2}

\setcounter{theorem}{1}
\begin{proposition} 
    If $\vA \in \R^{n\times n}$ is symmetric with nonnegative diagonal entries, a maximizer of $\vz^\top \vA \vz$ over $[-1,1]^n$ exists at a vertex $\vs \in \{-1,+1\}^n$, that is,
    \begin{align}
        \max_{\|\vz\|_\infty\le1} \vz^\top\vA\vz \;= \max_{\vs \in \{-1,+1\}^n} \vs^\top\vA\vs. \tag{9}
    \end{align}
\end{proposition}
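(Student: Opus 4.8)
The plan is to exploit the fact that $\vz^\top \vA \vz$, viewed as a function of a single coordinate with the others held fixed, is a convex (upward-opening) quadratic, so its maximum over $[-1,1]$ is attained at an endpoint $\pm 1$. Concretely, writing $\vz^\top \vA \vz = \sum_{j,k} A_{jk} z_j z_k$ and isolating the $i$-th coordinate using symmetry $A_{ij}=A_{ji}$, the dependence on $z_i$ is $A_{ii} z_i^2 + 2 z_i \sum_{j\ne i} A_{ij} z_j + \text{const}$, whose leading coefficient $A_{ii}$ is nonnegative by hypothesis. Hence this one-dimensional restriction is convex in $z_i$, and a convex function on a closed interval attains its maximum at a boundary point. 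Since $[-1,1]^n$ is compact and $\vz \mapsto \vz^\top\vA\vz$ is continuous, a maximizer exists; and because $\{-1,+1\}^n \subseteq [-1,1]^n$, the inequality $\max_{\vs}\vs^\top\vA\vs \le \max_{\|\vz\|_\infty\le1}\vz^\top\vA\vz$ is immediate, so it suffices to exhibit a vertex attaining the hypercube maximum.

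I would carry this out by a coordinate-by-coordinate rounding argument. Start from any global maximizer $\vz^\star$ over the cube, and process the coordinates $i=1,\dots,n$ in turn while maintaining a current global maximizer. At step $i$, freeze all coordinates but the $i$-th and consider the restricted function $g(t)$ obtained by setting the $i$-th coordinate to $t\in[-1,1]$. By the convexity observed above, $g$ attains its maximum on $[-1,1]$ at some $t\in\{-1,+1\}$; since the current point is a global maximizer, its $i$-th coordinate already achieves $\max_{[-1,1]} g$, so replacing that coordinate by whichever endpoint maximizes $g$ leaves the objective value unchanged. The resulting point is therefore still a global maximizer, and its $i$-th coordinate now lies in $\{-1,+1\}$. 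Iterating over all $n$ coordinates yields a point $\vs\in\{-1,+1\}^n$ with $\vs^\top\vA\vs = \max_{\|\vz\|_\infty\le1}\vz^\top\vA\vz$, which establishes the reverse inequality and hence the claimed equality.

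The step that requires the most care is the bookkeeping of the induction: I must verify that rounding coordinate $i$ neither decreases the objective nor disturbs the coordinates $1,\dots,i-1$ already pushed to $\pm 1$. The first point holds because the frozen-coordinate restriction is convex precisely because $A_{ii}\ge 0$ (this is exactly where the nonnegative-diagonal hypothesis enters), so an endpoint value is at least the current value; combined with the current point being a global maximizer, the value is in fact preserved rather than merely non-decreasing. The second point holds trivially, since each step modifies only a single coordinate. No quantitative estimates are needed, and the argument terminates after a finite induction of $n$ steps.
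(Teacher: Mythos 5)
Your proof is correct and follows essentially the same argument as the paper: both exploit that the restriction of $\vz^\top\vA\vz$ to a single coordinate is a convex quadratic (leading coefficient $A_{ii}\ge 0$), hence endpoint-maximized, and round coordinates to $\pm 1$ one at a time. The only cosmetic difference is that you start from a global maximizer and show each rounding step preserves the value, whereas the paper rounds an arbitrary feasible point and shows the value never decreases; both yield the same conclusion.
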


\begin{proof}
    Let $a_{ij}$ be the $(i,j)$-th entry of $\vA$. 
    Fix $\vx=(x_1,\dots,x_n) \in \R^n$ with $\|\vx\|_\infty \le 1$, and choose an arbitrary index $k$.
    Define $\phi_k(x_k) = \vx^\top \vA \vx$ as a function of $x_k \in [-1,1]$, that is,
    \begin{align*}
        \phi_k(x_k) = \vx^\top\vA\vx = a_{kk} x_k^2 + 2b_kx_k + C(x_{-k})
    \end{align*}
    where we denote 
    $b_k= \sum_{i \neq k} a_{ik}x_i$ and $C(x_{-k}) = 2 \sum_{\substack{i<j \\ i,j \neq k}}a_{ij}x_ix_j + \sum_{i \neq k} a_{ii} x_i^2$.
    Since $a_{kk} \ge 0$, $\phi_k(x_k)$ is convex attaining its maximum at an endpoint.
    Thus, replacing $x_k$ by whichever of $\pm1$ gives the larger value (monotonically) increases $\vx^\top\vA\vx$.
    Repeat this coordinatewise for all indices.
    After $n$ steps, we arrive at a vertex $\vs \in \{-1,+1\}^n$ with $\vs^\top\vA\vs \ge \vx^\top \vA\vx$.
    Since $\vx$ was arbitrarily feasible, we have
    \begin{align*}
        \max_{\|\vx\|_{\infty}\le 1} \vx^\top \vA\vx = \max_{\vs \in \{-1,+1\}^n} \vs^\top \vA \vs
    \end{align*}
    which completes the proof.
\end{proof}

\subsection{Derivation of Theoretical Commit-Phase Objective} \label{supp:subsec:derivation-commit-obj}

In the commit phase of the explore-then-commit algorithm, the goal is to maximize the expected cumulative reward from $t=H+1$ to $t=T$.
In Section~\ref{subsec:commit-phase}, we derived that the theoretical objective of the commit phase is
\begin{align*}
    \E\left[ \sum_{t=H+1}^T r_t \right] = \frac{1}{2}\vu_{H+1:T}\vS_{T-H-1} \vu_{H+1:T}
\end{align*}
subject to $\vu_{H+1:T} \in \{-1,+1\}^{p(T-H)}$. 
In this section, we formally derive that the objective is valid.

After the exploration phase, the commit phase starts with the state $\vx_{H+1}$, which is different from the initial state $\vx_0 = \vzero_n$.
Hence, the conditional expectation of cumulative reward will be  
\begin{align*}
    \E\left[ \left. \sum_{t=H+1}^T r_t \, \right| \, \vx_{H+1} \right] &= \sum_{t=H+1}^T \vu_t^\top \vC\vA^{t-H-1} \vx_{H+1} \\
    &\qquad + 
    \begin{bmatrix}
        \vu_{T} \\
        \vu_{T-1} \\
        \vdots \\
        \vdots \\
        \vu_{H+2} \\
        \vu_{H+1}
    \end{bmatrix}^\top 
    \begin{bmatrix}
        \vzero_{p\times p} & \vC\vB & \vC\vA\vB &  \cdots & \cdots & \vC\vA^{T-H-2} \vB \\
        \vzero_{p\times p} & \vzero_{p\times p} & \vC\vB & \cdots & \cdots & \vC\vA^{T-H-3} \vB \\
        \vdots & \ddots & \ddots & \ddots &  & \vdots \\
        \vdots &  & \ddots & \vzero_{p\times p} & \vC\vB & \vC\vA\vB \\
        \vdots &  &  & \ddots & \vzero_{p\times p} & \vC\vB  \\
        \vzero_{p\times p} & \vzero_{p\times p} & \vzero_{p\times p} & \cdots & \vzero_{p\times p} & \vzero_{p\times p}
    \end{bmatrix}
    \begin{bmatrix}
        \vu_{T} \\
        \vu_{T-1} \\
        \vdots \\
        \vdots \\
        \vu_{H+2} \\
        \vu_{H+1}
    \end{bmatrix} \\
    &= \sum_{t=H+1}^T \vu_t^\top \vC\vA^{t-H-1} \vx_{H+1} + \frac{1}{2}\vu_{H+1:T}^\top \vS_{T-H-1}\vu_{H+1:T}
\end{align*}
On the other hand, the state $\vx_{H+1}$ is $\vzero_n$ in expectation because we assumed that $\vx_0=\vzero_n$, $\E[\vu_t]=\vzero_p$, and $\E[\vw_t]=\vzero_n$ for $t=0,\dots,H$.
Therefore, applying the law of total expectation gives the desired objective~\eqref{eqn:commit-theoretical-objective}.

\section{Parameter Estimation} \label{sec:supp:parameter-estimation}

In this section, we discuss the details in parameter estimation. In Section~\ref{supp:subsec:persistence-of-excitation}, we explain what persistence of excitation is and how it is related to parameter estimation. Across the following sections, we derive the estimation error of Markov parameters.

\subsection{Persistence of Excitation} \label{supp:subsec:persistence-of-excitation}

\begin{definition}[Persistence of Excitation \cite{kumar2015stochastic}]
    The action sequence $\{\vu_t\}$ is said to be \emph{persistently exciting} if there is a positive definite matrix $\vU$ such that for all large $n$,
\begin{align*}
    \frac{1}{n}\sum_{k=0}^{n-1} \vphi_k\vphi_k^\top \succeq \vU
\end{align*}
where $\vphi_k$ is a regressor vector constructed from recent inputs $\vu_t$.
\end{definition}

Intuitively, persistent excitation in control theory means that action signal provides sufficient richness of the system's dynamics so that its parameters can be uniquely identified from input-output data.
More importantly, in quantitative aspect, the minimum eigenvalue of $\sum_{k=0}^{n-1} \vphi_k\vphi_k^\top$ grows linearly with $n$, continuously ``exciting'' the system over time at a steady rate. This plays a crucial role in finite sample statistical guarantees, which we will talk about later.

Now we recall the least squares problem that we discussed in Section~\ref{sec:sys-id}:
\begin{align*}
    \vGhat = \argmin_{\vG \in \R^{p \times pL}}  \sum_{t=L+1}^H 
    \left(r_t {-} \vvec(\vG)^\top \vutil_t \right)^2
\end{align*}
where $\vutil_t = \vubar_{t-1} \otimes \vu_t$ and $\vubar_{t-1} = \left[\vu_{t-1}^\top~~\vu_{t-2}^\top~~\cdots~~\vu_{t-L}^\top\right]^\top$.
The solution can be found by solving the normal equation which is
\begin{align*}
    \vvec(\vGhat) = \left(\sum_{t=L+1}^H \vutil_t\vutil_t^\top \right)^{\dagger} \sum_{t=L+1}^H \vutil_t r_t.
\end{align*}
If our action sequence is persistently exciting, then the matrix $\vUtil = \sum_{t=L+1}^H \vutil_t\vutil_t^\top$ is invertible (since positive definite), implying that the parameter $\vG$ is uniquely determined.
Now we show that random Rademacher action sequence is indeed persistently exciting using a result in the recent work~\cite{sattar2025learning}. We state the result for completeness, and the notations are changed to be consistent to our paper.

\begin{mdframed}
    \begin{assumption}[Action Properties \cite{sattar2025learning}] \label{supp:assumption-pe}
        $\vu_0,\dots,\vu_H$ are i.i.d. stochastic isotropic vectors with zero mean and satisfy at least one of the following:
        \begin{enumerate}[leftmargin=*,noitemsep,topsep=0pt,label=(\alph*)]
            \item There exists a scalar $\beta>0$ such that $\twonorm{\vu_t}\leq \beta$ for all $t=0,\dots,H$.
            \item There exists a scalar $m_4>0$ such that $\sup_{\twonorm{\vv}=1} \E[(\vv^\top \vutil_t)^4] \leq m_4$ for all $t=0,\dots,H$ where $\vutil_t$ is as defined in Section~\ref{sec:sys-id}.
        \end{enumerate}
    \end{assumption}
    
    \begin{theorem}[Persistence of Excitation \cite{sattar2025learning}] \label{supp:thm:pe}
        Consider an i.i.d. sequence of random actions $\vu_0$, $\dots$, $\vu_H$ with number of samples satisfying
        \begin{align*}
            H-L \gtrsim \gamma_1(L+1) \left( \log \left( \frac{2(L+1)}{\delta}\right) + \gamma_2p^2L \right).
        \end{align*}
        Suppose either of the following two conditions hold: {\bf (1)} $\{\vu_t\}_{t=0}^H$ satisfies Assumption~\ref{supp:assumption-pe}(a), $\gamma_1=\beta^4L$, and $\gamma_2 = 1$; {\bf (2)} $\{\vu_t\}_{t=0}^H$ satisfies Assumption~\ref{supp:assumption-pe}(b), $\gamma_1 = m_4$, and $\gamma_2=\log(1+16p^2L/\delta)$. Then, with probability at least $1-\delta$, we have
        \begin{align*}
            \lambda_{\min}(\vUtil) \ge (H-L)/4.
        \end{align*}
    \end{theorem}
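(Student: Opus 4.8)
The statement is a lower-tail concentration bound for $\vUtil=\sum_{t=L+1}^H \vutil_t\vutil_t^\top$ around its mean, so I would first identify that mean. Since $\vu_t$ is independent of $\vubar_{t-1}$ (the latter is assembled from $\vu_{t-1},\dots,\vu_{t-L}$) and $\vutil_t=\vubar_{t-1}\otimes\vu_t$, the mixed-product property of the Kronecker product gives $\E[\vutil_t\vutil_t^\top]=\E[\vubar_{t-1}\vubar_{t-1}^\top]\otimes\E[\vu_t\vu_t^\top]$. Isotropy yields $\E[\vu_t\vu_t^\top]=\vI_p$, and the i.i.d.\ zero-mean property makes the off-diagonal blocks of $\E[\vubar_{t-1}\vubar_{t-1}^\top]$ vanish, so $\E[\vubar_{t-1}\vubar_{t-1}^\top]=\vI_{pL}$. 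Hence $\E[\vutil_t\vutil_t^\top]=\vI_{p^2L}$ and $\E[\vUtil]=(H-L)\,\vI_{p^2L}$. It then suffices to show that $\lambda_{\min}(\vUtil)$ does not drop below a quarter of its mean.

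\textbf{Decoupling the temporal dependence.} The summands are not independent: two feature windows overlap whenever their time indices differ by at most $L$. The key device is to partition the index set $\{L+1,\dots,H\}$ into the $L+1$ residue classes $\mathcal{I}_j=\{t:\,t\equiv j\ (\mathrm{mod}\ L+1)\}$. Within one class consecutive indices are spaced by at least $L+1$, so their windows are disjoint and the matrices $\{\vutil_t\vutil_t^\top:\,t\in\mathcal{I}_j\}$ are mutually independent. By Weyl's inequality, $\lambda_{\min}(\vUtil)\ge\sum_{j=0}^{L}\lambda_{\min}\!\big(\sum_{t\in\mathcal{I}_j}\vutil_t\vutil_t^\top\big)$, so it is enough to show each block sum has minimum eigenvalue at least $|\mathcal{I}_j|/4$. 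A union bound over the $L+1$ classes then produces the $(L+1)$ prefactor and the $\log(2(L+1)/\delta)$ term in the sample requirement, and summing $|\mathcal{I}_j|/4$ over $j$ recovers exactly $(H-L)/4$.

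\textbf{Per-block lower tail.} Each block is a sum of independent PSD matrices with mean $|\mathcal{I}_j|\,\vI_{p^2L}$. In case (a), boundedness gives $\|\vutil_t\|_2^2=\|\vubar_{t-1}\|_2^2\,\|\vu_t\|_2^2\le\beta^4L$, so each summand has spectral norm at most $\beta^4L=\gamma_1$; the matrix Chernoff lower-tail inequality with deviation parameter $3/4$ then gives the desired $|\mathcal{I}_j|/4$ bound once $|\mathcal{I}_j|\gtrsim\gamma_1\big(\log(p^2L)+\log((L+1)/\delta)\big)$, a condition implied by the stated sample requirement (there $\log(p^2L)$ is dominated by the $\gamma_2p^2L$ term). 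In case (b) the summands are unbounded, so I would instead combine an $\epsilon$-net over the sphere $\mathbb{S}^{p^2L-1}$ with a scalar lower-tail (Paley--Zygmund / small-ball) estimate for $\sum_{t\in\mathcal{I}_j}(\vv^\top\vutil_t)^2$ driven by the fourth-moment bound $m_4=\gamma_1$; a net of cardinality $e^{\Ocal(p^2L)}$ then supplies the $\gamma_2\,p^2L$ dimension term together with the $\gamma_2=\log(1+16p^2L/\delta)$ factor.

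\textbf{Main obstacle.} The genuine difficulty is the temporal correlation among the $\vutil_t$, which the residue-class blocking removes at the cost of the $(L+1)$ factor. The remaining effort is bookkeeping of constants in the per-block tail---calibrating the deviation so the blocks sum to precisely $(H-L)/4$---and, in the fourth-moment case, upgrading a single-direction small-ball estimate to a uniform-over-directions bound via the net rather than relying on a bounded matrix Chernoff argument.
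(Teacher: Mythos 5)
First, a point of comparison: the paper does not actually prove Theorem~\ref{supp:thm:pe} --- it is imported verbatim from \citet{sattar2025learning}, and the appendix only sketches how to modify that work's proof (an event controlling $\|\vUtil\|$ plus a covering argument over the sphere) to exploit almost-sure boundedness. Measured against that structure, your architecture is sound and close in spirit: the mean computation $\E[\vutil_t\vutil_t^\top]=\vI_{p^2L}$ is correct, the residue-class blocking mod $L+1$ does make the $\vutil_t$ within a class mutually independent (their action windows are disjoint), super-additivity of $\lambda_{\min}$ legitimately reduces the problem to per-block bounds and recovers exactly $(H-L)/4$, and the union bound over classes accounts for the $(L+1)$ and $\log(2(L+1)/\delta)$ factors. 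Case (a) via matrix Chernoff with $\lambda_{\max}(\vutil_t\vutil_t^\top)\le\beta^4L$ goes through essentially as you wrote it.

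The genuine gap is the scalar tool in case (b): Paley--Zygmund cannot produce the constant $1/4$. PZ gives $\Pr\big[(\vv^\top\vutil_t)^2\ge\theta\big]\ge(1-\theta)^2/m_4$, so the small-ball route lower-bounds a block of size $N_j$ by at most $\max_{\theta\in(0,1)}\theta(1-\theta)^2N_j/m_4=(4/27)\,N_j/m_4$. Since $m_4\ge\big(\E[(\vv^\top\vutil_t)^2]\big)^2=1$ always (and $m_4=9$ for the Rademacher covariates the paper actually uses), this is strictly below the required $N_j/4$: as written, your argument proves $\lambda_{\min}(\vUtil)\gtrsim(H-L)/m_4$, not $(H-L)/4$. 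The fix that keeps the claimed $\gamma_1=m_4$ scaling is truncation rather than PZ: with $Z_t:=(\vv^\top\vutil_t)^2$ and $M=4m_4$, one has $\E[\min(Z_t,M)]\ge 1-\E[Z_t^2]/M\ge 3/4$, and Bernstein's inequality (variance $\le m_4$, range $\le 4m_4$) gives per-direction failure probability $e^{-cN_j/m_4}$, whence $\sum_t Z_t\ge\sum_t\min(Z_t,M)\ge N_j/4$ with high probability. A secondary but real issue is your net bookkeeping: a net of cardinality $e^{\Ocal(p^2L)}$ yields only a $p^2L$ term, not $\gamma_2p^2L$. Transferring the bound from net points to all directions requires an operator-norm bound on the block, which under only fourth-moment assumptions comes from Markov's inequality, $\|\vUtil\|\lesssim p^2L(H-L)/\delta$ with probability $1-\Ocal(\delta)$; this forces net resolution $\epsilon\sim\delta/(p^2L)$, and it is the resulting log-cardinality $p^2L\log(1+Cp^2L/\delta)$ that produces $\gamma_2=\log(1+16p^2L/\delta)$ --- exactly the mechanism the paper points to when it changes the event $\mathcal{E}$ and the value of $\epsilon$ in the bounded case. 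Both pieces are fixable, but as proposed, case (b) neither reaches the stated constant nor justifies the uniform-over-directions step.
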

\end{mdframed}

Theorem~\ref{supp:thm:pe} shows that action distributions that satisfy at least one of the properties in Assumption~\ref{supp:assumption-pe} are persistently exciting with high probability. 
Moreover, the theorem implies that action distribution satisfying Assumption~\ref{supp:assumption-pe}(b) gives a better sample complexity condition.
Observe that the Rademacher distribution $\unif\{-1,+1\}^p$ is centered and isotropic.
It is clear that $\vu_t \sim \unif\{-1,+1\}p$ is bounded with $\beta=\sqrt{p}$.
Moreover, in Section~\ref{supp:subsec:bounded-fourth-moment}, we show that the covariates $\vutil_t$ constructed from Rademacher random vectors have bounded fourth moments $m_4=9$.
Hence, Theorem~\ref{supp:thm:pe} implies that the number of samples required for persistence of excitation is
\begin{align*}
    H-L \gtrsim (L+1) \left( \log \left( \frac{2(L+1)}{\delta}\right) + p^2L\log\left(1+ \frac{16p^2L}{\delta} \right) \right)
\end{align*}
given $\delta \in (0,1)$.

On the other hand, we improve the result by exploiting that our covariates are bounded almost surely. The improvement comes from modifying the proof of Proposition 6.5 in \citet{sattar2024learning} as follows: (1) since  $\vutil_t$ constructed from Rademacher random vectors are bounded almost surely, we have $\|\vUtil\| \leq (H-L)p^2L$ almost surely. Hence, the event $\mathcal{E}$ in the proof is modified as $\mathcal{E} := \{\|\vUtil\| \leq 2p^2L(H-L)\}$; (2) because of the modified event $\mathcal{E}$, we also change the value of $\epsilon$ in the covering argument to $\epsilon = 1/(8p^2L)$. The rest of the proof follows similarly.

Following the proof with the argument above, we can obtain an improved sample complexity for persistent excitation as 
\begin{align} \label{supp:eqn:sample-complexity}
    H-L \ge 32(L+1)m_4 \left( \log \left( \frac{2(L+1)}{\delta}\right) + p^2L \log (1+16p^2L)\right) =: N(p,L,\delta)
\end{align}
where $m_4=9$, an upperbound of the fourth moment of $\vutil_t$.
For future reference, define $N(p,L,\delta)$ to be the right-hand side of \eqref{supp:eqn:sample-complexity}.

\setcounter{theorem}{3}
\subsection{Proof of Theorem 4}

\begin{theorem}
    Under Assumption~\ref{assump:sysID}, let $\{(\vu_t,r_t)\}_{t=0}^H$ be a single trajectory of action-reward pairs collected from the system~\eqref{eqn:sys}.
    For given $\delta \in (0,1)$, suppose 
    \begin{align*}
        H-L \gtrsim (L+1) \left( p^2L \log(p^2L) + \log \left(\frac{L+1}{\delta} \right)\right).
    \end{align*}
    With probability at least $1-\delta$, we have $  $
    \begin{align*}
         \|\vGhat - \vG\|_F \lesssim
         & \frac{\phi(\vA,\rho) \rho^L \|\vC\|}{1-\rho} \sqrt{\frac{p\lambda_{\max}(\vB\vB^\top {+} \vSigma_w)\log(1/\delta)}{H{-}L}}
         + (\sigma_z {+} \sqrt{p}~\Lambda_w^{(L)}) \sqrt{\frac{p^2L\log (p^2L(H{-}L)) {+} \log(L/\delta)}{H{-}L}}
    \end{align*}
    for $\Lambda_w^{(L)} {:=} \sum_{k=0}^{L-1}\sqrt{\lambda_{\max}(\vC\vA^k \vSigma_w (\vA^k)^\top \vC^\top)}$.
\end{theorem}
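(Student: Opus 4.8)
The plan is to begin from the closed-form error in \eqref{eqn:estimation_error_form}, namely $\vvec(\vGhat)-\vvec(\vG)=\vUtil^{-1}\sum_{t=L+1}^H\vutil_t\zeta_t$, and to bound $\fronorm{\vGhat-\vG}=\twonorm{\vvec(\vGhat)-\vvec(\vG)}$ by controlling the curvature $\lambda_{\min}(\vUtil)$ from below and the noise aggregate $\sum_t\vutil_t\zeta_t$ from above. For the curvature I would invoke the persistence-of-excitation guarantee (Theorem~\ref{supp:thm:pe}): under the stated sample-complexity hypothesis $H-L\gtrsim N(p,L,\delta)$, it yields $\lambda_{\min}(\vUtil)\ge(H-L)/4$ with probability $1-\delta$, so that $\opnorm{\vUtil^{-1}}\lesssim 1/(H-L)$. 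The structural heart of the argument is to split the effective noise according to its three physical sources,
\[
\zeta_t = \underbrace{\vu_t^\top\vC\vA^L\vx_{t-L}}_{\text{truncation}} + \underbrace{\vu_t^\top\vF\vwbar_{t-1}}_{\text{process noise}} + \underbrace{z_t}_{\text{measurement noise}},
\]
to bound each of the three vectors $\vUtil^{-1}\sum_t\vutil_t(\cdot)$ separately, and to recombine by the triangle inequality. The two genuine-noise pieces will produce the second term of the claimed bound, and the truncation piece the first.

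For the measurement and process noise I would use a self-normalized martingale bound. The measurement term is cleanest: conditioning on all actions and all process noise, $z_t$ is a fresh mean-zero $\sigma_z$-sub-Gaussian increment while $\vutil_t$ is predictable, so an Abbasi--Yadkori-type self-normalization gives $\|\sum_t\vutil_t z_t\|_{\vUtil^{-1}}^2\lesssim\sigma_z^2(\log\det(\cdots)+\log(L/\delta))$, where the log-determinant contributes the parameter dimension $p^2L$ up to logarithms. Converting via $\twonorm{\vUtil^{-1}\vv}\le\lambda_{\min}(\vUtil)^{-1/2}\|\vv\|_{\vUtil^{-1}}$ then yields the $\sigma_z\sqrt{(p^2L\log(\cdots)+\log(L/\delta))/(H-L)}$ contribution. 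The process term $\vu_t^\top\vF\vwbar_{t-1}=\sum_{k=0}^{L-1}\vu_t^\top\vC\vA^k\vw_{t-1-k}$ is handled the same way after re-indexing the double sum by the noise time, so that each $\vw_s$ appears once with a coefficient that is a function of the actions only, hence predictable; tracking the per-increment variance proxy $\vC\vA^k\vSigma_w(\vA^k)^\top\vC^\top$ across the $L$ lags together with the $p$-dimensional $\vu_t$ reproduces exactly the $\sqrt{p}\,\Lambda_w^{(L)}$ scale. Freedman's inequality can substitute for the self-normalized bound on the relevant scalar martingales and is what delivers the $\log(1/\delta)$ (rather than $1/\delta$) dependence that sharpens \citet{sattar2025learning}.

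The truncation term is the crux, and I expect it to be the main obstacle. Unlike the other two pieces, $\vu_t^\top\vC\vA^L\vx_{t-L}$ is neither a martingale difference nor mean-zero conditionally on the actions, because $\vx_{t-L}=\sum_{i<t-L}\vA^{t-L-1-i}(\vB\vu_i+\vw_i)$ carries a deterministic, action-driven component correlated with the regressors $\vutil_t$; handled naively this threatens an $O(1)$ bias rather than a vanishing one. The key is to exploit two facts at once: the geometric decay $\opnorm{\vC\vA^L}\le\norm{\vC}\phi(\vA,\rho)\rho^L$, and the fact that under isotropic Rademacher actions the per-step driving term $\vB\vu_i+\vw_i$ has second moment governed by $\vB\vB^\top+\vSigma_w$, so $\vx_{t-L}$ has covariance bounded in operator norm by $\lambda_{\max}(\vB\vB^\top+\vSigma_w)\sum_{j\ge0}\opnorm{\vA^j}^2\lesssim\lambda_{\max}(\vB\vB^\top+\vSigma_w)\,\phi(\vA,\rho)^2/(1-\rho^2)$. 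Pushing these through a self-normalized (equivalently, a second-moment-plus-Hanson--Wright) estimate of $\vUtil^{-1}\sum_t\vutil_t\vu_t^\top\vC\vA^L\vx_{t-L}$, the factor $\rho^L$ survives out front while aggregation over the $H-L$ rounds supplies the $1/\sqrt{H-L}$ rate and the $\sqrt{\log(1/\delta)}$ tail, giving the first term $\tfrac{\phi(\vA,\rho)\rho^L\norm{\vC}}{1-\rho}\sqrt{p\,\lambda_{\max}(\vB\vB^\top+\vSigma_w)\log(1/\delta)/(H-L)}$.

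Finally, I would combine the three bounds by the triangle inequality on the event where persistence of excitation holds, absorb the $\lambda_{\min}(\vUtil)^{-1/2}$ factors using $\lambda_{\min}(\vUtil)\gtrsim H-L$, and union-bound the $O(1)$ many failure events to obtain the overall $1-\delta$ guarantee. The stated sample-complexity condition is exactly what the persistence-of-excitation step requires, so no additional constraint on $H$ is introduced.
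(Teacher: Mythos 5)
Your overall architecture coincides with the paper's: start from the least-squares error identity \eqref{eqn:estimation_error_form}, lower-bound $\lambda_{\min}(\vUtil)$ via the (improved) persistence-of-excitation result under exactly the stated sample complexity, split the effective noise $\zeta_t$ into truncation, process-noise, and measurement-noise pieces, bound the latter two with self-normalized martingale arguments, and union-bound the failure events. Your measurement-noise argument is the paper's verbatim (Abbasi--Yadkori self-normalization with $\vutil_t$ predictable given $\sigma(\vu_{0:H},z_{0:t-1})$, log-determinant giving $p^2L$ up to logs, conversion through $\lambda_{\min}(\vUtil)^{-1/2}$), and your process-noise treatment is an acceptable variant: the paper decomposes by lag, $\vu_t^\top\vF\vwbar_{t-1}=\sum_{k=0}^{L-1}\vu_t^\top\vC\vA^k\vw_{t-k-1}$, applies the self-normalized bound once per lag with filtration $\Fcal_t^{(k)}=\sigma(\vu_{0:H},\vw_{0:t-k-1})$, and union-bounds over the $L$ lags (which is precisely where the $\log(L/\delta)$ and the sum $\Lambda_w^{(L)}$ arise), whereas you propose re-indexing by noise time; either bookkeeping works.

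The gap is in the truncation term, which you yourself identify as the crux. Your proposed mechanism --- a covariance bound on $\vx_{t-L}$ pushed through ``a self-normalized (equivalently, a second-moment-plus-Hanson--Wright) estimate'' --- does not go through as stated. The scalar $\vtheta^\top\vutil_t\cdot\vu_t^\top\vC\vA^L\vx_{t-L}$ is a degree-four polynomial chaos in the underlying Rademacher and noise variables ($\vutil_t$ is already quadratic in the actions, and $\vu_t^\top\vC\vA^L\vx_{t-L}$ is bilinear in $\vu_t$ and the past actions/noises), so Hanson--Wright, which addresses quadratic forms, does not apply; and, as you yourself note, there is no martingale-difference structure in the observation index $t$, so a self-normalized bound has nothing to attach to. (Your worry about an $O(1)$ bias is also misplaced: $\vx_{t-L}$ is independent of $(\vubar_{t-1},\vu_t)$ and has zero mean, so the term is unconditionally centered; the difficulty is purely one of concentration with the right $\rho^L$ and $\log(1/\delta)$ factors.) The paper resolves this with a more elementary decoupling that your plan is missing: since $\vutil_t\in\{-1,+1\}^{p^2L}$, bound $\sup_{\vtheta}\vtheta^\top\vutil_t\le\sqrt{p^2L}$ and pull that factor out of the sum entirely; then expand $\vx_{t-L}$ in terms of the driving terms $\vB\vu_{i-1}+\vw_{i-1}$ and swap the order of summation, so that the remaining scalar becomes $\sum_{i}\vx_i^\top\veta_i$ indexed by driving-term time, with $\veta_i=\vB\vu_{i-1}+\vw_{i-1}$ sub-Gaussian with variance proxy $\lambda_{\max}(\vB\vB^\top+\vSigma_w)$ and $\vx_i=\bigl(\sum_{t=L+i}^{H}\vu_t^\top\vC\vA^{t-i}\bigr)^\top$ an action-only vector deterministically bounded by $\sqrt{p}\,\|\vC\|\,\phi(\vA,\rho)\rho^L/(1-\rho)$ (the decay enters because $t-i\ge L$ throughout). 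A Freedman-type inequality applied to this single scalar sum then yields exactly the first term of the theorem, including the $\rho^L$, the $\sqrt{\log(1/\delta)}$, and the $1/\sqrt{H-L}$. Without this step, or an equally concrete substitute such as higher-order chaos concentration that you would then have to state and verify, your bound on the truncation piece remains unproven.
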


We use the same notations used in Section~\ref{sec:sys-id}. From \eqref{eqn:estimation_error_form},
we analyze the error by decomposing it as follows:
\begin{align*}
    \|\vG-\vGhat\|_F = \twonorm{\vUtil^{-1} \sum_{t=L+1}^H \vutil_t \zeta_t} \leq \epsilon_1 + \epsilon_2
\end{align*}
where
\begin{align*}
    \epsilon_1 &= \twonorm{\vUtil^{-1} \sum_{t=L+1}^H \vutil_t (\vu_t^\top \vC\vA^L \vx_{t-L})} 
    \quad \text{and} \quad 
    \epsilon_2 = \twonorm{\vUtil^{-1} \sum_{t=L+1}^H \vutil_t(\vu_t^\top \vF\vwbar_{t-1}+z_t)}.
\end{align*}
The error $\epsilon_1$ and $\epsilon_2$ are about truncation bias and multiplier process, respectively.
In Section~\ref{supp:subsec:analysis-truncation-bias}, we derive an upper bound of $\epsilon_1$, and in Section~\ref{supp:subsec:analysis-multiplier-process}, we derive an upper bound of $\epsilon_2$. 
In Section~\ref{supp:subsec:final-parameter-estimation-error}, we give a final error bound for parameter estimation.

\subsection{Analysis of Truncation Bias} \label{supp:subsec:analysis-truncation-bias}

In this section, we analyze the error related to the truncation bias:
\begin{align*}
    \epsilon_1 &= \twonorm{\vUtil^{-1} \sum_{t=L+1}^H \vutil_t (\vu_t^\top \vC\vA^L \vx_{t-L})}.
\end{align*}
Throughout the analysis of $\epsilon_1$, we assume the sample complexity \eqref{supp:eqn:sample-complexity} which leads the persistence of excitation with high probability.
More specifically, given $\delta \in (0,1)$, we assume $H-L \ge N(p,L,\delta/4)$ so that 
\begin{align*}
    \lambda_{\min}(\vUtil) \ge \frac{H-L}{4}
\end{align*}
with probability $1-\frac{\delta}{4}$.
The error $\epsilon_1$ has an upperbound
\begin{align*}
    \epsilon_1 \leq \frac{1}{\lambda_{\min}(\vUtil)} \twonorm{\sum_{t=L+1}^H \vutil_t (\vu_t^\top \vC\vA^L \vx_{t-L})}
\end{align*}
by the consistency of norms.
Given $t \ge 1$, $\vx_t$ can be expressed in terms of previous actions and noises as
\begin{align*}
    \vx_{t} 
    = \sum_{i=1}^{t} \vA^{t-i} \left(\vB \vu_{i-1} + \vw_{i-1}\right)
\end{align*}
when $\vx_0=\vzero_n$.
Observing that $\vutil_t \in \{-1,+1\}^{p^2L}$, we have
\begin{align*}
	\twonorm{\sum_{t=L+1}^H \vutil_t (\vu_t^\top \vC\vA^L \vx_{t-L})}
	&= \sup_{\vtheta \in S^{p^2L-1}} \vtheta^\top \left(\sum_{t=L+1}^H \vutil_t (\vu_t^\top \vC\vA^L \vx_{t-L})\right) \\
	&=   \sum_{t=L+1}^H \sum_{i=1}^{t-L} \left(\sup_{\vtheta \in S^{p^2L-1}} \vtheta^\top\vutil_t\right) \left(\vu_t^\top \vC\vA^{h-i} (\vB\vu_{i-1} + \vw_{i-1})\right) \\
	&= \sum_{i=1}^{H-L} \sum_{t=L+i}^H \left(\sup_{\vtheta \in S^{p^2L-1}}\vtheta^\top \vutil_t \right) \left(\vu_t^\top \vC\vA^{t-i} (\vB\vu_{i-1} + \vw_{i-1})\right) \\
    &\leq \sqrt{p^2L} \left| \sum_{i=1}^{H-L} \left( \sum_{t=L+i}^H \vu_t^\top \vC\vA^{t-i} \right)(\vB\vu_{i-1}+\vw_{i-1})\right|
\end{align*}
Now we apply the following lemma which is a version of Freedman's inequality (adapted from \citet{sattar2025finite}). 
\begin{mdframed}
    \begin{lemma} \label{lem:freedman}
    	Let $(\Fcal_t)_{t\ge 0}$ be a filtration. Let $(\veta_t)_{t \ge 1}$ is a sequence of zero-mean, $\sigma^2$-sub-Gaussian random vectors taking values in $\R^d$, such that $\veta_t$ is $\Fcal_t$-measurable for all $t \ge 1$. Let $(\vx_t)_{t \ge 1}$ be a sequence of random vectors taking values in $\R^d$ such that for all $t \ge 1$, $\vx_t$ is $\Fcal_{t-1}$-measurable and $\|\vx_t\|_2 \le K$ almost surely for some $K>0$. Then for all $\delta \in (0,1)$, $T \ge 1$,
    	\begin{align*}
    		\Pr\left( \left| \sum_{t=1}^T \vx_t^\top \veta_t\right| \leq \sigma K \sqrt{2T\log(2/\delta)}\right) \ge 1-\delta.
    	\end{align*}
    \end{lemma}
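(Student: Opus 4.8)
The plan is to reduce the vector-valued sum to a scalar martingale and then apply the standard exponential-supermartingale (Chernoff) argument. First I would define the scalar increments $D_t := \vx_t^\top \veta_t$ and set $S_t := \sum_{s=1}^t D_s$, so that the quantity of interest is $S_T$. Since $\vx_t$ is $\Fcal_{t-1}$-measurable while $\veta_t$ is the fresh, conditionally zero-mean $\sigma^2$-sub-Gaussian increment, $D_t$ is a linear functional of $\veta_t$ with a \emph{predictable} coefficient vector of norm at most $K$. The key observation is that upon conditioning on $\Fcal_{t-1}$ the coefficient $\vx_t$ is frozen, so $D_t$ inherits conditional sub-Gaussianity: $\E[\exp(\lambda D_t)\mid \Fcal_{t-1}] = \E[\exp(\lambda \vx_t^\top \veta_t)\mid \Fcal_{t-1}] \le \exp(\tfrac{1}{2}\lambda^2 \sigma^2 \twonorm{\vx_t}^2) \le \exp(\tfrac{1}{2}\lambda^2 \sigma^2 K^2)$ for every $\lambda \in \R$, using the vector sub-Gaussian MGF bound in the direction $\lambda \vx_t$ together with the almost-sure bound $\twonorm{\vx_t}\le K$.

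Next I would build the exponential supermartingale. For fixed $\lambda$, define $M_t := \exp(\lambda S_t - \tfrac{1}{2}\lambda^2\sigma^2 K^2 t)$ with $M_0 = 1$. The conditional MGF bound from the previous step gives $\E[M_t \mid \Fcal_{t-1}] \le M_{t-1}$, so $(M_t)$ is a nonnegative supermartingale; in particular $\E[\exp(\lambda S_T)] \le \exp(\tfrac{1}{2}\lambda^2 \sigma^2 K^2 T)$. A Chernoff bound then yields, for any $u > 0$, $\Pr(S_T \ge u) \le \exp(-\lambda u + \tfrac{1}{2}\lambda^2 \sigma^2 K^2 T)$, and optimizing over $\lambda$ (taking $\lambda = u/(\sigma^2 K^2 T)$) gives the sub-Gaussian tail $\Pr(S_T \ge u) \le \exp(-u^2/(2\sigma^2 K^2 T))$.

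Finally I would symmetrize: applying the identical argument to $-S_T$ (equivalently, replacing $\veta_t$ by $-\veta_t$, which is again conditionally zero-mean and $\sigma^2$-sub-Gaussian) bounds $\Pr(-S_T \ge u)$ by the same quantity, and a union bound gives $\Pr(|S_T| \ge u) \le 2\exp(-u^2/(2\sigma^2 K^2 T))$. Setting the right-hand side equal to $\delta$ and solving for $u$ yields $u = \sigma K \sqrt{2T\log(2/\delta)}$, which is exactly the claimed threshold, completing the proof.

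The main obstacle I anticipate is conceptual and at the level of bookkeeping rather than a deep difficulty: making the conditional sub-Gaussianity step rigorous. One must read ``zero-mean $\sigma^2$-sub-Gaussian'' as a conditional statement given $\Fcal_{t-1}$ (that is, $\E[\veta_t\mid\Fcal_{t-1}] = \vzero_d$ together with the conditional MGF bound), since only under this reading does the supermartingale property of $M_t$ hold; under a purely marginal interpretation the increments $D_t$ need not form a martingale and the argument collapses. The remaining steps --- freezing the predictable $\vx_t$ inside the conditional expectation, controlling $\twonorm{\vx_t}\le K$, and performing the Chernoff optimization --- are routine.
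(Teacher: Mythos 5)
Your proof is correct, but a comparison with ``the paper's own proof'' is not quite possible: the paper never proves this lemma. It is stated as a version of Freedman's inequality adapted from the cited work of Sattar et al., and is used as a black box. Your argument supplies a correct, self-contained, and more elementary route: reduce to the scalar increments $D_t = \vx_t^\top \veta_t$, use predictability of $\vx_t$ and $\twonorm{\vx_t}\le K$ to get the conditional MGF bound $\E[\exp(\lambda D_t)\mid\Fcal_{t-1}]\le\exp(\tfrac12\lambda^2\sigma^2K^2)$, run the exponential supermartingale / Chernoff argument, and symmetrize. The algebra checks out and yields exactly the stated threshold $\sigma K\sqrt{2T\log(2/\delta)}$. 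Freedman's inequality proper controls the tail through the (random) predictable quadratic variation; since here the per-step variance proxy is deterministically bounded by $\sigma^2K^2$, that machinery collapses to precisely your Azuma--Hoeffding-type bound, so nothing is lost by taking the simpler route.

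Your caveat about reading ``zero-mean $\sigma^2$-sub-Gaussian'' conditionally on $\Fcal_{t-1}$ is not mere bookkeeping --- it is essential, and you are right that the statement is false under a purely marginal reading. A concrete counterexample: draw a single standard Gaussian $\veta_1$ at time $0$ and set $\veta_t\equiv\veta_1$ for all $t$ (each $\veta_t$ is then marginally zero-mean and sub-Gaussian, and $\Fcal_t$-measurable), and take $\vx_t = K\veta_1/\twonorm{\veta_1}$, which is $\Fcal_{t-1}$-measurable; then $\sum_{t=1}^T \vx_t^\top\veta_t = TK\twonorm{\veta_1}$ grows linearly in $T$, violating the $\sqrt{T}$ bound. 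It is worth noting that this subtlety also bears on how the paper \emph{applies} the lemma in its truncation-bias analysis: there $\Fcal_{i-1}\supseteq\sigma(\vu_{0:H})$ while $\veta_i = \vB\vu_{i-1}+\vw_{i-1}$, so $\E[\veta_i\mid\Fcal_{i-1}] = \vB\vu_{i-1}\neq\vzero_n$, and the paper's accompanying claim establishes only marginal sub-Gaussianity. That is a gap in the surrounding text, not in your argument; under the (necessary) conditional hypothesis, your proof is complete.
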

\end{mdframed}

First, define a filtration where $\Fcal_0 = \sigma(\vu_{0:H})$ and $\Fcal_i = \sigma(\vu_{0:H}, \vw_{0:i-1})$ for $i \ge 1$.
With the notations in Lemma~\ref{lem:freedman}, take $\vx_i = \left( \sum_{t=L+i}^H \vu_t^\top \vC\vA^{t-i} \right)^\top$ and $\veta_i = \vB\vu_{i-1} + \vw_{i-1}$. 

\begin{claim}
    $\veta_i$ is $\sigma^2$-sub-Gaussian where $\sigma^2=\lambda_{\max}(\vB\vB^\top + \vSigma_w)$.
\end{claim}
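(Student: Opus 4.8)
The plan is to prove the claim by tracking a \emph{matrix} variance proxy through the two independent sources of randomness in $\veta_i = \vB\vu_{i-1} + \vw_{i-1}$ and only at the very end collapsing it to the scalar sub-Gaussian parameter $\sigma^2 = \lambda_{\max}(\vB\vB^\top + \vSigma_w)$. Recall the convention that a centered random vector $\vxi \in \R^n$ has matrix variance proxy $\vSigma$ if $\E[\exp(\vv^\top \vxi)] \le \exp(\tfrac12 \vv^\top \vSigma \vv)$ for all $\vv \in \R^n$, and that this implies $\vxi$ is $\lambda_{\max}(\vSigma)$-sub-Gaussian in the scalar sense (for every unit $\vv$, $\vv^\top\vxi$ is $\lambda_{\max}(\vSigma)$-sub-Gaussian), since $\vv^\top \vSigma \vv \le \lambda_{\max}(\vSigma)$ when $\twonorm{\vv}=1$. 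First I would note that $\veta_i$ is zero-mean, as $\E[\vu_{i-1}] = \vzero_p$ and $\E[\vw_{i-1}] = \vzero_n$, and that $\vu_{i-1}$ and $\vw_{i-1}$ are independent because the Rademacher actions and the noise process are drawn independently.

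The key computation is the variance proxy of the Rademacher term. Since the coordinates of $\vu_{i-1} \sim \unif(\{-1,+1\}^p)$ are independent, the moment generating function factorizes, and the elementary bound $\cosh(x) \le \exp(x^2/2)$ gives, for any $\vlambda \in \R^p$, $\E[\exp(\vlambda^\top \vu_{i-1})] = \prod_{j=1}^p \cosh(\lambda_j) \le \exp(\tfrac12 \twonorm{\vlambda}^2)$, so $\vu_{i-1}$ has variance proxy $\vI_p$. Applying this with $\vlambda = \vB^\top \vv$ yields the variance proxy of the linear image: for all $\vv \in \R^n$, $\E[\exp(\vv^\top \vB \vu_{i-1})] \le \exp(\tfrac12 \twonorm{\vB^\top \vv}^2) = \exp(\tfrac12 \vv^\top \vB\vB^\top \vv)$, i.e.\ $\vB\vu_{i-1}$ has variance proxy $\vB\vB^\top$. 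By Assumption~\ref{assump:sysID}, $\vw_{i-1}$ has variance proxy $\vSigma_w$, so $\E[\exp(\vv^\top \vw_{i-1})] \le \exp(\tfrac12 \vv^\top \vSigma_w \vv)$.

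Finally I would combine the two pieces through independence: factoring the joint MGF gives $\E[\exp(\vv^\top \veta_i)] = \E[\exp(\vv^\top \vB\vu_{i-1})]\,\E[\exp(\vv^\top \vw_{i-1})] \le \exp\!\big(\tfrac12 \vv^\top(\vB\vB^\top + \vSigma_w)\vv\big)$, so $\veta_i$ has matrix variance proxy $\vB\vB^\top + \vSigma_w$. Reading off the scalar parameter via $\vv^\top(\vB\vB^\top+\vSigma_w)\vv \le \lambda_{\max}(\vB\vB^\top+\vSigma_w)$ for unit $\vv$ then establishes that $\veta_i$ is $\sigma^2$-sub-Gaussian with $\sigma^2 = \lambda_{\max}(\vB\vB^\top + \vSigma_w)$, as claimed. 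The argument is essentially bookkeeping, so the only real obstacle is conceptual rather than computational: being precise about the matrix-versus-scalar notion of variance proxy and invoking independence at the right moment so that the proxies add (rather than, say, appealing to a crude triangle-inequality bound that would lose the clean $\lambda_{\max}(\vB\vB^\top + \vSigma_w)$ form needed downstream in Lemma~\ref{lem:freedman}).
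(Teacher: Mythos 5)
Your proposal is correct and follows essentially the same route as the paper's proof: factor the moment generating function using independence of $\vu_{i-1}$ and $\vw_{i-1}$, bound the Rademacher part coordinatewise via $\cosh(x) \le \exp(x^2/2)$ with the substitution $\vb = \vB^\top \vv$ to get the $\vB\vB^\top$ proxy, invoke the assumed $\vSigma_w$ proxy for the noise, and finish with $\vv^\top(\vB\vB^\top + \vSigma_w)\vv \le \lambda_{\max}(\vB\vB^\top + \vSigma_w)$ for unit $\vv$. The only difference is presentational — you phrase intermediate steps in matrix-variance-proxy language while the paper carries an explicit scalar $\lambda$ throughout — which changes nothing mathematically.
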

\begin{mdframed}
    \begin{proof}
        Let $\vv \in \R^n$ be a unit vector, then $\E[\vv^\top \veta_i] = 0$.
        Given $\lambda \in \R$,
        \begin{align*}
            \E\left[ \exp\left(\lambda (\vv^\top \veta_i)\right)\right] 
            &= \E\left[ \exp\left(\lambda \vv^\top\vB\vu_{i-1}\right)\right] \E\left[ \exp\left(\lambda\vv^\top \vw_{i-1}\right)\right] 
        \end{align*}
        because $\vu_{i-1}$ and $\vw_{i-1}$ are independent.
        First, since $\vw_{i-1}$ is sub-Gaussian with variance proxy $\vSigma_w$, we have $\E\left[ \exp\left(\lambda\vv^\top \vw_{i-1}\right)\right] \leq \exp\left(\frac{1}{2}\lambda^2\vv^\top\vSigma_w\vv\right)$.
        Now we compute $\E\left[ \exp\left(\lambda \vv^\top\vB\vu_{i-1}\right)\right]$. 
        Let $\vb = \vB^\top \vv$, and let $b_k$ and $(\vu_{t-1})_k$ be $k$th coordinate entry of $\vb$ and $\vu_{t-1}$, respectively. 
        Then
        \begin{align*}
            \E\left[ \exp\left(\lambda \vv^\top\vB\vu_{i-1}\right)\right]
            &= \E \left[ \exp\left( \lambda \sum_{k=1}^p b_k (\vu_{i-1})_k \right) \right] \\
            &= \prod_{k=1}^p \E \left[ \exp \left( \lambda b_k (\vu_{i-1})_k \right)\right] \tag{$\because$ each coordinate of $\vu_{i-1}$ is independent} \\
            &= \prod_{k=1}^p \frac{ e^{\lambda b_k} + e^{-\lambda b_k}}{2} \\
            &\leq \prod_{k=1}^p \exp\left( \frac{\lambda^2b_k^2}{2} \right) \\
            &=\exp \left( \frac{\lambda^2}{2} \sum_{k=1}^p b_k^2\right) \\
            &= \exp\left( \frac{1}{2} \lambda^2 \vv^\top \vB\vB^\top \vv\right)
        \end{align*}
        Hence, since $\vv\in\R^n$ is a unit vector, we have
        \begin{align*}
            \E\left[ \exp\left(\lambda (\vv^\top \veta_i)\right)\right] 
            \leq \exp \left( \frac{\lambda^2}{2}\vv^\top \left( \vB\vB^\top +\vSigma_w\right) \vv\right) 
            \leq \exp\left( \frac{\lambda^2}{2}\lambda_{\max}(\vB\vB^\top + \vSigma_w)\right)
        \end{align*}
        which completes the proof.
    \end{proof}
\end{mdframed}

\begin{claim}
    Given $\rho \in (\rho(\vA),1)$,
    $\twonorm{\vx_i} \leq K$ with $K = \sqrt{p}\|\vC\| \frac{\phi(\vA,\rho)\rho^L}{1-\rho}$.
\end{claim}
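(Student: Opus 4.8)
The plan is to bound $\twonorm{\vx_i}$ by a direct triangle-inequality estimate on the defining sum, using submultiplicativity of the spectral norm together with the Gelfand-type geometric decay of $\|\vA^k\|$ and the fact that Rademacher actions have fixed Euclidean norm $\sqrt{p}$, and then summing a geometric series.

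First I would rewrite the row-vector definition $\vx_i^\top = \sum_{t=L+i}^H \vu_t^\top \vC\vA^{t-i}$ as a column vector, being careful with the transpose so that the indices line up:
\begin{align*}
    \vx_i = \sum_{t=L+i}^H (\vA^{t-i})^\top \vC^\top \vu_t.
\end{align*}
Then I would apply the triangle inequality and submultiplicativity to each summand, using $\|(\vA^{t-i})^\top\| = \|\vA^{t-i}\|$, $\|\vC^\top\| = \|\vC\|$, and $\twonorm{\vu_t} = \sqrt{p}$ (since $\vu_t \in \{-1,+1\}^p$), to obtain
\begin{align*}
    \twonorm{\vx_i} \le \sum_{t=L+i}^H \|\vA^{t-i}\|\,\|\vC\|\,\twonorm{\vu_t} = \sqrt{p}\,\|\vC\| \sum_{t=L+i}^H \|\vA^{t-i}\|.
\end{align*}

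The key analytic input is the stability bound noted just before Theorem~4: for $\rho \in (\rho(\vA),1)$ we have $\|\vA^k\| \le \phi(\vA,\rho)\rho^k$ for all $k$. Since $t$ ranges from $L+i$ to $H$, the exponent $t-i$ ranges from $L$ upward, so every term carries at least a factor $\rho^L$. Reindexing $k = t-i \ge L$ and extending the finite sum to an infinite one gives
\begin{align*}
    \sum_{t=L+i}^H \|\vA^{t-i}\| \le \phi(\vA,\rho) \sum_{k=L}^{\infty} \rho^k = \frac{\phi(\vA,\rho)\rho^L}{1-\rho},
\end{align*}
which yields exactly $\twonorm{\vx_i} \le \sqrt{p}\,\|\vC\|\, \frac{\phi(\vA,\rho)\rho^L}{1-\rho} = K$.

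I do not anticipate a genuine obstacle here, as the bound is a routine triangle-inequality-plus-geometric-series estimate. The only points requiring care are keeping the transpose bookkeeping straight so the spectral-norm factorization is valid, and verifying the index shift $t \mapsto k = t-i$ so the geometric series starts at exponent $L$ rather than $0$ --- this is precisely what produces the crucial $\rho^L$ truncation factor --- while noting that extending the finite sum to an infinite one only loosens the bound.
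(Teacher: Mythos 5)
Your proof is correct and follows essentially the same route as the paper's: triangle inequality plus submultiplicativity, $\twonorm{\vu_t}=\sqrt{p}$ for Rademacher actions, the Gelfand-type bound $\|\vA^k\|\le\phi(\vA,\rho)\rho^k$, and a geometric series starting at exponent $L$. The only difference is that you make the transpose bookkeeping and the reindexing $k=t-i$ explicit, which the paper leaves implicit.
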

\begin{mdframed}
    \begin{proof}
        We use Corollary~\ref{cor:lem:finite-gelfands} in order to bound $\twonorm{\vx_i}$ as follows:
        \begin{align*}
        	\|\vx_i\|_2 &\leq \sum_{t=L+i}^H \|\vA^{t-i}\|\|\vC\|\twonorm{\vu_h}
            \leq \sqrt{p} \|\vC\| \sum_{t=L+i}^H \phi(\vA,\rho) \rho^{t-i} 
            \leq \sqrt{p} \|\vC\| \frac{\phi(\vA,\rho)\rho^L}{1-\rho}
        \end{align*}
    \end{proof}
\end{mdframed}

According to Lemma~\ref{lem:freedman}, given $\delta \in (0,1)$, with probability at least $1-\frac{\delta}{4}$, 
\begin{align*}
    \left| \sum_{i=1}^{H-L} \left( \sum_{t=L+i}^H \vu_t^\top \vC\vA^{t-i} \right)(\vB\vu_{i-1}+\vw_{i-1})\right| &=
	\left|\sum_{i=1}^{H-L} \vx_i^\top \veta_i \right| \\
	&\leq \frac{\phi(\vA,\rho)\rho^L\|\vC\|}{1-\rho}\sqrt{2p(H-L)\lambda_{\max}(\vB\vB^\top + \vSigma_w) \log(8/\delta)} .
\end{align*}
Therefore, combining with persistence of excitation, the truncation bias term will be bounded by
\begin{align*}
	\twonorm{\vUtil^{-1} \sum_{t=L+1}^H \vutil_t ( \vu_t^\top \vC\vA^L \vx_{t-L})}
	&\leq \frac{1}{\lambda_{\min}(\vUtil)} \twonorm{ \sum_{t=L+1}^H \vutil_t ( \vu_t^\top \vC\vA^L \vx_{t-L})}\\
    &\leq \frac{\sqrt{p^2L}}{\lambda_{\min}(\vUtil)} \left| \sum_{i=1}^{H-L} \left( \sum_{t=L+i}^H \vu_t^\top \vC\vA^{t-i} \right)(\vB\vu_{i-1}+\vw_{i-1})\right| \\
	&\leq \frac{4p^{3/2}\phi(\vA,\rho)\rho^L\|\vC\|}{1-\rho} \sqrt{\frac{2L\lambda_{\max}(\vB\vB^\top + \vSigma_w) \log(8/\delta)}{H-L}}
\end{align*}
with probability at least $1-\frac{\delta}{2}$.

\subsection{Analysis of Multiplier Process} \label{supp:subsec:analysis-multiplier-process}

In this section, we analyze the error $\epsilon_2$, which is related to multiplier process.
We again first assume that $H-L \geq N(p,L,\delta/4)$ for given $\delta \in (0,1)$ to get the result of persistent excitation, that is,
\begin{align*}
    \lambda_{\min}(\vUtil) \ge \frac{H-L}{4}
\end{align*}
with probability $1-\frac{\delta}{4}$.

The term related to multiplier process has an upper bound:
\begin{align} \label{eqn:multiplier-process-bound1}
    \epsilon_2 = \twonorm{\vUtil^{-1} \sum_{t=L+1}^H \vutil_t ( \vu_t^\top \vF \bar \vw_{t-1}  + z_t) } 
    &\leq
    \frac{\twonorm{\vUtil^{-1/2} \sum_{t=L+1}^H \vutil_{t} ( \vu_t^\top \vF \bar \vw_{t-1}  + z_t)}}{\sqrt{\lambda_{\min}(\vUtil)}}\\
    &\le 
    \underbrace{\frac{1}{\sqrt{\lambda_{\min}(\vUtil)}}\left\| \vUtil^{-1/2} \sum_{t=L+1}^H z_t \vutil_{t} \right\|_2}_{\epsilon_{2,1}} \\
    &\qquad + \underbrace{\frac{1}{\sqrt{\lambda_{\min}(\vUtil)}}\left\| \vUtil^{-1/2} \sum_{t=L+1}^H \vutil_t ( \vu_t^\top \vF \bar \vw_{t-1}) \right\|_2}_{\epsilon_{2,2}} 
\end{align}
To analyze the errors $\epsilon_{2,1}$ and $\epsilon_{2,2}$, we use Theorem~\ref{thm:self-normalized bound} in Section~\ref{supp:subsec:est-misc}.

\subsubsection{Analysis of $\epsilon_{2,1}$} \label{subsubsec:eps_21}

Recall that $(z_t)_{t\ge 0}$ is i.i.d. zero-mean $\sigma^2$-sub-Gaussian.
Define $\Fcal_t = \sigma(\vu_{0:H}, z_{0:t})$.
Then $\vutil_t$ is $\Fcal_{t-1}$-measurable and $z_t$ is $\Fcal_t$-measurable.
Observing that $z_t$ is independent of $\Fcal_{t-1}$, $z_t$ is conditionally $\sigma_z^2$-sub-Gaussian, i.e.,
\begin{align*}
    \E\left[ z_t \mid \Fcal_{t-1} \right]
    = \E[z_t] = 0
	\quad \text{and} \quad 
    \E \left[ e^{\lambda z_t} \mid \Fcal_{t-1} \right]
    = \E\left[ e^{\lambda z_t}  \right] \leq \exp\left(\frac{\lambda^2 \sigma_z^2}{2}\right)
\end{align*}
Let $d = p^2L$. 
We apply the theorem with $\vV = \vI_d$.
Define 
\begin{align*}
    \vV_H = \vI_d + \sum_{t=L+1}^H \vutil_t \vutil_t^\top=\vI_d + \vUtil, \quad \text{and} \quad \vs_H = \sum_{t=L+1}^H z_t \vutil_t.
\end{align*}
Then, for any $\delta>0$, with probability at least $1-\frac{\delta}{4}$,
\begin{align} \label{eqn:multiplier-process-bound2}
    \left\| \vV_H^{-1/2}\vs_H \right\|_2
    = \left\| \left( \vI_d + \vUtil \right)^{-1/2} \sum_{t=L+1}^H z_t \vutil_t \right\|_2    
    \le \sigma_z \sqrt{2\log\left( \frac{4\det(\vV_H)^{1/2}}{\delta}\right)}.
\end{align}
According to Corollary~\ref{cor:multiplier-process-inequality} with $\lambda=1$, we have 
\begin{align*}
    \left\| \vUtil^{-1/2} \sum_{t=L+1}^H z_t \vutil_t \right\|_2 
    \leq \sqrt{1 + \frac{1}{\lambda_{\min}(\vUtil)}}
    \left\| \left( \vI_d + \vUtil \right)^{-1/2} \sum_{t=L+1}^H z_t \vutil_t \right\|_2.
\end{align*}
Combining the inequality with \eqref{eqn:multiplier-process-bound2}, we get
\begin{align} \label{eqn:multiplier-process-bound3}
    \left\| \vUtil^{-1/2} \sum_{t=L+1}^H z_t \vutil_t \right\|_2 
    \leq \sigma_z
    \sqrt{2 \left(1 + \frac{1}{\lambda_{\min}(\vUtil)}\right) \log \left(\frac{4\det(\vV_H)^{1/2}}{\delta} \right)}.
\end{align}
\begin{claim} \label{claim:lam-max-Util}
    $\lambda_{\max}(\vUtil) \leq d(H-L)$
\end{claim}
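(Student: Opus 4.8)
The plan is to bound the maximum eigenvalue by the trace, exploiting that $\vUtil$ is positive semidefinite and that each covariate is a $\pm 1$ vector of known length. The key observation is that the Rademacher design forces every entry of $\vutil_t$ to be $\pm 1$, so its squared norm is exactly $d$.

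First I would record that since $\vu_t \distas \unif(\{-1,+1\}^p)$, every coordinate of $\vu_t$ and of the concatenation $\vubar_{t-1}$ equals $\pm 1$. Because $\vutil_t = \vubar_{t-1} \otimes \vu_t$, each entry of $\vutil_t$ is a product of two such $\pm 1$ coordinates, hence $\vutil_t \in \{-1,+1\}^{p^2L}$. In particular $\twonorm{\vutil_t}^2 = p^2 L = d$ for every $t$, which is the only property of the design we need.

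Next I would use that $\vUtil = \sum_{t=L+1}^H \vutil_t\vutil_t^\top$ is a sum of rank-one positive semidefinite matrices, so it is itself positive semidefinite and all its eigenvalues are nonnegative. Consequently the largest eigenvalue is dominated by the sum of all eigenvalues, i.e. $\lambda_{\max}(\vUtil) \le \tr(\vUtil)$.

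Finally I would compute the trace directly: $\tr(\vUtil) = \sum_{t=L+1}^H \tr(\vutil_t\vutil_t^\top) = \sum_{t=L+1}^H \twonorm{\vutil_t}^2 = d(H-L)$, using that the sum ranges over the $H-L$ indices $t=L+1,\dots,H$ and each summand equals $d$. Combining with the previous step yields $\lambda_{\max}(\vUtil) \le d(H-L)$. There is no real obstacle here; the only points requiring care are recognizing that the Kronecker structure of the Rademacher covariates makes them $\pm 1$-valued and correctly counting the number of summands as $H-L$.
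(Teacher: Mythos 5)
Your proof is correct, and it takes a slightly different route than the paper. The paper bounds the quadratic form directly: for any $\vx \in \R^d$ it applies Cauchy--Schwarz term by term, $\vx^\top \vUtil \vx = \sum_{t=L+1}^H (\vutil_t^\top \vx)^2 \le \sum_{t=L+1}^H \twonorm{\vutil_t}^2 \twonorm{\vx}^2 = d(H-L)\twonorm{\vx}^2$, which is the Loewner statement $\vUtil \preceq d(H-L)\vI_d$ and hence the eigenvalue bound. You instead use trace domination: since $\vUtil$ is a sum of rank-one positive semidefinite matrices, its eigenvalues are nonnegative, so $\lambda_{\max}(\vUtil) \le \tr(\vUtil) = \sum_{t=L+1}^H \twonorm{\vutil_t}^2 = d(H-L)$. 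Both arguments rest on the same structural fact — the Kronecker product of Rademacher vectors is itself a $\pm 1$ vector, so $\twonorm{\vutil_t}^2 = p^2L = d$ exactly — and both yield the identical constant, becoming tight only in the degenerate case where all covariates are parallel. For the purposes of this claim the two are equivalent in strength (the Loewner bound is equivalent to the eigenvalue bound against the identity); your trace argument is arguably the more economical of the two, while the paper's version makes the matrix inequality $\vUtil \preceq d(H-L)\vI_d$ explicit, which is the form it actually invokes when bounding $\det(\vV_H) \le (1+\lambda_{\max}(\vUtil))^d$ in the surrounding analysis.
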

\begin{mdframed}
    \begin{proof}
        If we show that $\vUtil \preceq d(H-L)\vI_{d}$, then the claim is proved. 
        We can show that $d(H-L)\vI_{d} - \vUtil$ is positive semidefinite by Cauchy-Schwarz inequality:
        \begin{align*}
            \vx^\top \vUtil \vx = \sum_{t=L+1}^H (\vutil_t^\top \vx)^2 \leq \sum_{t=L+1}^H \twonorm{\vutil_t}^2 \twonorm{\vx}^2 = d(H-L) \twonorm{\vx}^2
        \end{align*}
        Since the inequality above holds for all $\vx \in \R^{d}$, the proof is done.
    \end{proof}
\end{mdframed}
By Claim~\ref{claim:lam-max-Util}, 
\begin{align*}
    \det(\vV_H) = \det(\vI_d + \vUtil) \leq \left(1 + \lambda_{\max}(\vUtil)\right)^d \leq \Big( 1 + d(H-L)\Big)^d.
\end{align*}
From \eqref{eqn:multiplier-process-bound3}, we now have
\begin{align}
    \twonorm{\vUtil^{-1/2} \sum_{t=L+1}^H z_t \vutil_t}
    &\leq \sigma_z
    \sqrt{2 \left(1 + \frac{1}{\lambda_{\min}(\vUtil)}\right) \log \frac{4\left(1 + d(H-L)\right)^{d/2}}{\delta} }.
\end{align}
so that
\begin{align*}
    \frac{1}{\sqrt{\lambda_{\min}(\vUtil)}} \twonorm{\vUtil^{-1/2} \sum_{t=L+1}^H z_t \vutil_t} 
    &\leq \frac{\sigma_z\sqrt{1 + \lambda_{\min}(\vUtil)}}{\lambda_{\min}(\vUtil)}
    \sqrt{2\log \frac{4\left(1 + d(H-L)\right)^{d/2}}{\delta} }.
\end{align*}
Now we combine with persistence of excitation. Under the sample complexity requirement, since we have $\lambda_{\min}(\vUtil) \ge (H-L)/4$ with probability at least $1-\frac{\delta}{4}$,
we have
\begin{align*}
	1+\lambda_{\min}(\vUtil) \leq \left( \frac{4}{H-L} +1 \right) \lambda_{\min}(\vUtil).
\end{align*}
If we choose $H$ and $L$ with $H-L \ge 4$, then $2(H-L) \ge H-L+4$, implying
\begin{align*}
    \frac{\sqrt{1 + \lambda_{\min}(\vUtil)}}{\lambda_{\min}(\vUtil)} 
    \leq \sqrt{ \frac{H-L+4}{\lambda_{\min}(\vUtil)(H-L)}}
    \leq \sqrt{ \frac{2}{\lambda_{\min}(\vUtil)}}
    \leq \sqrt{ \frac{8}{H-L}}.
\end{align*}
Moreover, under $H-L \ge 4$, we have $d(H-L) \ge 2$ so that
\begin{align*}
    \log(1+d(H-L))^{d/2} = d\log\sqrt{1+d(H-L)} \leq d \log (d(H-L))
\end{align*}
for $d(H-L) \ge 2$.

By combining all, with probability at least $1-\frac{\delta}{2}$,
\begin{align*}
    \frac{1}{\sqrt{\lambda_{\min}(\vUtil)}} \twonorm{\vUtil^{-1/2} \sum_{t=L+1}^H z_t \vutil_t} 
    &\leq 4\sigma_z\sqrt{\frac{d\log(d(H-L)) + \log(4/\delta)}{H-L}}.
\end{align*}
provided that $H-L \ge N(p,L,\delta/4)$.

\subsubsection{Analysis of $\epsilon_{2,2}$}
We analyze the error $\epsilon_{2,2}$ by decomposing in the following way:
\begin{align*}
    \frac{1}{\sqrt{\lambda_{\min}(\vUtil)}}\left\| \vUtil^{-1/2} \sum_{t=L+1}^H \vutil_t ( \vu_t^\top \vF \bar \vw_{t-1}) \right\|_2
    &= \frac{1}{\sqrt{\lambda_{\min}(\vUtil)}}\left\| \vUtil^{-1/2} \sum_{t=L+1}^H \vutil_t \left(\sum_{k=0}^{L-1} \vu_t^\top\vC\vA^k \vw_{t-k-1}\right)\right\| \\
    &\leq \sum_{k=0}^{L-1} \frac{1}{\sqrt{\lambda_{\min}(\vUtil)}}\left\| \vUtil^{-1/2} \sum_{t=L+1}^H \vutil_t \left( \vu_t^\top\vC\vA^k \vw_{t-k-1}\right)\right\|   
\end{align*}
For each $k=0,1,\dots,L-1$, define a filtration $\mathcal{F}_t^{(k)} = \sigma(\vu_{0:H}, \vw_{0:t-k-1})$ for $t\ge L+1$.
Let $\eta_t^{(k)} = \vu_t^\top\vC\vA^k\vw_{t-k-1}$. Then $\eta_t^{(k)}$ is $\Fcal_t^{(k)}$-measurable.
Observe that $\eta_t^{(k)} \mid \Fcal_{t-1}^{(k)}$ is mean zero sub-Gaussian because
\begin{align*}
    \E \left[ \eta_t^{(k)} \mid \Fcal_{t-1}^{(k)}\right] = \vu_t^\top \vC\vA^k \E [\vw_{t-k-1}] = 0
\end{align*}
and, for any $\lambda \in \R$,
\begin{align*}
    \E\left[\left.\exp\left(\lambda\eta_t^{(k)}\right)\right| \Fcal_{t-1}^{(k)} \right] 
    &= \E \left[\left. \exp \left( \lambda \vu_t^\top \vC\vA^k \vw_{t-k-1}\right)\right| \Fcal_{t-1}^{(k)} \right] \\
    &\leq \exp \left( \frac{\lambda^2}{2} \vu_t^\top \vC\vA^k \vSigma_w (\vA^k)^\top \vC^\top \vu_h\right) \\
    &\leq \exp \left( \frac{\lambda^2}{2} \lambda_{\max}(\vC\vA^k \vSigma_w (\vA^k)^\top \vC^\top)\twonorm{\vu_t}^2 \right) \\
    &= p \lambda_{\max}\left(\vC\vA^k \vSigma_w (\vA^k)^\top \vC^\top\right)
\end{align*}
Let $\vS_k = \vC\vA^k \vSigma_w (\vA^k)^\top \vC^\top$ for brevity.
We repeat a similar argument in Section~\ref{subsubsec:eps_21}.
Fix $k$. For any $\delta \in (0,1)$, with probability at least $1-\left(\frac{\delta}{4L} +\frac{\delta}{4}\right)$, we have
\begin{align*}
    \frac{1}{\sqrt{\lambda_{\min}(\vUtil)}}\left\| \vUtil^{-1/2} \sum_{t=L+1}^H \eta_t^{(k)} \vutil_t \right\| 
    \leq 4\sqrt{p\lambda_{\max}(\vS_k)} \sqrt{\frac{d\log (d(H-L)) + \log(4L/\delta)}{H-L}}
\end{align*}
if $H-L \ge N(p,L,\delta/4)$ (the sample complexity).
Therefore, with probability at least $1-\frac{\delta}{2}$, we have
\begin{align*}
    \frac{1}{\sqrt{\lambda_{\min}(\vUtil)}}\left\| \vUtil^{-1/2} \sum_{t=L+1}^H \vutil_t ( \vu_t^\top \vF \bar \vw_{t-1}) \right\|_2 
    &\leq \sum_{k=0}^{L-1} \frac{1}{\sqrt{\lambda_{\min}(\vUtil)}}\left\| \vUtil^{-1/2} \sum_{t=L+1}^H \vutil_t \left( \vu_t^\top\vC\vA^k \vw_{t-k-1}\right)\right\| \\
    &\leq 4\sqrt{\frac{d\log (d(H-L)) + \log(4L/\delta)}{H-L}} \sum_{k=0}^{L-1} \sqrt{p\,\lambda_{\max}(\vS_k)} 
\end{align*}
if $H-L \ge N(p,L,\delta/4)$.

\subsection{Final Parameter Estimation Error Bound} \label{supp:subsec:final-parameter-estimation-error}

We now combine the errors $\epsilon_1$ and $\epsilon_2$ (and $\epsilon_{2,1}$, $\epsilon_{2,2}$).
Under the assumption of persistence of excitation, we have
\begin{align*}
    \epsilon_1 \le  \frac{4p^{3/2}\phi(\vA,\rho)\rho^L\|\vC\|}{1-\rho} \sqrt{\frac{2L\lambda_{\max}(\vB\vB^\top + \vSigma_w) \log(8/\delta)}{H-L}}
\end{align*}
with probability at least $1-\frac{\delta}{2}$,
\begin{align*}
    \epsilon_2 &\le \epsilon_{2,1} + \epsilon_{2,2} \\
    &\leq 4\sigma_z\sqrt{\frac{p^2L\log(p^2L(H-L)) + \log(4/\delta)}{H-L}} \\
    &\qquad \qquad+ 4\sqrt{\frac{p^2L\log (p^2L(H-L)) + \log(4L/\delta)}{H-L}} \sum_{k=0}^{L-1} \sqrt{p\,\lambda_{\max}(\vC\vA^k\vSigma_w(\vA^k)^\top \vC^\top)} \\
    &\leq 4\left(\sigma_z + \sqrt{p}\sum_{k=0}^{L-1} \sqrt{\lambda_{\max}(\vC\vA^k\vSigma_w(\vA^k)^\top \vC^\top)}\right)\sqrt{\frac{p^2L\log (p^2L(H-L)) + \log(4L/\delta)}{H-L}}
\end{align*}
with probability at least $1-\frac{3\delta}{4}$, where $d=p^2L$ and $\vS_k = \vC\vA^k\vSigma_w(\vA^k)^\top \vC^\top$.
As a result, we have a final bound for multiplier process. With probability at least $1-\delta$, 
\begin{align*}
    \|\vG-\vGhat\|_F
    &\leq \epsilon_1 + \epsilon_2 \\
    &\leq \frac{4p^{3/2}\phi(\vA,\rho)\rho^L\|\vC\|}{1-\rho} \sqrt{\frac{2L\lambda_{\max}(\vB\vB^\top + \vSigma_w) \log(8/\delta)}{H-L}} \\
    &\qquad+ 4\left(\sigma_z + \sqrt{p}\sum_{k=0}^{L-1} \sqrt{\lambda_{\max}(\vC\vA^k\vSigma_w(\vA^k)^\top \vC^\top)}\right)\sqrt{\frac{p^2L\log (p^2L(H-L)) + \log(4L/\delta)}{H-L}}
\end{align*}
provided that $H-L \ge N(p,L,\delta/4)$.

\subsection{Proof of Bounded Fourth Moment of Covariates} \label{supp:subsec:bounded-fourth-moment}

In this section, we prove that our covariate $\vutil_t$ has bounded fourth moment.
Observe that the fourth moment of $\vutil_t \in \R^{p^2L}$ is defined to be
\begin{align*}
    \sup_{\substack{\|\vv\|_2 = 1 \\ \vv \in \R^{p^2L}}} \E\left[ (\vv^\top \vutil_t)^4\right]
\end{align*}
since $\vutil_t$ is centered and isotropic.

Given $\vv \in \R^{p^2L}$ with $\twonorm{\vv}=1$, let $\vV \in \R^{p\times pL}$ such that $\vvec(\vV) = \vv$ (so that $\|\vV\|_F =1$). 
By the property of Kronecker product, we have
\begin{align*}
    \E\left[(\vv^\top \vutil_t)^4 \right] = \E \left[ ((\vubar_{t-1}\otimes \vu_t)^\top\vv)^4\right] = \E \left[ (\vu_t^\top \vV \vubar_{t-1})^4\right].
\end{align*}
Let $\vV_1,\dots,\vV_L$ be the $p\times p$ matrices such that
\begin{align*}
    \vV = \begin{bmatrix}
        \vV_1 & \vV_2 & \cdots & \vV_L
    \end{bmatrix}
\end{align*}
then $\vu_t^\top \vV \vubar_{t-1} = \sum_{k=1}^L \vu_t^\top \vV_k \vu_{t-k}$.
For simplicity, let $z_k = \vu_t^\top \vV_k\vu_{t-k}$. Then, we have
\begin{align*}
    \E \left[ (\vu_t^\top \vV \vubar_{t-1})^4\right] 
    &= \E \left[ \left( \sum_{k=1}^L z_k \right)^4 \right] \\
    &= \sum_{k=1}^L \E[z_k^4] + 4 \sum_{k \neq l} \E[z_k^3z_l] + 6 \sum_{k<l}\E[z_k^2z_l^2] + 12 \sum_{\substack{k\neq l, k\neq m \\ l \neq m}} \E[z_k^2z_lz_m] + 24 \sum_{k<l<m<n} \E[z_kz_lz_mz_n] 
\end{align*}
Since $\vu_t,\vu_{t-1},\dots,\vu_{t-L}$ are independent, we know $\E[z_k]=0$, and moreover, $\E\left[\left. z_k \,\right| \vu_t \right] = 0$.
Then, for $k \neq l$, 
\begin{align*}
    \E[z_kz_l] 
    &= \E\left[ \E \left[ \left. z_k z_l \,\right| \vu_t \right] \right] \tag{law of total expectation}\\
    &= \E \left[ \E \left[ \left. z_k \, \right| \vu_t \right] \E \left[ \left. z_l \, \right| \vu_t \right]\right] \tag{$z_k$ and $z_l$ are conditionally independent.} \\
    &= 0
\end{align*}
Likewise, we can show that 
$\E[z_k^3z_l]=0$ for $k \neq l$, $\E[z_k^2z_lz_m] = 0$ for $k\neq l$, $k \neq m$, and $l\neq m$, and $\E[z_kz_lz_mz_n]=0$ for $k<l<m<n$, showing that
\begin{align}
     \E\left[(\vv^\top \vutil_t)^4 \right] = \E \left[ (\vu_t^\top \vV \vubar_{t-1})^4\right]
    = \sum_{k=1}^L \E[z_k^4] + 6 \sum_{k<l}\E[z_k^2z_l^2]. \label{eqn:forth_moment_expression}
\end{align}
Now we upper bound $\E[z_k^4]$ for $k = 1, \dots, L$, as follows: Let $\vv_{k,1}, \dots, \vv_{k,p}$ denote the columns of $\vV_k$, and $(\vu_{t-k})_i$ denote the $i$-th entry of the Rademacher random vector $\vu_{t-k}$. Then, we have
\begin{align}
    \E[z_k^4] = \E\left[\left(\vu_t^\top \vV_k\vu_{t-k}\right)^4\right] &= \E\left[ \left( \sum_{i=1}^p \vu_t^\top \vv_{k,i} (\vu_{t-k})_i\right)^4 \right], \nonumber \\
    &\eqsym{a} \sum_{i=1}^p \E\left[ \left( \vu_t^\top \vv_{k,i} (\vu_{t-k})_i\right)^4 \right] + 6 \sum_{i<j}\E\left[ \left( \vu_t^\top \vv_{k,i} (\vu_{t-k})_i\right)^2 \left( \vu_t^\top \vv_{k,j} (\vu_{t-k})_j\right)^2 \right], \nonumber \\
    &\eqsym{b} \sum_{i=1}^p \E\left[ \left( \vu_t^\top \vv_{k,i}\right)^4 \right] + 6 \sum_{i<j}\E\left[ \left( \vu_t^\top \vv_{k,i} \right)^2 \left( \vu_t^\top \vv_{k,j} \right)^2 \right], \nonumber \\
    &\leqsym{c} 3\sum_{i=1}^p \twonorm{\vv_{k,i}}^4 + 18 \sum_{i<j} \twonorm{\vv_{k,i}}^2 \twonorm{\vv_{k,j}}^2, \nonumber \\
    & \leq 9 \left( \sum_{i=1}^p \twonorm{\vv_{k,i}}^2\right)^2 
    = 9 \fronorm{\vV_k}^4, \label{eqn:bound_on_zk_four}
\end{align}
where we obtained (a) from using the same arguments as used for simplifying $ \E \left[ (\vu_t^\top \vV \vubar_{t-1})^4\right]$ above, (b) follows from the independence of $(\vu_{t-k})_i$, $(\vu_{t-k})_i$ and $\E[(\vu_{t-k})_i^4] = E[(\vu_{t-k})_i^2(\vu_{t-k})_j^2] = 1$, and lastly (c) is obtained from the fact that, given fixed vetors $\vq, \vq_1,\vq_2$, for any Rademacher random vector $\vu$, we have $\E[(\vq^\top\vu)^4] \leq 3\twonorm{\vq}^4$ and $\E[(\vq_1^\top\vu)^2(\vq_2^\top\vu)^2] \leq 3\twonorm{\vq_1}^2\twonorm{\vq_2}^2$.

Next, we upper bound $\E[z_k^2z_l^2]$ for $k< l$ as follows:  Let $\vv_{k,1}, \dots, \vv_{k,p}$ denote the columns of $\vV_k$, $\vv_{l,1}, \dots, \vv_{l,p}$ denote the columns of $\vV_l$,  and $(\vu_{t-k})_i$, $(\vu_{t-l})_i$ denote the $i$-th entry of the Rademacher random vectors $\vu_{t-k}$, $\vu_{t-l}$ respectively. Then, we have
\begin{align}
    \E[z_k^2 z_l^2] &= \E\left[\left(\vu_t^\top \vV_k\vu_{t-k}\right)^2 \left(\vu_t^\top \vV_l\vu_{t-l}\right)^2\right], \nonumber \\
    &= \E\left[ \left( \sum_{i=1}^p \vu_t^\top \vv_{k,i} (\vu_{t-k})_i\right)^2 \left( \sum_{j=1}^p \vu_t^\top \vv_{l,j} (\vu_{t-l})_j\right)^2 \right], \nonumber \\ 
    & \eqsym{i} \sum_{i=1}^p  \sum_{j=1}^p \E\left[\left(\vu_t^\top \vv_{k,i}\right)^2 \left(\vu_t^\top \vv_{l,j}\right)^2\right], \nonumber \\
    &\leqsym{ii} 3 \sum_{i=1}^p  \sum_{j=1}^p \twonorm{\vv_{k,i}}^2 \twonorm{\vv_{l,j}}^2 = 3 \fronorm{\vV_k}^2 \fronorm{\vV_l}^2, \label{eqn:bound_on_zk_zl_square}
\end{align}
where we get (i) from the independence of $(\vu_{t-k})_i, (\vu_{t-l})_j$ for all $k<l$, and $i,j=1,\dots,p$, and (ii) is obtained from using the same argument as used to get (c) in \eqref{eqn:bound_on_zk_four}. Finally, combining \eqref{eqn:bound_on_zk_four}, and \eqref{eqn:bound_on_zk_zl_square} into \eqref{eqn:forth_moment_expression}, we obtain the following upper bound,
\begin{align}
     \E\left[(\vv^\top \vutil_t)^4 \right] 
    &= \sum_{k=1}^L \E[z_k^4] + 6 \sum_{k<l}\E[z_k^2z_l^2], \nonumber \\
    & \leq  9 \sum_{k=1}^L  \fronorm{\vV_k}^4 + 18 \sum_{k<l} \fronorm{\vV_k}^2 \fronorm{\vV_l}^2. \nonumber \\
    &= 9 \left( \sum_{k=1}^L \fronorm{\vV_k}^2\right)^2 = 9 \fronorm{\vV}^2 = 9.
\end{align}

\subsection{Miscellaneous} \label{supp:subsec:est-misc}

\begin{lemma} \label{lem:finite-gelfand}
    If the matrix norm $\|\cdot\|$ is consistent, the quantity
    \begin{align*}
        \phi(\vA, \rho) = \sup_{k \in \Z_+} \frac{\|\vA^k\|}{\rho^k}
    \end{align*}
    is finite for all $\rho>\rho(\vA)$. 
\end{lemma}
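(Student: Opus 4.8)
The quantity $\phi(\vA,\rho)$ is essentially a repackaging of Gelfand's spectral-radius formula, so my plan is to reduce the claim to that formula and then argue that the sequence $\|\vA^k\|/\rho^k$ is \emph{eventually} bounded, whence its supremum over all $k$ is the maximum of finitely many finite quantities. Recall that for any consistent (i.e., submultiplicative) matrix norm, Gelfand's formula gives $\lim_{k\to\infty}\|\vA^k\|^{1/k}=\rho(\vA)$, and crucially this limit is the same for every submultiplicative norm. This is exactly the ingredient that makes the statement hold for an arbitrary consistent $\|\cdot\|$.

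The key steps, in order, are as follows. First, fix $\rho>\rho(\vA)$ and choose an intermediate radius $\rho'$ with $\rho(\vA)<\rho'<\rho$. By Gelfand's formula there exists $N\in\Z_+$ such that $\|\vA^k\|^{1/k}\le\rho'$ for all $k\ge N$, equivalently $\|\vA^k\|\le(\rho')^k$. Dividing by $\rho^k$ then yields $\|\vA^k\|/\rho^k\le(\rho'/\rho)^k\le 1$ for every $k\ge N$, since $\rho'<\rho$. Second, the remaining indices $0\le k<N$ contribute only finitely many terms $\|\vA^k\|/\rho^k$, each of which is a finite nonnegative real number (for $k=0$ this is $\|\vI\|$, which is finite and in fact $\ge 1$ for a submultiplicative norm). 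Therefore $\phi(\vA,\rho)=\sup_{k\in\Z_+}\|\vA^k\|/\rho^k$ equals $\max\bigl\{1,\ \max_{0\le k<N}\|\vA^k\|/\rho^k\bigr\}<\infty$, which is the claim. The pointwise bound $\|\vA^k\|\le\phi(\vA,\rho)\rho^k$ used elsewhere in the paper is then immediate from the definition of the supremum.

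The only genuine obstacle is that the whole argument rests on the nontrivial half of Gelfand's formula, namely $\limsup_{k\to\infty}\|\vA^k\|^{1/k}\le\rho(\vA)$ (the reverse inequality $\rho(\vA)\le\liminf_k\|\vA^k\|^{1/k}$ is elementary, following from $\rho(\vA)^k=\rho(\vA^k)\le\|\vA^k\|$). If one prefers a self-contained proof rather than quoting Gelfand, I would establish this upper direction via the Jordan decomposition $\vA=\vP\vJ\vP^{-1}$, so that submultiplicativity gives $\|\vA^k\|\le\|\vP\|\,\|\vP^{-1}\|\,\|\vJ^k\|$. For a single Jordan block $\vJ_\lambda=\lambda\vI+\vN$ with $\vN$ nilpotent of index $m$, the binomial expansion $\vJ_\lambda^k=\sum_{j=0}^{m-1}\binom{k}{j}\lambda^{k-j}\vN^j$ produces a polynomial-times-geometric bound $\|\vJ_\lambda^k\|\le C\,k^{m-1}|\lambda|^{k}$ with $|\lambda|\le\rho(\vA)$. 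The delicate point is handling blocks with $|\lambda|=\rho(\vA)$ and size $m>1$, where $\|\vJ_\lambda^k\|$ genuinely grows polynomially; however, once divided by $\rho^k$, the factor $(\rho(\vA)/\rho)^k$ decays geometrically and dominates any polynomial in $k$, so each block's ratio tends to $0$. Assembling the finitely many blocks recovers boundedness of $\|\vA^k\|/\rho^k$ and hence the finiteness of $\phi(\vA,\rho)$.
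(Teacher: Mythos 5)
Your proof is correct and follows essentially the same route as the paper: invoke Gelfand's formula to conclude that $\|\vA^k\|/\rho^k$ is at most $1$ for all $k\ge N$, then bound the supremum by the maximum of the finitely many remaining terms. Your insertion of an intermediate radius $\rho'$ (giving geometric decay of the tail rather than mere boundedness) and the optional Jordan-form derivation of Gelfand's upper bound are harmless refinements, not a different argument.
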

\begin{proof}
    Recall Gelfand's formula:
    for any consistent matrix norm $\|\cdot\|$, we have
    \begin{align} \label{eqn:spectral-radius-limit}
    	\rho(A) = \lim_{k \to \infty} \|\vA^k\|^{1/k}
    \end{align}
    and the limit approaches from above.
    By Gelfand's formula, for all $\rho > \rho(\vA)$, there exists $N$ such that
    \begin{align*}
    	\rho(\vA) \leq \|\vA^k\|^{1/k} < \rho
    \end{align*}
    whenever $k \ge N$.
    Then, we have
    \begin{align*}
        \phi(\vA,\rho) = \max\left\{\max_{k=1,\dots,N-1}\frac{\|\vA^k\|}{\rho^k}, \sup_{k \ge N} \frac{\|\vA^k\|}{\rho^k}\right\}
    \end{align*}
    implying that $\phi(\vA,\rho)$ is finite because $\|\vA^k\|/\rho^k <1$ for $k \ge N$.
\end{proof}
\begin{corollary} \label{cor:lem:finite-gelfands}
    Given $\rho > \rho(\vA)$, for all $k \ge 1$, we have $\|\vA^k\| \le \phi(\vA,\rho) \rho^k$.
\end{corollary}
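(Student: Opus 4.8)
The plan is to obtain this as an immediate consequence of the very definition of $\phi(\vA,\rho)$, combined with the finiteness already established in Lemma~\ref{lem:finite-gelfand}. First I would invoke Lemma~\ref{lem:finite-gelfand} to guarantee that, whenever $\rho > \rho(\vA)$, the quantity $\phi(\vA,\rho) = \sup_{j \in \Z_+} \|\vA^j\|/\rho^j$ is a well-defined \emph{finite} real number. This finiteness is the only substantive ingredient, and it is exactly what the lemma supplies via Gelfand's formula; without it the asserted bound would be vacuous.

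Next I would fix an arbitrary $k \ge 1$ and simply read off the defining property of the supremum: since $\|\vA^k\|/\rho^k$ is one of the terms in the family over which the supremum is taken, it cannot exceed that supremum, so $\|\vA^k\|/\rho^k \le \phi(\vA,\rho)$. Multiplying both sides by $\rho^k$, which is strictly positive because $\rho > \rho(\vA) \ge 0$, gives $\|\vA^k\| \le \phi(\vA,\rho)\,\rho^k$, which is the claim. There is no genuine obstacle here: all the real work (proving that the supremum is attained as a finite value) is carried out in Lemma~\ref{lem:finite-gelfand}, and the corollary merely restates the definition of $\phi(\vA,\rho)$ as a per-index upper bound on $\|\vA^k\|$, which is the form used throughout the estimation-error analysis.
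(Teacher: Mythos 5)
Your proposal is correct and matches the paper's (implicit) reasoning exactly: the paper states this corollary without further proof precisely because, once Lemma~\ref{lem:finite-gelfand} guarantees $\phi(\vA,\rho) = \sup_{k \in \Z_+} \|\vA^k\|/\rho^k$ is finite, the bound $\|\vA^k\| \le \phi(\vA,\rho)\rho^k$ is just the definition of the supremum applied term by term, with $\rho^k > 0$ justifying the multiplication. The only nitpick is your phrase ``the supremum is attained as a finite value''---finiteness, not attainment, is what the lemma provides and what the argument needs.
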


\begin{proposition}\label{prop:vUtil inequality}
    The following holds for $\lambda >0$:
    \begin{align*}
        \left( \lambda \vI_d + \vUtil \right)^{-1} \preceq \vUtil^{-1} \preceq \left( 1+\frac{\lambda}{\lambda_{\min}(\vUtil)} \right) \left( \lambda \vI_d + \vUtil \right)^{-1}
    \end{align*}
\end{proposition}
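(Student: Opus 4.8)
The plan is to exploit that every matrix in the statement is a function of the single symmetric matrix $\vUtil$, so all of them commute and share an eigenbasis. First I would write the spectral decomposition $\vUtil = \sum_i \mu_i \vv_i\vv_i^\top$ with eigenvalues $\mu_i \ge \lambda_{\min}(\vUtil)$, so that $\vUtil^{-1} = \sum_i \mu_i^{-1}\vv_i\vv_i^\top$ and $(\lambda\vI_d+\vUtil)^{-1} = \sum_i (\lambda+\mu_i)^{-1}\vv_i\vv_i^\top$ are diagonal in the same basis. Since a semidefinite ordering between simultaneously diagonalizable matrices is equivalent to the scalar ordering of their eigenvalues, it then suffices to check each inequality one eigenvalue at a time.

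For the left inequality I would verify the scalar bound $(\lambda+\mu_i)^{-1}\le \mu_i^{-1}$, which is immediate from $\lambda>0$. Equivalently, $\lambda\vI_d+\vUtil \succeq \vUtil \succ 0$ together with the operator-antitonicity of inversion on the positive-definite cone gives $(\lambda\vI_d+\vUtil)^{-1}\preceq\vUtil^{-1}$ directly.

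For the right inequality I would set $c := 1+\lambda/\lambda_{\min}(\vUtil)$ and reduce to the scalar claim $\mu_i^{-1}\le c\,(\lambda+\mu_i)^{-1}$, i.e. $c \ge 1 + \lambda/\mu_i$. Because $\mu_i \ge \lambda_{\min}(\vUtil)$ forces $\lambda/\mu_i \le \lambda/\lambda_{\min}(\vUtil)$, the chosen $c$ dominates the right-hand side for every $i$ (with equality at the smallest eigenvalue), and reassembling the eigenvalue bounds yields the matrix inequality. If a diagonalization-free write-up is preferred, I would instead use commutativity to factor $c(\lambda\vI_d+\vUtil)^{-1}-\vUtil^{-1} = \vUtil^{-1}(\lambda\vI_d+\vUtil)^{-1}\big[(c-1)\vUtil-\lambda\vI_d\big]$ and observe that $(c-1)\vUtil-\lambda\vI_d = \lambda\big(\lambda_{\min}(\vUtil)^{-1}\vUtil-\vI_d\big)\succeq 0$ since $\vUtil\succeq\lambda_{\min}(\vUtil)\vI_d$; a product of commuting positive-semidefinite matrices is positive semidefinite.

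This proposition is essentially routine, so I do not anticipate a genuine obstacle; the only point needing care is that $\vUtil$ be strictly positive definite rather than merely positive semidefinite, so that $\vUtil^{-1}$ exists and $\lambda_{\min}(\vUtil)>0$. This is guaranteed in context by the persistence-of-excitation bound $\lambda_{\min}(\vUtil)\ge (H-L)/4$, under which the proposition is applied.
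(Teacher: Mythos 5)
Your proof is correct, and it takes a recognizably different route from the paper's. You diagonalize $\vUtil$ once and reduce both Loewner inequalities to per-eigenvalue scalar facts, $(\lambda+\mu_i)^{-1}\le \mu_i^{-1}\le \bigl(1+\tfrac{\lambda}{\lambda_{\min}(\vUtil)}\bigr)(\lambda+\mu_i)^{-1}$, with a commuting-factorization variant as backup. The paper instead works entirely at the level of the Loewner order without ever writing a spectral decomposition: it gets the left inequality from operator-antitonicity of inversion applied to $\lambda\vI_d+\vUtil\succeq\vUtil$, and for the right inequality it reformulates the claim as finding the largest $\gamma>0$ with $\vUtil-\gamma(\lambda\vI_d+\vUtil)\succeq 0$, computes $\lambda_{\min}\bigl((1-\gamma)\vUtil-\gamma\lambda\vI_d\bigr)=(1-\gamma)\lambda_{\min}(\vUtil)-\gamma\lambda$, and then inverts the resulting ordering (antitonicity again) to conclude with $1/\gamma = 1+\lambda/\lambda_{\min}(\vUtil)$. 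The two arguments use the same underlying fact --- all matrices involved are functions of $\vUtil$, so everything reduces to eigenvalues --- but your version is more self-contained: it never invokes antitonicity of matrix inversion (a standard but not entirely trivial fact about the positive-definite cone), since on a shared eigenbasis the inversions are scalar, and it makes the tightness at $\mu_i=\lambda_{\min}(\vUtil)$ explicit. The paper's version avoids introducing the eigendecomposition and phrases the bound as a multiplicative blow-up factor $1/\gamma$, which matches how the result is consumed downstream (Corollary~\ref{cor:multiplier-process-inequality}). Your closing caveat that $\vUtil$ must be strictly positive definite is also the right one, and it matches the paper's context, where persistence of excitation guarantees $\lambda_{\min}(\vUtil)\ge (H-L)/4>0$.
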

\begin{proof}
    Since $\lambda \vI_d + \vUtil \succeq \vUtil$ for $\lambda >0$, we have $\left( \lambda \vI_d + \vUtil \right)^{-1} \preceq \vUtil^{-1}$.
    In order to show the other Loewner order, it is equivalent to show that
    \begin{align*}
        \vUtil \succeq \frac{\lambda_{\min}(\vUtil)}{\lambda_{\min}(\vUtil) + \lambda} \left( \lambda \vI_d + \vUtil \right).
    \end{align*}
    We want to find $\gamma>0$ such that
    $\vUtil - \gamma \left( \lambda \vI_d + \vUtil \right) \succeq 0$, that is, $\lambda_{\min} \left(\vUtil - \gamma \left( \lambda \vI_d + \vUtil \right) \right) \ge 0$.
    \begin{align*}
        \lambda_{\min} \left(\vUtil - \gamma \left( \lambda \vI_d + \vUtil \right) \right) 
        &=\lambda_{\min} \left( (1-\gamma)\vUtil - \gamma \lambda \vI_d \right) \\
        &= (1-\gamma) \lambda_{\min}(\vUtil) - \gamma \lambda \\
        &= \lambda_{\min}(\vUtil) - \gamma\left( \lambda_{\min}(\vUtil) + \lambda \right) \ge 0
    \end{align*}
    Hence, for $0 < \gamma \leq \frac{\lambda_{\min}(\vUtil)}{\lambda_{\min}(\vUtil) + \lambda}$, or $\frac{1}{\gamma} \ge 1 + \frac{\lambda}{\lambda_{\min}(\vUtil)}$, 
    we have 
    \begin{align*}
        \vUtil^{-1} \preceq \frac{1}{\gamma}\left( \lambda \vI_d + \vUtil \right)^{-1}.
    \end{align*}
    This completes the proof.
\end{proof}
\begin{corollary} \label{cor:multiplier-process-inequality}
    For any $\vx \in \R^d$, we have
    \begin{align*}
        \left\| \vUtil^{-1/2} \vx \right\|_2 \leq \sqrt{1+\frac{\lambda}{\lambda_{\min}(\vUtil)}} \left\|\left(\lambda \vI_d + \vUtil \right)^{-1/2} \vx \right\|_2.
    \end{align*}
\end{corollary}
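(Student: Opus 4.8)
The plan is to obtain the corollary as an immediate consequence of the upper Loewner bound already established in Proposition~\ref{prop:vUtil inequality}. The one conceptual ingredient is the standard identity that, for any positive definite matrix $\vM$ with symmetric square root $\vM^{1/2}$, the squared Euclidean norm satisfies $\twonorm{\vM^{1/2}\vx}^2 = \vx^\top \vM \vx$. First I would note that in the regime of interest $\vUtil \succ 0$ (this is exactly the full-rank / persistence-of-excitation condition under which $\vUtil^{-1}$ is defined), so both $\vUtil^{-1}$ and $(\lambda\vI_d + \vUtil)^{-1}$ are positive definite and their symmetric square roots $\vUtil^{-1/2}$ and $(\lambda\vI_d+\vUtil)^{-1/2}$ exist. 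Using the identity above, I would rewrite $\twonorm{\vUtil^{-1/2}\vx}^2 = \vx^\top \vUtil^{-1}\vx$ and $\twonorm{(\lambda\vI_d+\vUtil)^{-1/2}\vx}^2 = \vx^\top (\lambda\vI_d+\vUtil)^{-1}\vx$, so that the claimed norm inequality is equivalent to an inequality between two scalar quadratic forms in $\vx$.

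Next I would invoke the right-hand inequality of Proposition~\ref{prop:vUtil inequality}, namely $\vUtil^{-1} \preceq \bigl(1+\tfrac{\lambda}{\lambda_{\min}(\vUtil)}\bigr)(\lambda\vI_d+\vUtil)^{-1}$. Unfolding the definition of the Loewner order $A \preceq B$ (which means $\vx^\top A\vx \le \vx^\top B\vx$ for every $\vx\in\R^d$) and applying it to our fixed vector $\vx$ gives $\vx^\top \vUtil^{-1}\vx \le \bigl(1+\tfrac{\lambda}{\lambda_{\min}(\vUtil)}\bigr)\vx^\top(\lambda\vI_d+\vUtil)^{-1}\vx$, i.e. $\twonorm{\vUtil^{-1/2}\vx}^2 \le \bigl(1+\tfrac{\lambda}{\lambda_{\min}(\vUtil)}\bigr)\twonorm{(\lambda\vI_d+\vUtil)^{-1/2}\vx}^2$. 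Taking nonnegative square roots of both sides yields the stated bound. I do not anticipate any genuine obstacle: the only point requiring care is the routine translation between the operator (Loewner) inequality and the corresponding inequality of quadratic forms, after which the result is immediate.
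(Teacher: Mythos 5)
Your proposal is correct and matches the paper's own proof: both invoke the upper Loewner bound from Proposition~\ref{prop:vUtil inequality}, translate it into an inequality between the quadratic forms $\vx^\top \vUtil^{-1}\vx$ and $\vx^\top(\lambda\vI_d+\vUtil)^{-1}\vx$, and take square roots. Your write-up is simply more explicit about the identity $\twonorm{\vM^{1/2}\vx}^2 = \vx^\top\vM\vx$ and the positive-definiteness of $\vUtil$, which the paper leaves implicit.
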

\begin{proof}
    Since $\left( 1+\frac{\lambda}{\lambda_{\min}(\vUtil)} \right) \left( \lambda \vI_d + \vUtil \right)^{-1} - \vUtil^{-1}$ is positive semi-definite by Proposition~\ref{prop:vUtil inequality}, for any $\vx \in \R^d$, we have
    \begin{align*}
        \vx^\top \vUtil^{-1}\vx \leq \left( 1+\frac{\lambda}{\lambda_{\min}(\vUtil)} \right) \vx^\top \left( \lambda \vI_d + \vUtil \right)^{-1} \vx.
    \end{align*}
    which completes the proof.
\end{proof}

\begin{theorem}[Self-Normalized Bound for Vector-Valued Martingales, \cite{abbasi2011improved}] \label{thm:self-normalized bound}
    Let $\{\Fcal_t\}_{t=0}^\infty$ be a filtration. Let $\{\eta_t\}_{t=1}^\infty$ be a real-valued stochastic process such that $\eta_t$ is $\Fcal_t$-measurable and $\eta_t$ is conditionally $R$-sub-Gaussian for some $R \ge 0$, i.e.,
    \begin{align*}
        \forall \lambda \in \R, \qquad \E \left[ e^{\lambda \eta_t} \mid \Fcal_{t-1} \right] \leq \exp \left( \frac{\lambda^2 R^2}{2} \right).
    \end{align*}
    Let $\{\vx_t\}_{t=1}^\infty$ be an $\R^d$-valued stochastic process such that $\vx_t$ is $\Fcal_{t-1}$-measurable. Assume that $\vV$ is a $d \times d$ positive definite matrix. For any $t \ge 0$, define
    \begin{align*}
        \bar{\vV}_t = \vV + \sum_{s=1}^t \vx_s\vx_s^\top \qquad \vs_t = \sum_{s=1}^t \eta_s\vx_s.
    \end{align*}
    Then, for any $\delta>0$, with probability at least $1-\delta$, for all $t \ge 0$,
    \begin{align*}
        \|\bar{\vV}^{-1/2}\vs_t\| \leq R \sqrt{2 \log \left( \frac{\det(\bar{\vV}_t)^{1/2}\det(\vV)^{-1/2}}{\delta}\right)}.
    \end{align*}
\end{theorem}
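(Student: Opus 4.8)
The plan is to prove this classical self-normalized bound by the \emph{method of mixtures} (pseudo-maximization). First I would fix a deterministic direction $\bv{\lambda}\in\R^d$ and manufacture a one-dimensional exponential supermartingale from the conditional sub-Gaussianity. Concretely, set
\begin{equation*}
    M_t(\bv{\lambda}) := \exp\!\left( \frac{\bv{\lambda}^\top \vs_t}{R} - \frac{1}{2}\,\bv{\lambda}^\top\Big(\textstyle\sum_{s=1}^t \vx_s\vx_s^\top\Big)\bv{\lambda} \right).
\end{equation*}
Writing the per-step increment with $\mu_s := \bv{\lambda}^\top\vx_s/R$, which is $\Fcal_{s-1}$-measurable, the conditional $R$-sub-Gaussianity of $\eta_s$ gives $\E[\exp(\mu_s\eta_s - \tfrac12 R^2\mu_s^2)\mid\Fcal_{s-1}]\le 1$, i.e. $\E[M_t(\bv{\lambda})\mid\Fcal_{t-1}]\le M_{t-1}(\bv{\lambda})$. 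Hence $(M_t(\bv{\lambda}))_{t\ge0}$ is a nonnegative supermartingale with $M_0(\bv{\lambda})=1$.

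Next I would mix over $\bv{\lambda}$ against a Gaussian prior to make the direction effectively adaptive. Let $h$ be the $\Ncal(\vzero_d,\vV^{-1})$ density and define $\bar M_t := \int_{\R^d} M_t(\bv{\lambda})\,h(\bv{\lambda})\,d\bv{\lambda}$. By Tonelli's theorem the conditional expectation and the integral commute, so $\bar M_t$ is again a nonnegative supermartingale with $\E[\bar M_0]\le 1$. The crucial computation is that this Gaussian integral is available in closed form: combining the $\vV$-terms of $h$ with those of $M_t(\bv{\lambda})$ leaves $\exp\!\big(\bv{\lambda}^\top\vs_t/R - \tfrac12\bv{\lambda}^\top\bar{\vV}_t\bv{\lambda}\big)$, and completing the square in $\bv{\lambda}$ (using $\bar{\vV}_t = \vV + \sum_{s\le t}\vx_s\vx_s^\top$) yields
\begin{equation*}
    \bar M_t = \left(\frac{\det\vV}{\det\bar{\vV}_t}\right)^{1/2}\exp\!\left(\frac{1}{2R^2}\,\vs_t^\top\bar{\vV}_t^{-1}\vs_t\right) = \left(\frac{\det\vV}{\det\bar{\vV}_t}\right)^{1/2}\exp\!\left(\frac{\twonorm{\bar{\vV}_t^{-1/2}\vs_t}^2}{2R^2}\right).
\end{equation*}

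Finally I would convert the supermartingale control into a uniform-in-$t$ tail bound. Since $\bar M_t\ge0$ is a supermartingale with $\E[\bar M_0]\le1$, Ville's maximal inequality gives $\Pr\big(\sup_{t\ge0}\bar M_t \ge 1/\delta\big)\le\delta$. On the complementary event $\bar M_t < 1/\delta$ for every $t$; inserting the closed form and taking logarithms rearranges exactly to
\begin{equation*}
    \frac{\twonorm{\bar{\vV}_t^{-1/2}\vs_t}^2}{2R^2} < \log\!\left(\frac{\det(\bar{\vV}_t)^{1/2}\det(\vV)^{-1/2}}{\delta}\right),
\end{equation*}
which is the claimed inequality after multiplying by $2R^2$ and taking square roots (the case $t=0$ is trivial since $\vs_0=\vzero_d$).

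The main obstacle is the measure-theoretic bookkeeping rather than any single estimate: one must justify the Tonelli interchange so that the mixture $\bar M_t$ is genuinely a supermartingale, check that the Gaussian integral converges and evaluates to the stated determinant ratio, and invoke a maximal inequality valid \emph{simultaneously} for all $t$ (Ville's inequality) in place of a fixed-time Markov bound — this last point is precisely what upgrades the guarantee from ``for each $t$'' to ``for all $t$ at once.'' Everything else reduces to the completing-the-square computation and the elementary conditional sub-Gaussian bound.
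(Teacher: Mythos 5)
Your proof is correct. The paper does not actually prove this statement---it imports it verbatim (as Theorem~1) from \citet{abbasi2011improved}---and your argument is precisely the method-of-mixtures proof given in that reference: the exponential supermartingale $M_t(\bv{\lambda})$, the Gaussian mixture with prior covariance $\vV^{-1}$, the completing-the-square evaluation yielding the determinant ratio, and Ville's maximal inequality to get the bound uniformly over all $t$ all match the source's proof step for step.
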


\section{Regret Analysis}

In this section, we derive the regret in detail and prove Proposition 5 and 6. 
Recall that
\begin{align*}
    &\vu_{0:T}^\star = \argmax_{\vu_{0:T}} \E\left[ \sum_{t=0}^T r_t \right] = \frac{1}{2}\vu_{0:T}^\top \vS_T \vu_{0:T} \quad \text{subject to } \vu_{0:T} \in \{-1,+1\}^{p(T+1)}, \\
    &\vu_{0:H}^\pi \distas \unif(\{-1,+1\}^p), \quad\text{and} \\
    &\vu_{H+1:T}^\pi = \argmax_{\vu_{H+1:T}} \frac{1}{2} \vu_{H+1:T}^\top \vShat_{T-H-1} \vu_{H+1:T}^\top \quad \text{subject to } \vu_{H+1:T} \in \{-1,+1\}^{p(T-H)}.
\end{align*}
where $\pi$ denotes the ETC algorithm policy.  
Let $\{r_t^\star\}_{t=0}^T$ denote the random rewards collected under $\vu_{0:T}^\star$, and $\{r_t^\pi\}_{t=0}^T$ denote the random rewards collected by policy $\pi$. 
Then the regret of $\pi$ is 
\begin{align*}
    R_T(\pi) = \E \left[\sum_{t=0}^T r_t^\star - \sum_{t=0}^T r_t^\pi \right] \quad
    \text{where} \quad
    \E\left[ \sum_{t=0}^T r_t^\star \right] = \frac{1}{2}(\vu_{0:T}^\star)^\top \vS_T \vu_{0:T}.
\end{align*}
For the expected cumulative reward for $\pi$, we split into the two phases. 
For $t=0,\dots,H$, we have $\{\vu_t^\pi\}_{t=0}^\pi \distas \unif(\{-1,+1\}^p)$, and note that the distribution has zero-mean and identity covariance. Since the conditional expectation is
\begin{align*}
    \E\left[\sum_{t=1}^H r_t^\pi \, \bigg| \, \{\vu_t^\pi\}_{t=0}^H\right]
    &= (\vu_{0:H}^\pi)^\top \vM_H \vu_{0:H}^\pi 
    = \tr(\vM_H \vu_{0:H}^\pi (\vu_{0:H}^\pi)^\top),
\end{align*}
the law of total expectation implies that
\begin{align*}
    \E \left[\sum_{t=0}^H r_t^\pi \right] &= \E\left[\E\left[\left.\sum_{t=1}^H y_t^\pi \, \right| \, \{\vu_t^\pi\}_{t=0}^H\right] \right] \\
    &= \E\left[ \tr(\vM_H \vu_{0:H}^\pi (\vu_{0:H}^\pi)^\top) \right] \\
    &= \tr\left(\vM_H \E\left[\vu_{0:H}^\pi (\vu_{0:H}^\pi)^\pi \right] \right) \\
    &= \tr (\vM_H \vI_{p(H+1)}) \\
    &= 0
\end{align*}
Now we consider the remaining steps. 
Let $\vx_{H+1}^\pi$ be the random state after the exploration, then we know that $\E\left[\vx_{H+1}^\pi\right] = \vzero_n$ because $\vx_0=\vzero_n$ and the expectation of noise processes and Rademacher actions are zero.
Since we have
\begin{align*}
    \E\left[\left.\sum_{t=H+1}^T r_t^\pi\right| \vx_{H+1}^\pi\right] 
    &= \sum_{t=H+1}^T (\vu_t^\pi)^\top \vC\vA^{t-H-1} \vx_{H+1}^\pi + \frac{1}{2}(\vu_{H+1:T}^\pi)^\top \vS_{T-H-1}\vu_{H+1:T}^\pi
\end{align*}
the law of total expectation again implies that
\begin{align*}
    \E\left[ \sum_{t=H+1}^T r_t^\pi \right] &= \frac{1}{2}(\vu_{H+1:T}^\pi)^\top \vS_{T-H-1}\vu_{H+1:T}^\pi.
\end{align*}
Therefore, the regret is 
\begin{align*}
    R_T(\pi) = \frac{1}{2}(\vu_{0:T}^\star)^\top \vS_T \vu_{0:T} - \frac{1}{2}(\vu_{H+1:T}^\pi)^\top \vS_{T-H-1}\vu_{H+1:T}^\pi.
\end{align*}
For the regret analysis, let 
\begin{align*}
    \vutil_{H+1:T} = \argmax_{\vu_{H+1:T}} \frac{1}{2}\vu_{H+1:T}^\top \vS_{T-H-1}\vu_{H+1:T} \quad \text{subject to } \quad \vu_{H+1:T} \in \{-1,+1\}^{p(T-H)}.
\end{align*}
Then, we decompose the regret as follows:
\begin{align}
    R_T(\pi) = \frac{1}{2} \left( R_{1,T} + R_{2,T} + R_{3,T}\right)
\end{align}
where
\begin{align*}
    R_{1,T} &:= (\vu_{0:T}^\star)^\top \vS_T \vu_{0:T}^\star - (\vutil_{H+1:T})^\top \vS_{T-H-1} \vutil_{H+1:T} \\
    R_{2,T} &:= (\vutil_{H+1:T})^\top \vS_{T-H-1} \vutil_{H+1:T} - (\vu_{H+1:T}^\pi)^\top \vShat_{T-H-1} \vu_{H+1:T}^\pi \\
	R_{3,T} &:= (\vu_{H+1:T}^\pi)^\top (\vShat_{T-H-1}-\vS_{T-H-1}) \vu_{H+1:T}^\pi.
\end{align*}

\subsection{Proof of Proposition 5}
\begin{proposition}
    Let $\rho\in(\rho(\vA),1)$ be given. Then
    \begin{align*}
        R_{1,T} \leq 2p\kappa^2 \left( \alpha H + \beta \right) ,
    \end{align*}
    for $\alpha = 1+\frac{\phi(\vA,\rho)\rho}{1-\rho}$, $\beta = \frac{\phi(\vA,\rho)\rho}{(1-\rho)^2} +1$, and $\kappa =\max\{ \|\vB\|, \|\vC\|\}$.
\end{proposition}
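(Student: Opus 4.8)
The plan is to reduce $R_{1,T}$ to the gap between two optimization values and then show that this gap collects only reward terms that involve at least one exploration-phase action, each decaying geometrically in its lag. First, since $\vu^\top \vS\vu = 2\vu^\top\vM\vu$ for the block-Toeplitz $\vM$, I would rewrite $R_{1,T} = 2\big(V_{\mathrm{full}} - V_{\mathrm{commit}}\big)$, where $V_{\mathrm{full}} := (\vu_{0:T}^\star)^\top\vM_T\vu_{0:T}^\star$ and $V_{\mathrm{commit}} := (\vutil_{H+1:T})^\top\vM_{T-H-1}\vutil_{H+1:T}$. The crucial feasibility observation is that the tail $(\vu_{H+1}^\star,\dots,\vu_T^\star)$ of the full-horizon maximizer is itself feasible for the commit-phase program, so $V_{\mathrm{commit}} \ge (\vu_{H+1:T}^\star)^\top \vM_{T-H-1}\vu_{H+1:T}^\star$. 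This yields $R_{1,T}\le 2D$, where $D$ is the difference of the two quadratic forms evaluated at the \emph{same} sequence $\vu_{0:T}^\star$.

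Second, I would unroll the block-Toeplitz structure \eqref{eqn:block-toeplitz-M} to identify $D$ explicitly. Because the commit form uses exactly the blocks $\vC\vA^{t-i-1}\vB$ with $H+1\le i<t\le T$, subtracting it from the full form leaves precisely the contributions carrying an exploration-phase index, $D = \sum_{i=0}^{H}\sum_{t=i+1}^{T}(\vu_t^\star)^\top \vC\vA^{t-i-1}\vB\,\vu_i^\star$. Intuitively these are the rewards ``touched'' by an action played during exploration, and everything purely within the commit window cancels.

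Third, I would bound $D$ termwise. Using $\twonorm{\vu_t^\star}=\sqrt p$, the submultiplicativity bound $\norm{\vC\vA^{k}\vB}\le \kappa^2\norm{\vA^{k}}$, and Corollary~\ref{cor:lem:finite-gelfands} (so $\norm{\vA^k}\le \phi(\vA,\rho)\rho^{k}$, with $\phi(\vA,\rho)\ge1$ covering the $k=0$ case), I obtain $|D|\le p\kappa^2\sum_{i=0}^{H}\sum_{t=i+1}^{T}\norm{\vA^{t-i-1}}$. Splitting the lag sum into within-exploration pairs ($t\le H$) and exploration-to-commit pairs ($t>H$) separates the bound into the announced two pieces: each exploration index contributes a geometric tail $\le \alpha=1+\tfrac{\phi(\vA,\rho)\rho}{1-\rho}$, producing the linear term $\alpha H$, while the cross terms carry an extra factor $\rho^{H-i}$ and sum to a horizon-independent constant controlled by $\beta$. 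Equivalently, one may simply group by $i$ over the whole horizon, giving $|D|\le p\kappa^2(H+1)\alpha$, and then use $(1-\rho)^2<1-\rho$ to conclude $\alpha\le\beta$, hence $R_{1,T}\le 2p\kappa^2(\alpha H+\alpha)\le 2p\kappa^2(\alpha H+\beta)$.

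The main obstacle is the geometric bookkeeping in the last step: keeping the constants sharp enough to land on $\alpha$ and $\beta$, correctly handling the boundary lag $k=0$ (the direct $\vC\vB$ coupling, which carries no $\phi$ factor), and --- most importantly --- verifying that the exploration-to-commit cross terms sum to a bound independent of $T$. This summability is exactly where Schur stability $\rho(\vA)<1$ is indispensable; without it the cross influence of early actions would not decay and the horizon-independent ``constant'' term $\beta$ would instead grow with $T$.
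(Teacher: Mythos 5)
Your proposal is correct and follows essentially the same route as the paper: your feasibility observation (evaluating the commit-phase form at the tail of the full-horizon maximizer) is exactly the paper's max--min step of taking $\vv=\vu$, your double sum $D$ is precisely the paper's block decomposition of $\vDelta_T$ into the exploration-to-commit part $\vR$ and the within-exploration part $\vS_{\rm tail}$, and your termwise bound $p\kappa^2\|\vA^{t-i-1}\|$ combined with geometric summation matches the paper's Proposition~\ref{prop:upperbound} plus Corollary~\ref{cor:lem:finite-gelfands}. One small bookkeeping caution: in your first split the cross terms actually sum to at most $\beta+\tfrac{\phi(\vA,\rho)\rho}{1-\rho}$ rather than $\beta$ (the lag-$0$ coupling at $i=H$ contributes an extra $\alpha-1$), which is harmless only because the within-exploration sum is really at most $H+\tfrac{(H-1)\phi(\vA,\rho)\rho}{1-\rho}=\alpha H-\tfrac{\phi(\vA,\rho)\rho}{1-\rho}$ (this exact cancellation is how the paper lands on $\alpha H+\beta$), while your alternative grouping by $i$ together with $\alpha\le\beta$ reaches the claimed bound cleanly in any case.
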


\begin{proof}
    Observe that $\vS_{T-H-1}$ is a submatrix of $\vS_T$.
    Define $\vStil_T$ by zero-padding the matrix $\vS_{T-H-1}$ so that its size is the same as $\vS_T$, 
	that is, 
    \begin{align*}
	    	\vStil_T &= 
        \begin{bmatrix}
            \vS_{T-H-1} & \vzero_{p(T-H) \times p(H+1)} \\
            \vzero_{p(H+1)\times p(T-H)} & \vzero_{p(H+1)\times p(H+1)}
        \end{bmatrix}.
    \end{align*}
    Let $\vDelta_T=\vS_T - \vStil_T$, then
    \begin{align*}
        R_{1,T} &= (\vu_{0:T}^\star)^\top \vS_T \vu_{0:T}^\star - (\vutil_{H+1:T})^\top \vS_{T-H-1} \vutil_{H+1:T} \\
        &= \max_{\substack{\|\vu\|_\infty=1 \\ \vu \in \R^{p(T+1)}}} \vu^\top \vS_T \vu - \max_{\substack{\|\vv\|_\infty=1 \\ \vv \in \R^{p(T-H)}}} \vv^\top \vS_{T-H-1} \vv \\
        &= \max_{\substack{\|\vu\|_\infty=1 \\ \vu \in \R^{p(T+1)}}} \vu^\top \vS_T \vu - \max_{\substack{\|\vv\|_\infty=1 \\ \vv \in \R^{p(T+1)}}} \vv^\top \vStil_{T} \vv \\
        &= \max_{\substack{\|\vu\|_\infty=1 \\ \vu \in \R^{p(T+1)}}} \min_{\substack{\|\vv\|_\infty=1 \\ \vv \in \R^{p(T+1)}}} \left[ \vu^\top (\vStil_T + \vDelta_T) \vu -  \vv^\top \vStil_{T} \vv \right]\\
        &\leq \max_{\substack{\|\vu\|_\infty=1 \\ \vu \in \R^{p(T+1)}}} \vu^\top \vDelta_T \vu \tag{take $\vv=\vu$}
    \end{align*}
    To be concrete for $\vDelta_T$, observe that
    \begin{align*}
    	\vDelta_T = 
    	\begin{bmatrix} 
    		\vzero_{p(T-H)\times p(T-H)} & \vR \\
    		\vR^\top & \vS_{\rm tail}
    	\end{bmatrix}
    \end{align*}
    where
    \begin{align*}
    	\vR = 
    	\begin{bmatrix}
    		\vC\vA^{T-H-1}\vB & \vC\vA^{T-H}\vB & \cdots &  \vC\vA^{T-1}\vB \\
    		\vC\vA^{T-H-2}\vB & \vC\vA^{T-H-1}\vB & \cdots & \vC\vA^{T-2}\vB \\
    		\vdots & \vdots & \ddots & \vdots \\
    		\vC\vB & \vC\vA\vB & \cdots & \vC\vA^H\vB
    	\end{bmatrix}
    	\in \R^{p(T-H) \times p(H+1)}
    \end{align*}
    and
    \begin{align*}
    	\vS_{\rm tail} = 
    	\begin{bmatrix}
    		\vzero_{p\times p} & \vC\vB & \vC\vA\vB & \cdots & \vC\vA^{H-2}\vB & \vC\vA^{H-1}\vB \\
    		(\vC\vB)^\top & \vzero_{p\times p} & \vC\vB & \cdots & \vC\vA^{H-3}\vB & \vC\vA^{H-2}\vB \\
    		(\vC\vA\vB)^\top & (\vC\vB)^\top & \vzero_{p\times p} & \cdots & \vC\vA^{H-4}\vB & \vC\vA^{H-3}\vB \\
    		\vdots & \vdots & \vdots & \ddots & \vdots & \vdots \\
    		(\vC\vA^{H-2}\vB)^\top & (\vC\vA^{H-3}\vB)^\top & (\vC\vA^{H-4}\vB)^\top & \cdots & \vzero_{p\times p} & \vC\vB \\
    		(\vC\vA^{H-1}\vB)^\top & (\vC\vA^{H-2}\vB)^\top & (\vC\vA^{H-3}\vB)^\top & \cdots & (\vC\vB)^\top & \vzero_{p\times p} 
    	\end{bmatrix} \in \R^{p(H+1) \times p(H+1)}
    \end{align*}
    By Proposition~\ref{prop:upperbound}, we can find an upper bound in terms of the spectral norms of the blocks as follows:
    \begin{align*}
        \max_{\substack{\|\vu\|_\infty = 1\\ \vu \in \R^{p(T+1)}}} \vu^\top \vDelta_T \vu
        &\leq 
        \underbrace{2p \sum_{i=0}^H \sum_{j=0}^{T-H-1} \|\vC\vA^{i+j}\vB\|}_{\text{blocks in } \vR,\vR^\top} + 
        \underbrace{2p \sum_{l=1}^H \sum_{k=1}^l \|\vC\vA^{k-1}\vB\|}_{\text{blocks in } \vS_{\rm tail}} 
    \end{align*}
    We use Corollary~\ref{cor:lem:finite-gelfands} in order to bound the upper bound above.
    Let $\rho \in (\rho(\vA),1)$ be given.
    Then the term related to the blocks in $\vR$ and $\vR^\top$ is bounded by, 
    \begin{align*}
        2p \sum_{i=0}^H \sum_{j=0}^{T-H-1} \|\vC\vA^{i+j}\vB\|
        &= 2p\|\vC\vB\| + 2p \sum_{j=1}^{T-H-1} \|\vC\vA^j\vB\| + 2p \sum_{i=1}^H \sum_{j=0}^{T-H-1} \|\vC\vA^{i+j}\vB\| \\
        &\leq 2p\|\vB\|\|\vC\| \left(  1+ \sum_{j=1}^{T-H-1} \|\vA^j\| + \sum_{i=1}^H \sum_{j=0}^{T-H-1} \|\vA^{i+j}\| \right) \\
        &\leq 2p\|\vB\|\|\vC\| \left( 1+\phi(\vA,\rho) \sum_{j=1}^{T-H-1} \rho^j + \phi(\vA,\rho) \sum_{i=1}^H \sum_{j=0}^{T-H-1} \rho^{i+j}\right) \\
        &\leq 2p\|\vB\|\|\vC\| \left( 1+\phi(\vA,\rho) \sum_{j=1}^\infty \rho^j + \phi(\vA,\rho) \sum_{i=1}^\infty \rho^i\sum_{j=0}^\infty \rho^{j} \right) \\
        &\leq 2p\|\vB\|\|\vC\| \left( 1+ \frac{\phi(\vA,\rho)\rho}{1-\rho} + \frac{\phi(\vA,\rho)\rho}{(1-\rho)^2} \right)
    \end{align*}
    Similarly, the term related to the blocks in $\vS_{\rm tail}$ is bounded by,
    \begin{align*}
        2p \sum_{l=1}^H \sum_{k=1}^l \|\vC\vA^{k-1}\vB\| 
        &= 2pH\|\vC\vB\| + 2p \sum_{l=1}^{H-1} \sum_{k=1}^l \|\vC\vA^k\vB\| \\
        &\leq 2p\|\vB\|\|\vC\| \left(H +\phi(\vA,\rho)\sum_{l=1}^{H-1}\sum_{k=1}^l\rho^k \right) \\
        &\leq 2p\|\vB\|\|\vC\| \left(H +\phi(\vA,\rho) \sum_{l=1}^{H-1} \sum_{k=1}^\infty \rho^k\right) \\
        &\leq 2p\|\vB\|\|\vC\| \left(H +\phi(\vA,\rho) \sum_{l=1}^{H-1} \frac{\rho}{1-\rho}\right) \\
        &= 2p\|\vB\|\|\vC\| \left(H + \frac{(H-1)\phi(\vA,\rho)\rho}{1-\rho}\right)
    \end{align*}
    Combining two bounds, we get
    \begin{align*}
        R_{1,T} \leq \max_{\substack{\|\vu\|_\infty = 1\\ \vu \in \R^{p(T+1)}}} \vu^\top \vDelta_T \vu
        &\leq
        2p\|\vB\|\|\vC\| \left( \left( 1+ \frac{\phi(\vA,\rho)}{1-\rho}\right)H + \frac{\phi(\vA,\rho)\rho}{(1-\rho)^2}+1\right)
    \end{align*}
    which completes the proof.
\end{proof}

\subsection{Proof of Proposition 6}
\begin{proposition}
    Let $\rho \in (\rho(\vA),1)$ be given and $\epsilon>0$ be the high-probability parameter estimation error, 
    i.e., $\|\vG-\vGhat\|_F \le \epsilon$. Then, with high probability, 
    \begin{align*}
        \max\{R_{2,T}, R_{3,T} \} \leq 2p(T-H) \left( \epsilon + \kappa^2 \gamma_L\right)
    \end{align*}
    where $\kappa = \max\{\|\vB\|, \|\vC\|\}$ and $\gamma_L = \frac{\phi(\vA,\rho)\rho^L}{1-\rho}$.
\end{proposition}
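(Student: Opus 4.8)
The plan is to show that both $R_{2,T}$ and $R_{3,T}$ are controlled by a single perturbation quantity, namely
\[
    B := \max_{\vu \in \{-1,+1\}^{p(T-H)}} \left| \vu^\top \left( \vShat_{T-H-1} - \vS_{T-H-1}\right) \vu \right|.
\]
For $R_{3,T}$ this is immediate: since $\vu_{H+1:T}^\pi$ is a sign vector, $R_{3,T} = (\vu_{H+1:T}^\pi)^\top(\vShat_{T-H-1}-\vS_{T-H-1})\vu_{H+1:T}^\pi \le B$. For $R_{2,T}$ I would invoke the optimality of $\vu_{H+1:T}^\pi$ for the estimated objective: because $\vu_{H+1:T}^\pi$ maximizes $\vu^\top \vShat_{T-H-1}\vu$ over the vertices and $\vutil_{H+1:T}$ is also a feasible sign vector, we have $(\vu_{H+1:T}^\pi)^\top \vShat_{T-H-1}\vu_{H+1:T}^\pi \ge (\vutil_{H+1:T})^\top \vShat_{T-H-1}\vutil_{H+1:T}$, and hence
\[
    R_{2,T} \le (\vutil_{H+1:T})^\top \vS_{T-H-1}\vutil_{H+1:T} - (\vutil_{H+1:T})^\top \vShat_{T-H-1}\vutil_{H+1:T} \le B.
\]
Thus $\max\{R_{2,T}, R_{3,T}\} \le B$, and it remains only to bound $B$.

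Next I would exploit the block-Toeplitz structure of the perturbation. Writing $\vShat_{T-H-1}-\vS_{T-H-1} = (\vMhat_{T-H-1}-\vM_{T-H-1}) + (\vMhat_{T-H-1}-\vM_{T-H-1})^\top$, this is a symmetric block matrix with zero diagonal blocks whose lag-$k$ off-diagonal block equals $\vGhat_k - \vC\vA^k\vB$ when $k < L$ (estimation error) and $-\vC\vA^k\vB$ when $k \ge L$ (truncation error), where $\vGhat_k$ denotes the $k$-th $p\times p$ block of $\vGhat$. Applying the block spectral-norm bound for quadratic forms over the hypercube (Proposition~\ref{prop:upperbound}, exactly as in the proof of Proposition~\ref{prop:regret-a}) and bounding the multiplicity of each lag by $T-H$, I would obtain
\[
    B \le 2p(T-H)\left( \sum_{k=0}^{L-1}\left\| \vGhat_k - \vC\vA^k\vB\right\| + \sum_{k \ge L}\left\|\vC\vA^k\vB\right\|\right).
\]

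Finally I would bound the two sums separately. The truncation sum is handled by Corollary~\ref{cor:lem:finite-gelfands}, which gives $\|\vC\vA^k\vB\| \le \kappa^2 \phi(\vA,\rho)\rho^k$, so summing the geometric series yields $\sum_{k\ge L}\|\vC\vA^k\vB\| \le \kappa^2 \phi(\vA,\rho)\rho^L/(1-\rho) = \kappa^2\gamma_L$. The estimation sum is controlled on the high-probability event $\{\|\vGhat-\vG\|_F \le \epsilon\}$ supplied by Theorem~\ref{thm:sysID}: using $\|\cdot\|\le\fronorm{\cdot}$ blockwise, $\sum_{k=0}^{L-1}\|\vGhat_k - \vC\vA^k\vB\| \le \sum_{k=0}^{L-1}\fronorm{\vGhat_k - \vC\vA^k\vB}$, which is at most $\epsilon$ up to a benign $\sqrt{L}$ Cauchy--Schwarz factor absorbed into the logarithmic terms (recall $L=\Theta(\log T)$). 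Combining gives $\max\{R_{2,T}, R_{3,T}\} \le B \le 2p(T-H)(\epsilon + \kappa^2\gamma_L)$, which is exactly the claim, and the ``with high probability'' qualifier is inherited from the estimation event. The step requiring the most care is $R_{2,T}$: because it compares two \emph{different} maximizers ($\vutil_{H+1:T}$ for the true objective and $\vu_{H+1:T}^\pi$ for the estimated one), it cannot be controlled term-by-term, and the resolution is precisely the optimality inequality above, which collapses it onto the common matrix perturbation $B$.
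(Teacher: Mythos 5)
Your proposal is correct and follows essentially the same route as the paper's proof: both arguments reduce $R_{2,T}$ and $R_{3,T}$ to the quadratic form of the single perturbation matrix $\vShat_{T-H-1}-\vS_{T-H-1}$ over sign vectors (your optimality inequality for $R_{2,T}$ is exactly the paper's max--min step with $\vv=\vu$), and both then invoke the block-norm bound of Proposition~\ref{prop:upperbound}, splitting the block-Toeplitz perturbation into estimation-error blocks (lag $<L$, controlled in Frobenius norm) and truncation blocks (lag $\ge L$, summed geometrically via Corollary~\ref{cor:lem:finite-gelfands}).

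One point of comparison is worth recording. Where you bound $\sum_{k=0}^{L-1}\fronorm{\vGhat_k-\vC\vA^k\vB}\le \sqrt{L}\,\fronorm{\vGhat-\vG}$ and flag the $\sqrt{L}$ factor explicitly, the paper's proof asserts the per-row equality $\sum_{(k,j)\in\Jcal_k}\fronorm{\vDelta_{ij}}=\fronorm{\vG-\vGhat}$, which is false in general: the sum of block Frobenius norms is at least (not equal to) the Frobenius norm of their concatenation, and the usable direction costs exactly the $\sqrt{L}$ you identify. So the bare $\epsilon$ in the paper's stated bound implicitly carries the same hidden $\sqrt{L}$ slack; your treatment is the more careful one, and, as you note, since $L=\Theta(\log T)$ this factor is absorbed by $\tilde{\Ocal}(\cdot)$ and leaves the $\tilde{\Ocal}(T^{2/3})$ regret conclusion intact.
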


\begin{proof}
    Observe the following bounds for $R_{2,T}$ and $R_{3,T}$.
    \begin{align*}
        R_{2,T} 
        &= (\vutil_{H+1:T})^\top \vS_{T-H-1} \vutil_{H+1:T} - (\vu_{H+1:T}^\pi)^\top \vShat_{T-H-1} \vu_{H+1:T}^\pi \\
        &= \max_{\|\vu\|_\infty=1} \vu^\top \vS_{T-H-1} \vu - \max_{\|\vv\|_\infty=1} \vv^\top \vShat_{T-H-1} \vv \\
        &= \max_{\|\vu\|_\infty=1} \min_{\|\vv\|_\infty=1} \left[\vu^\top \vS_{T-H-1} \vu - \vv^\top \vShat_{T-H-1} \vv \right]\\
        &\leq \max_{\|\vu\|_\infty=1} \vu^\top ( \vS_{T-H-1} - \vShat_{T-H-1} )\vu \tag{take $\vv = \vu$} \\
        & \\
        R_{3,T} &= (\vu_{H+1:T}^\pi)^\top (\vShat_{T-H-1}-\vS_{T-H-1}) \vu_{H+1:T}^\pi \\
    	&\leq \max_{\|\vu\|_\infty = 1} \vu^\top (\vShat_{T-H-1} - \vS_{T-H-1}) \vu
    \end{align*}
    According to Proposition~\ref{prop:upperbound}, we now see that $R_{2,T}$ and $R_{3,T}$ can be bounded by a same quantity, which will be specified in this proof.

    Let $\vMhat_{T-H-1}$ be the upper diagonal part of $\vShat_{T-H-1}$, that is, $\vShat_{T-H-1} = \vMhat_{T-H-1} + \vMhat_{T-H-1}^\top$.
    Recall that $\vS_{T-H-1} = \vM_{T-H-1} + \vM^\top_{T-H-1}$ where 
    \begin{align*}
        \vM_{T-H-1} = 
        \begin{bmatrix}
            \vzero_{p\times p} & \vC\vB & \vC\vA\vB &  \cdots & \cdots & \vC\vA^{T-H-2} \vB \\
            \vzero_{p\times p} & \vzero_{p\times p} & \vC\vB & \cdots & \cdots & \vC\vA^{T-H-3} \vB \\
            \vdots & \ddots & \ddots & \ddots &  & \vdots \\
            \vdots &  & \ddots & \vzero_{p\times p} & \vC\vB & \vC\vA\vB \\
            \vdots &  &  & \ddots & \vzero_{p\times p} & \vC\vB  \\
            \vzero_{p\times p} & \vzero_{p\times p} & \vzero_{p\times p} & \cdots & \vzero_{p\times p} & \vzero_{p\times p}
        \end{bmatrix} \in \R^{p(T-H) \times p(T-H)}.
    \end{align*}
    Observe that 
    \begin{align*}
        \vu^\top (\vS_{T-H-1} - \vShat_{T-H-1}) \vu 
    	= 2\vu^\top (\vM_{T-H-1} - \vMhat_{T-H-1}) \vu.
    \end{align*}
    Let $\vDelta_{ij}$ denote the $(i,j)$-th $p{\times}p$ block of $\vM_{T{-}H{-}1}{-}\vMhat_{T{-}H{-}1}$.
    By Proposition~\ref{prop:upperbound}, 
    \begin{align*}
        \max_{\|\vu\|_\infty = 1} \vu^\top (\vM_{T-H-1} - \vMhat_{T-H-1}) \vu 
        &\leq p \left( \sum_{(i,j) \in \Ical} \|\vDelta_{ij}\| + \sum_{(i,j) \in \Ical^c} \|\vDelta_{ij}\|_F\right)
    \end{align*}
    where $\Ical = \{(i,j) \colon L+1 \le i+L<j \leq T-H\}$.
    Define $\Ical_k = \{(k,j) \colon (k,j) \in \Ical\}$ and $\Jcal_k = \{(k,j) \colon (k,j) \in \Ical^c\}$ for $k=1,\dots,T-H$, then 
    \begin{align*}
        \Ical = \dot{\cup}_{k=1}^{T{-}H{-}L{-}1} \Ical_k, \quad \Jcal = \dot{\cup}_{k=1}^{T-H} \Jcal_k,
    \end{align*}
    and $\Ical_k = \varnothing$ for $k=T-H-L,\dots,T-H$.
    For each $k=1,\dots,T{-}H{-}L{-}1$, 
    \begin{align*}
        \sum_{(k,j) \in \Ical_k} \|\vDelta_{ij}\| + \sum_{(k,j) \in \Jcal_k} \|\vDelta_{ij}\|_F 
        &= \sum_{\ell=L+1}^{T-H-k}\|\vC\vA^{\ell-1}\vB\| + \|\vG-\vGhat\|_F
    \end{align*}
    and for $k=T-H-L, \dots, T-H$,
    \begin{align*}
        \sum_{(k,j) \in \Ical_k} \|\vDelta_{ij}\| + \sum_{(k,j) \in \Jcal_k} \|\vDelta_{ij}\|_F 
        = 0 + \sum_{(k,j) \in \Jcal_k} \|\vDelta_{ij}\|_F 
        \leq \|\vG-\vGhat\|_F
    \end{align*}
    Therefore, we obtain
    \begin{align*}
        \max_{\|\vu\|_\infty =1 }\vu^\top (\vS_{T-H-1} - \vShat_{T-H-1}) \vu 
    	&= 2 \max_{\|\vu\|_\infty = 1}\vu^\top (\vM_{T-H-1} - \vMhat_{T-H-1}) \vu \\
        &\leq 2p\left( \sum_{(i,j) \in \Ical} \|\vDelta_{ij}\| + \sum_{(i,j) \in \Ical^c} \|\vDelta_{ij}\|_F\right) \\
        &= 2p \sum_{k=1}^{T-H}\left( \sum_{(k,j) \in \Ical_k} \|\vDelta_{ij}\| + \sum_{(k,j) \in \Jcal_k} \|\vDelta_{ij}\|_F\right) \\
        &\leq 2p\sum_{k=1}^{T{-}H{-}L{-}1} \left( \sum_{\ell=L+1}^{T-H-k}\|\vC\vA^{\ell-1}\vB\| + \|\vG-\vGhat\|_F\right) + 2p(L+1)\|\vG-\vGhat\|_F \\
        &\leq 2p\sum_{k=1}^{T{-}H{-}L{-}1} \sum_{\ell=L+1}^{T-H-k}\|\vC\vA^{\ell-1}\vB\| + 2p(T-H)\|\vG-\vGhat\|_F
    \end{align*}
    We again use Corollary~\ref{cor:lem:finite-gelfands}. 
    Given $\rho \in (\rho(\vA), 1)$, 
    \begin{align*}
        \sum_{k=1}^{T{-}H{-}L{-}1} \sum_{\ell=L+1}^{T-H-k} \|\vC\vA^{\ell-1}\vB\| 
        &\leq \phi(\vA,\rho)\|\vB\|\|\vC\|\sum_{k=1}^{T{-}H{-}L{-}1} \sum_{\ell=L+1}^{T-H-k}  \rho^{\ell-1} \\
        &= \phi(\vA,\rho)\|\vB\|\|\vC\|\sum_{k=1}^{T{-}H{-}L{-}1} \frac{\rho^L - \rho^{T-H-k}}{1-\rho}\\
        &= \phi(\vA,\rho)\|\vB\|\|\vC\| \left(\frac{(T-H-L-1)\rho^L}{1-\rho} - \sum_{k=1}^{T-H-L-1}\frac{\rho^{T-H-k}}{1-\rho}\right) \\
        &\leq (T-H)\phi(\vA,\rho)\|\vB\|\|\vC\| \frac{\rho^L}{1-\rho}
    \end{align*}
    In conclusion, we get a final upper bound as follows:
    \begin{align*}
        \max_{\|\vu\|_\infty =1 }\vu^\top (\vS_{T-H-1} - \vShat_{T-H-1}) \vu 
        &\leq 2p(T-H) \left( \|\vB\|\vC\| \frac{\phi(\vA,\rho)\rho^L}{1-\rho} + \|\vG-\vGhat\|_F\right)
    \end{align*}
    This completes the proof.
\end{proof}

\subsection{Miscellaneous}

\begin{lemma} \label{lem:frob}
	For a square $n\times n$ matrix $\vM=(M_{ij})$, we have
	\begin{align*}
		\vx^\top \vM \vx \leq n \|\vM\|_F 
		\quad 
		\text{and}
		\quad
		\vx^\top \vM\vx \leq n \|\vM\|_2
	\end{align*}
	whenever $\|\vx\|_\infty \leq 1$.
\end{lemma}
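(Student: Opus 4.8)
The plan is to prove each inequality by a direct application of the Cauchy--Schwarz inequality, exploiting the coordinate bound $|x_i|\le 1$ implied by $\|\vx\|_\infty\le 1$. Both statements are elementary, and there is no substantive obstacle beyond choosing the correct form of Cauchy--Schwarz in each case; the only point requiring a moment's care is that the real scalar $\vx^\top\vM\vx$ is bounded above by its absolute value, which each chain of inequalities uses implicitly.

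For the Frobenius bound, I would first expand the quadratic form as $\vx^\top\vM\vx=\sum_{i,j=1}^n M_{ij}x_ix_j$. Since $\|\vx\|_\infty\le 1$ forces $|x_i|\le 1$ for every $i$, each product satisfies $|x_ix_j|\le 1$, so that $\vx^\top\vM\vx\le\sum_{i,j=1}^n|M_{ij}|$. I would then regard $(|M_{ij}|)_{i,j}$ as an $n^2$-dimensional vector and apply Cauchy--Schwarz against the all-ones vector:
\begin{align*}
	\sum_{i,j=1}^n|M_{ij}|\le\sqrt{n^2}\,\Big(\sum_{i,j=1}^n M_{ij}^2\Big)^{1/2}=n\,\|\vM\|_F,
\end{align*}
which establishes the first claim.

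For the spectral bound, I would instead apply Cauchy--Schwarz to the inner product $\vx^\top\vM\vx=\langle\vx,\vM\vx\rangle$ and then invoke the definition of the operator norm:
\begin{align*}
	\vx^\top\vM\vx\le\|\vx\|_2\,\|\vM\vx\|_2\le\|\vM\|_2\,\|\vx\|_2^2.
\end{align*}
Finally, since $\|\vx\|_\infty\le 1$ gives $x_i^2\le 1$ for each $i$, we have $\|\vx\|_2^2=\sum_{i=1}^n x_i^2\le n$, and substituting yields $\vx^\top\vM\vx\le n\,\|\vM\|_2$, completing the argument. The hard part is essentially nonexistent here; the lemma is a bookkeeping tool, and the main thing to verify is merely that the factor $n$ emerges correctly from $\sqrt{n^2}$ in the Frobenius case and from $\|\vx\|_2^2\le n$ in the spectral case.
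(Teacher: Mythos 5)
Your proposal is correct and follows essentially the same route as the paper's own proof: both expand the quadratic form and apply Cauchy--Schwarz against the all-ones vector to obtain the factor $\sqrt{n^2}=n$ for the Frobenius bound, and both use the operator-norm inequality together with $\|\vx\|_2^2\le n$ for the spectral bound. No gaps; your version just spells out the intermediate Cauchy--Schwarz step $\vx^\top\vM\vx\le\|\vx\|_2\|\vM\vx\|_2$ slightly more explicitly than the paper does.
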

\begin{proof}
Assuming $\|\vx\|_\infty \leq 1$, the first result can be proved as follows:
\begin{align*}
	\vx^\top \vM \vx &= \sum_{i,j=1}^n M_{ij} x_i x_j 
	\leq \sum_{i,j=1}^n |M_{ij}| 
	\leq \sqrt{n^2} \sqrt{\sum_{i,j=1}^n |M_{ij}|^2} 
	= n \|\vM\|_F
\end{align*}
where the second inequality is obtained using Cauchy-Schwarz inequality.
The second result can be readily obtained:
\begin{align*}
	\vx^\top \vM \vx \leq \|\vM\|_2 \|\vx\|_2^2 \leq n \|\vM\|_2
\end{align*}
because $\|\vx\|_\infty=1$ implies $\|\vx\|_2 \leq \sqrt{n}$.
\end{proof}

\begin{proposition} \label{prop:upperbound}
	Let $\vA$ be a block matrix given by
	\begin{align*}
		\vA = 
		\begin{bmatrix}
			\vA_{11} & \vA_{12} & \cdots & \vA_{1n} \\
			\vA_{21} & \vA_{22} & \cdots & \vA_{2n} \\
			\vdots & \vdots & \ddots & \vdots \\
			\vA_{n1} & \vA_{n2} & \cdots & \vA_{nn}
		\end{bmatrix}
	\end{align*}
	where $\vA_{ij} \in \R^{p\times p}$.
	Let $\Ical$ be any subset of $\{(i,j) \colon 1 \leq i,j \leq n\}$.
	Then we have the following:
	\begin{align}
		\max_{\|\vx\|_\infty=1} \vx^\top \vA \vx 
		&\leq p \left(\sum_{(i,j) \in \Ical} \|\vA_{ij}\|_2 + \sum_{(i,j) \in \Ical^c} \|\vA_{ij}\|_F \right)
	\end{align}
	where $\vx \in \R^{pn}$.
\end{proposition}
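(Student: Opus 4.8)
The plan is to reduce the quadratic form over the hypercube to a sum of block-wise bilinear forms and then bound each block by whichever norm is convenient. First I would partition any $\vx \in \R^{pn}$ with $\infnorm{\vx}\le 1$ into $n$ sub-vectors $\vx = [\vx_1^\top~\cdots~\vx_n^\top]^\top$ with $\vx_i \in \R^p$, and expand
\begin{align*}
    \vx^\top \vA \vx = \sum_{i,j=1}^n \vx_i^\top \vA_{ij} \vx_j \le \sum_{i,j=1}^n \left| \vx_i^\top \vA_{ij} \vx_j \right|.
\end{align*}
The constraint $\infnorm{\vx}\le 1$ yields $\twonorm{\vx_i} \le \sqrt{p}$ for every $i$, which is the only place the hypercube geometry enters.

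Next I would establish two complementary bounds on each bilinear block term. For the spectral-norm bound, Cauchy–Schwarz for the operator norm gives $|\vx_i^\top \vA_{ij} \vx_j| \le \twonorm{\vA_{ij}}\,\twonorm{\vx_i}\,\twonorm{\vx_j} \le p\,\twonorm{\vA_{ij}}$. For the Frobenius bound, writing the term as the trace inner product $\langle \vA_{ij}, \vx_i \vx_j^\top\rangle$ and applying Cauchy–Schwarz in the Frobenius inner product gives $|\vx_i^\top \vA_{ij} \vx_j| \le \fronorm{\vA_{ij}}\,\fronorm{\vx_i \vx_j^\top} = \fronorm{\vA_{ij}}\,\twonorm{\vx_i}\,\twonorm{\vx_j} \le p\,\fronorm{\vA_{ij}}$; equivalently, one can bound $\left|\sum_{k,l}(\vA_{ij})_{kl}(\vx_i)_k(\vx_j)_l\right|$ by $\sum_{k,l}|(\vA_{ij})_{kl}| \le p\,\fronorm{\vA_{ij}}$ via Cauchy–Schwarz over the $p^2$ entries, in the same spirit as Lemma~\ref{lem:frob}.

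Finally I would apply the spectral bound to the blocks indexed by $\Ical$ and the Frobenius bound to those indexed by $\Ical^c$, obtaining
\begin{align*}
    \vx^\top \vA \vx \le p \left( \sum_{(i,j)\in\Ical}\twonorm{\vA_{ij}} + \sum_{(i,j)\in\Ical^c}\fronorm{\vA_{ij}} \right).
\end{align*}
Since the right-hand side is independent of $\vx$ and the bound holds for every feasible $\vx$, taking the maximum over $\infnorm{\vx}=1$ completes the proof.

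I do not expect a genuine obstacle here: the argument is a direct application of Cauchy–Schwarz. The only point requiring mild care is that the off-diagonal contributions $\vx_i^\top \vA_{ij}\vx_j$ with $i\neq j$ are bilinear in two \emph{distinct} sub-vectors rather than quadratic in a single vector, so Lemma~\ref{lem:frob} cannot be invoked verbatim; instead each block must be treated as a bilinear form. This is precisely why both $\twonorm{\vx_i}$ and $\twonorm{\vx_j}$ enter, producing the factor $p=\sqrt{p}\cdot\sqrt{p}$ uniformly across all blocks regardless of whether the spectral or Frobenius norm is used.
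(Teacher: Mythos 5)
Your proof is correct and follows essentially the same route as the paper: expand the quadratic form into block bilinear terms $\vx_i^\top \vA_{ij}\vx_j$, use $\infnorm{\vx}\le 1 \Rightarrow \twonorm{\vx_i}\le\sqrt{p}$, and bound each block by $p\twonorm{\vA_{ij}}$ or $p\fronorm{\vA_{ij}}$ via Cauchy--Schwarz, splitting the blocks according to $\Ical$ versus $\Ical^c$. If anything, you are slightly more careful than the paper itself, which invokes its quadratic-form Lemma~\ref{lem:frob} verbatim on the off-diagonal bilinear blocks with two distinct sub-vectors --- precisely the point you flag as needing the separate bilinear treatment.
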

\begin{proof}
Let $\vx = \begin{bmatrix} \vx_1^\top & \cdots & \vx_n^\top \end{bmatrix}^\top$
where $\vx_i \in \R^p$ for each $i$.
First, observe the equality:
\begin{align*}
\vx^\top \vA \vx = \sum_{i,j=1}^n \vx_i^\top \vA_{ij} \vx_j.
\end{align*}
For arbitrarily chosen index pair $(i,j)$, Lemma~\ref{lem:frob} implies that 
\begin{align*}
	\vx_i^\top \vA_{ij} \vx_j \leq p \|\vA_{ij}\|_F 
	\quad
	\text{and}
	\quad
	\vx_i^\top \vA_{ij} \vx_j \leq p \|\vA_{ij}\|_2.
\end{align*}
This completes the proof.
\end{proof}

\section{Numerical Experiments}

In this section, we give further details regarding numerical experiments. In Section~\ref{supp:subsec:GW}, we give a lower bound of the expected output of Goemans-Williamson random hyperplane rounding method, which is similar but different from MaxCut problems. 
In Section~\ref{supp:subsec:sys-id}, we describe further discussion about the experimental results in parameter estimation.
In Section~\ref{supp:subsec:add-exp}, we design additional numerical experiments to show that our practical approach works for dense matrices $\vA,\vB,\vC$, and we discuss the experimental results with different spectral radii of $\vA$. The code for experiments can be found in \url{https://github.com/sdean-group/EtC-latent-bandits}.

\subsection{Goemans-Williamson Random Hyperplane Rounding} \label{supp:subsec:GW}

We recall the problem stated in Section~\ref{subsec:SDP-GW}.
\begin{equation} \tag*{(19)}
    \begin{split} 
        \text{maximize }& \tr(\vW\vX) \\
        \text{subject to }& \vX \succeq 0, \, {\rm rank}(\vX) =1 \\
        & \vX_{ii} = 1, \quad i=1,\dots,n. 
    \end{split}
\end{equation}
Its semidefinite relaxation is 
\begin{equation} \label{supp:eqn:SDP-relax}
    \begin{split} 
        \text{maximize }& \tr(\vW\vX) \\
        \text{subject to }& \vX \succeq 0 \\
        & \vX_{ii} = 1, \quad i=1,\dots,n. 
    \end{split}
\end{equation}
    
In Section~\ref{subsec:SDP-GW}, we discussed Goemans-Williamson random hyperplane rounding method to find a feasible point of \eqref{eqn:simple-form-matrix} from its relaxed problem~\eqref{supp:eqn:SDP-relax}. 
For MaxCut problems, \texttt{SDP+GW} achieves an $\alpha$-approximation algorithm with $\alpha \approx 0.87856$. 
This means that, for any given instance of MaxCut problem, the algorithm is guaranteed on average to return a solution whose value is at least 87.856\% of the true optimum.
In the next proposition, we derive a similar result where $\alpha$ depends on the matrix $\vW$ in \eqref{eqn:simple-form-matrix}.

Let $\nu$ and $\nu_{\rm rlx}$ be the optimal value of the original problem~\eqref{eqn:simple-form-matrix} and its relaxed problem~\eqref{supp:eqn:SDP-relax}, respectively. 
It is clear that $\nu_{\rm rlx} \ge \nu$. 
Let $\vX^\star = \vV^\top \vV$ be a solution to \eqref{supp:eqn:SDP-relax} where we denote $\vv_i$ to be the $i$th column of $\vV$.
Note that the constraint $\vX_{ii}=1$ implies that $\vv_i^\top \vv_i=1$.
According to Goemans-Williamson random hyperplane rounding method, for $\vr \sim \Ncal(0,\vI_n)$, the point $\vx_f$ where
\begin{align*}
    \vx_f = 
    \begin{bmatrix}
        \text{sign}(\vv_1^\top \vr) & \cdots & \text{sign}(\vv_n^\top \vr)
    \end{bmatrix}^\top
\end{align*}
is a feasible point of \eqref{eqn:simple-form-matrix}.
The expected value $\nu_{\rm GW}$ from the rounding algorithm is
\begin{align*}
    \nu_{\rm GW} = \E\left[\tr(\vW\vx_f\vx_f^\top)\right] 
    &= \tr(\vW \vX_f)
\end{align*}
where $(\vX_f)_{ij} = \E\left[\text{sign}(\vv_i^\top \vr)\text{sign}(\vv_j^\top \vr)\right]$.

\begin{proposition}
    The expected value $\nu_{\rm GW}$ has a lower bound that depends on $\vW$.
    Specifically, we have
    $$\nu_{\rm GW} \ge \alpha \nu_{\rm rlx} -(1-\alpha) \sum_{i=1}^n \sum_{j=1}^n |\vW_{ij}|$$ where 
    $$\alpha = \min_{x\in(-1,1)}\frac{1-\frac{\arccos\left(x\right)}{\pi}}{\frac{1+x}{2}}= \min_{x \in(-1,1)} \frac{\frac{\arccos\left(x\right)}{\pi}}{\frac{1-x}{2}}\approx 0.87856.$$
\end{proposition}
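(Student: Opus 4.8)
The plan is to reduce the claim to two pointwise scalar inequalities for the function $f(x) := \tfrac{2}{\pi}\arcsin(x)$, exploiting the two equivalent expressions given for $\alpha$, and then to sum termwise while tracking the sign of each entry $\vW_{ij}$.

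First I would record the classical Grothendieck/Sheppard identity: for $\vr\sim\Ncal(0,\vI_n)$ and unit vectors $\vv_i,\vv_j$,
\[
(\vX_f)_{ij}=\E\!\left[\text{sign}(\vv_i^\top\vr)\,\text{sign}(\vv_j^\top\vr)\right]=\tfrac{2}{\pi}\arcsin(\vv_i^\top\vv_j).
\]
Since the relaxation enforces $\vX_{ii}=\vv_i^\top\vv_i=1$, each inner product $x_{ij}:=\vv_i^\top\vv_j$ lies in $[-1,1]$ by Cauchy--Schwarz. Writing $f(x)=\tfrac{2}{\pi}\arcsin(x)=1-\tfrac{2}{\pi}\arccos(x)$, this yields $\nu_{\rm GW}=\sum_{i,j}\vW_{ij}f(x_{ij})$, while $\nu_{\rm rlx}=\tr(\vW\vX^\star)=\sum_{i,j}\vW_{ij}x_{ij}$.

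The core step is the two-sided bound: for every $x\in[-1,1]$,
\[
\alpha x-(1-\alpha)\;\le\; f(x)\;\le\;\alpha x+(1-\alpha).
\]
The upper bound follows from the second form $\alpha=\min_x \tfrac{\arccos(x)/\pi}{(1-x)/2}$, which gives $\tfrac{2}{\pi}\arccos(x)\ge\alpha(1-x)$, hence $f(x)=1-\tfrac{2}{\pi}\arccos(x)\le (1-\alpha)+\alpha x$. The lower bound follows symmetrically from the first form $\alpha=\min_x\tfrac{1-\arccos(x)/\pi}{(1+x)/2}$ (equal to the second via the reflection $\arccos(-x)=\pi-\arccos(x)$), which gives $1-\tfrac{\arccos(x)}{\pi}\ge\alpha\tfrac{1+x}{2}$, hence $f(x)=2\bigl(1-\tfrac{\arccos(x)}{\pi}\bigr)-1\ge\alpha(1+x)-1=\alpha x-(1-\alpha)$.

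Finally I would argue termwise. For each pair $(i,j)$, if $\vW_{ij}\ge 0$ multiply the lower bound by $\vW_{ij}$, and if $\vW_{ij}<0$ multiply the upper bound by $\vW_{ij}$ (reversing the inequality); in both cases $\vW_{ij}f(x_{ij})\ge\alpha\vW_{ij}x_{ij}-(1-\alpha)|\vW_{ij}|$. Summing over all $i,j$ gives $\nu_{\rm GW}\ge\alpha\nu_{\rm rlx}-(1-\alpha)\sum_{i,j}|\vW_{ij}|$, as claimed; the diagonal terms with $x_{ii}=1$ satisfy both bounds (with equality in the upper one) and cause no difficulty. The main obstacle is the Sheppard identity in the first step: the quantity $\E[\text{sign}(\vv_i^\top\vr)\,\text{sign}(\vv_j^\top\vr)]$ must be evaluated through the geometry of a standard Gaussian restricted to the plane spanned by $\vv_i,\vv_j$, where the probability that the two signs disagree equals $\arccos(x_{ij})/\pi$ (the normalized angle between the vectors); everything downstream is elementary once this identity and the equivalence of the two formulas for $\alpha$ are in hand.
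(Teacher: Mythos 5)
Your proposal is correct and follows essentially the same route as the paper's proof: the Sheppard identity $\E[\mathrm{sign}(\vv_i^\top\vr)\,\mathrm{sign}(\vv_j^\top\vr)] = 1-\tfrac{2}{\pi}\arccos(\vv_i^\top\vv_j)$, the two scalar inequalities extracted from the two equivalent expressions for $\alpha$, and a termwise summation that tracks the sign of each $\vW_{ij}$ (the paper packages your two-case analysis as a single inequality $s\bigl(1-\tfrac{2}{\pi}\arccos(t)\bigr)\ge s\alpha t+(\alpha-1)$ for $s=\pm1$ with $\vW_{ij}=|\vW_{ij}|s_{ij}$, which is the identical argument). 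The only cosmetic differences are your use of the $\arcsin$ form of the identity and your explicit note that the two formulas for $\alpha$ coincide via $\arccos(-x)=\pi-\arccos(x)$, neither of which changes the substance.
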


\begin{proof} 
    Observe that we have
    \begin{align*}
        (\vX_f)_{ij} = \E\left[\text{sign}(\vv_i^\top \vr)\text{sign}(\vv_j^\top \vr)\right]
        &= \Pr\left(\text{sign}(\vv_i^\top \vr) = \text{sign}(\vv_j^\top \vr)\right) + (-1) \Pr\left(\text{sign}(\vv_i^\top \vr) \neq \text{sign}(\vv_j^\top \vr)\right) \\
        &= 1 - 2\Pr\left(\text{sign}(\vv_i^\top \vr) \neq \text{sign}(\vv_j^\top \vr)\right) \\
        &= 1-\frac{2}{\pi} \arccos\left(\vv_i^\top \vv_j\right)
    \end{align*}
    where $\theta_{ij} = \arccos\left(\vv_i^\top \vv_j \right)$.
    The last equality holds since each $\vv_i$ is a unit vector.
    Therefore, we have 
    \begin{align*}
        \nu_{\rm GW} = \tr(\vW\vX_f) = \sum_{i=1}^n \sum_{j=1}^n \vW_{ij} \left( 1-\frac{2}{\pi}\arccos\left(\vv_i^\top \vv_j\right) \right).
    \end{align*}
    On the other hand, the optimal value to \eqref{supp:eqn:SDP-relax} is 
    \begin{align} \label{supp:eqn:SDP-relax-sol}
        \nu_{\rm rlx} = \tr(\vW\vX^\star) = \tr(\vW\vV^\top \vV) = \tr(\vV\vW\vV^\top) = \tr \left(\sum_{i=1}^n \sum_{j=1}^n \vW_{ij} \vv_i\vv_j^\top\right) = \sum_{i=1}^n \sum_{j=1}^n \vW_{ij} \vv_i^\top \vv_j
    \end{align}
    Observe the two inequalities that is valid for all $t \in (-1,1)$:
    \begin{align*}
        1- \frac{\arccos(t)}{\pi} \ge \alpha \frac{1+t}{2}, \quad
        \frac{\arccos(t)}{\pi} \ge \alpha \frac{1-t}{2}
    \end{align*}
    where $\alpha = 0.87856...$.
    From each inequality, we can deduce the two inequalities:
    \begin{align*}
        1-\frac{2}{\pi}\arccos(t) \quad &\geq \quad  \alpha(1+t) -1 = \alpha t + (\alpha-1) \\
        \frac{2}{\pi}\arccos(t)-1 \quad &\ge \quad \alpha(1-t) - 1 = -\alpha t +(\alpha-1).
    \end{align*}
    Combining two inequalities together, we have
    \begin{align*}
        s\left(1-\frac{2}{\pi}\arccos(t)\right) \ge s\alpha t + (\alpha - 1)
    \end{align*}
    for $s = \pm 1$.
    Therefore, letting $s_{ij} = \text{sign}(\vW_{ij})$, we have
    \begin{align*}
        \nu_{\rm GW} &= \sum_{i=1}^n \sum_{j=1}^n |\vW_{ij}|s_{ij} \left( 1- \frac{2}{\pi}\arccos(\vv_i^\top \vv_j)\right) \\
        &\ge \sum_{i=1}^n \sum_{j=1}^n \left(|\vW_{ij}|s_{ij}\alpha (\vv_i^\top \vv_j) + (\alpha -1) \right) \\
        &= \alpha \nu_{\rm rlx} -(1-\alpha) \sum_{i=1}^n \sum_{j=1}^n |\vW_{ij}|
    \end{align*}
    by \eqref{supp:eqn:SDP-relax-sol}. This completes the proof.
\end{proof}

\subsection{Parameter Estimation} \label{supp:subsec:sys-id}

In Section~\ref{subsec:param-est-plots}, we discussed how the relative estimation error changes with different truncation lengths $L$ under systems with spectral radii $\rho(\vA)=0.1$ and $\rho(\vA)=0.9$. 
In this section, we discuss the experimental result in detail.

In the experiment, we consider the case when $n=5$ and $p=3$. The Schur-stable matrix $\vA$ is generated with i.i.d. $\Ncal(0,1/n)$ entries and scaled to a desired spectral radius.
The matrices $\vB$ and $\vC$ are generated with i.i.d. $\Ncal(0,1/n)$ and $\Ncal(0,1/p)$, respectively. The noise processes are chosen to be Gaussian, that is, $\vw_t \distas \Ncal\big(\vzero_n, (0.05)^2 \vI_n \big)$ and $z_t \distas \Ncal\big(0, (0.05)^2 \big)$.
We repeated each experiment 20 times with different noise seeds.

In Figures~\ref{fig:param-est-rho-p1} and \ref{fig:param-est-rho-p9}, we can see that the relative error of Markov parameters tends to decrease as the exploration length $H$ increases. 
It is a natural phenomenon which explains that more data samples give better estimation. 
When $\rho(\vA)=0.1$ (consider Figure~\ref{fig:param-est-rho-p1}), we can also observe that larger truncation length $L$ gives larger error. 
If we have larger $L$, we have more parameters (i.e., $\vC\vB, \vC\vA\vB, \dots, \vC\vA^{L-1}\vB$) to estimate, but the spectral radius $\rho(\vA)$ too small to capture longer memory of the system, giving larger estimation error.
On the other hand, when $\rho(\vA)=0.9$ (consider Figure~\ref{fig:param-est-rho-p9}), we can see that it tends to have smaller errors when the truncation length $L$ is larger. It is because the spectral radius $\rho(\vA)$ is large enough to capture memory of the system and give help to estimate more number of Markov parameters.

Moreover, we can observe double-descent phenomenon~\cite{nakkiran2020optimal} in both plots, which happens when a model whose number of parameters is about the same as the number of data used to estimate the model. 
In our problem, the regression model has $p^2L$ number of parameters, and the number of samples used is $H-L$, implying that the peak would be shown at $H=L+p^2L$.
We can see the peaks at $H=300$ and $H=500$ for $L=30$ and $L=50$, respectively, in Figures~\ref{fig:param-est-rho-p1} and \ref{fig:param-est-rho-p9}.
For $L=6,12,18$, the peak will be obtained at $H=60, 120, 180$, but it is not shown in the plot because we only plotted the curves at $H\in \{100,150,\dots,2000\}$.

\subsection{Additional Experiments} \label{supp:subsec:add-exp}

\begin{figure*}[ht]
    \centering
    \begin{subfigure}[t]{0.19\textwidth}
        \centering
        \includegraphics[width=\linewidth]{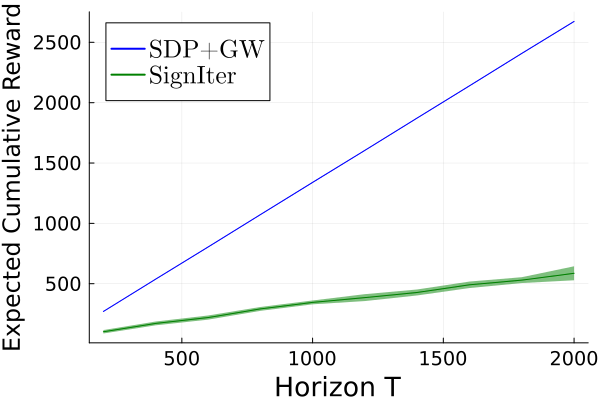}
        \caption{$\rho(\vA)=0.1$}
        \label{supp:fig:regret-benchmark-rho1}
    \end{subfigure}
    \hfill
    \begin{subfigure}[t]{0.19\textwidth}
        \centering
        \includegraphics[width=\linewidth]{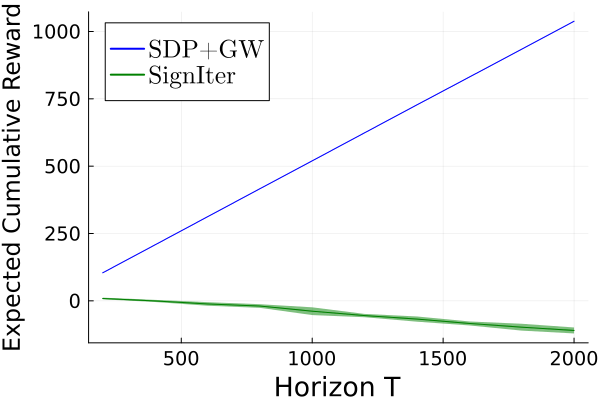}
        \caption{$\rho(\vA)=0.3$}
        \label{supp:fig:regret-benchmark-rho3}
    \end{subfigure}
    \hfill\begin{subfigure}[t]{0.19\textwidth}
        \centering
        \includegraphics[width=\linewidth]{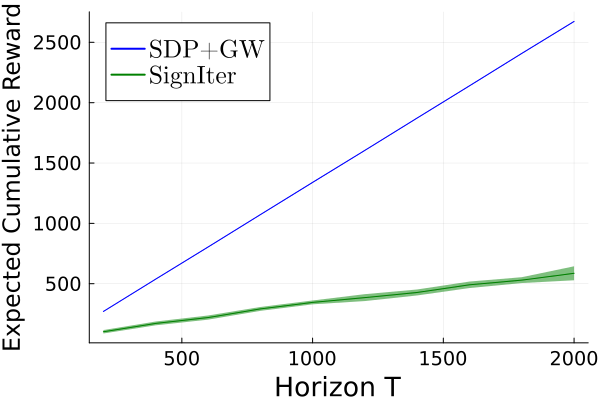}
        \caption{$\rho(\vA)=0.5$}
        \label{supp:fig:regret-benchmark-rho5}
    \end{subfigure}
    \hfill\begin{subfigure}[t]{0.19\textwidth}
        \centering
        \includegraphics[width=\linewidth]{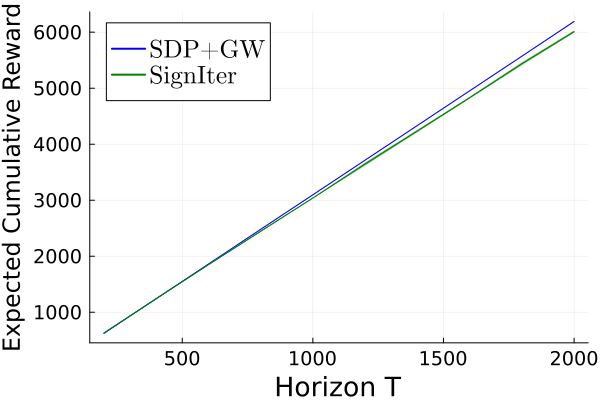}
        \caption{$\rho(\vA)=0.7$}
        \label{supp:fig:regret-benchmark-rho7}
    \end{subfigure}
    \hfill\begin{subfigure}[t]{0.19\textwidth}
        \centering
        \includegraphics[width=\linewidth]{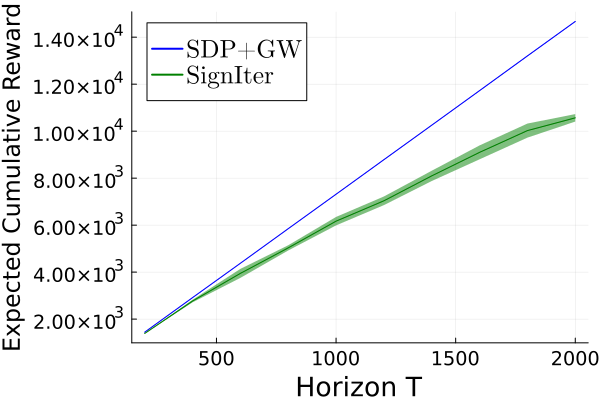}
        \caption{$\rho(\vA)=0.9$}
        \label{supp:fig:regret-benchmark-rho9}
    \end{subfigure}
    \caption{Expected cumulative reward under the oracle benchmark, approximated by semidefinite relaxation and Goemans-Williamson rounding (\texttt{SDP+GW}) and by the sign-iteration method (\texttt{SignIter}), in different spectral radii $\rho(\vA)$.}
    \label{supp:fig:regret-benchmark}
\end{figure*}

\begin{figure*}[ht]
    \centering
    \begin{subfigure}[t]{0.19\textwidth}
        \centering
        \includegraphics[width=\linewidth]{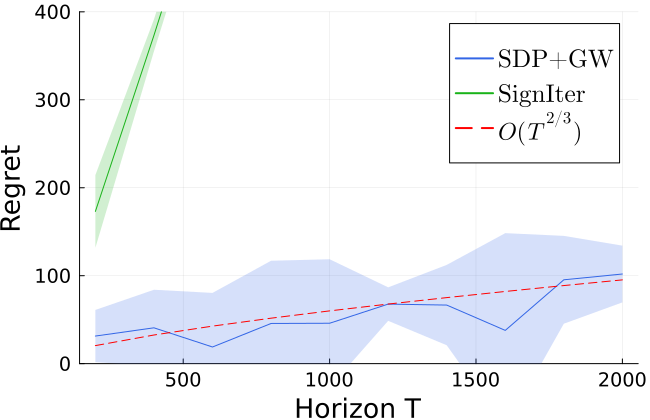}
        \caption{$\rho(\vA)=0.1$}
        \label{supp:fig:regret-rho1}
    \end{subfigure}
    \hfill
    \begin{subfigure}[t]{0.19\textwidth}
        \centering
        \includegraphics[width=\linewidth]{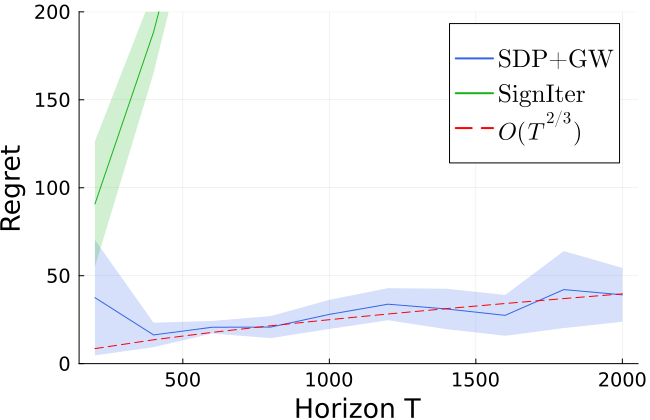}
        \caption{$\rho(\vA)=0.3$}
        \label{supp:fig:regret-rho3}
    \end{subfigure}
    \hfill
    \begin{subfigure}[t]{0.19\textwidth}
        \centering
        \includegraphics[width=\linewidth]{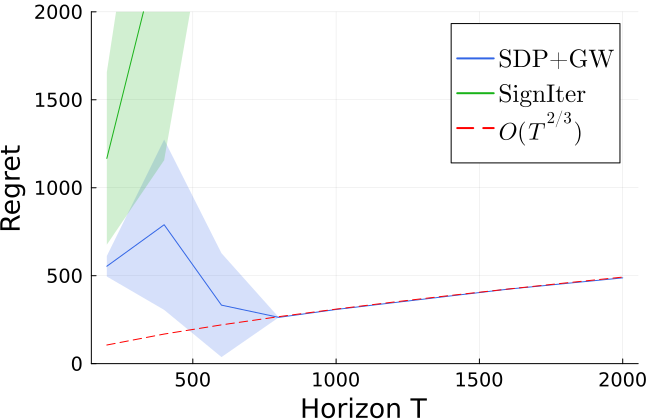}
        \caption{$\rho(\vA)=0.5$}
        \label{supp:fig:regret-rho5}
    \end{subfigure}
    \hfill
    \begin{subfigure}[t]{0.19\textwidth}
        \centering
        \includegraphics[width=\linewidth]{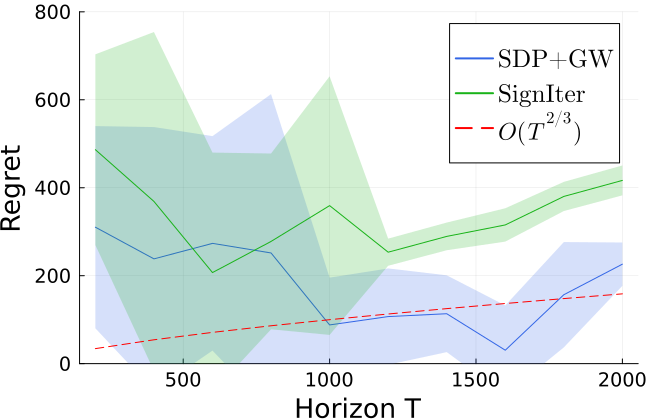}
        \caption{$\rho(\vA)=0.7$}
        \label{supp:fig:regret-rho7}
    \end{subfigure}
    \hfill
    \begin{subfigure}[t]{0.19\textwidth}
        \centering
        \includegraphics[width=\linewidth]{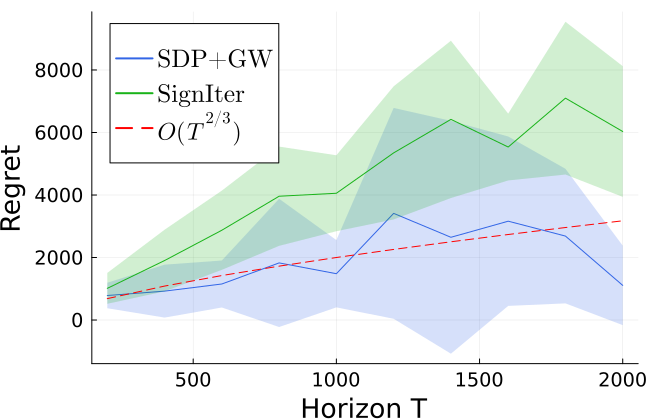}
        \caption{$\rho(\vA)=0.9$}
        \label{supp:fig:regret-rho9}
    \end{subfigure}
    \caption{The regret curves of the explore-then-commit algorithm measured from \texttt{SDP+GW} and \texttt{SignIter} against the \texttt{SDP+GW} oracle benchmark, compared with the theoretical $\tilde{\Ocal}(T^{2/3})$ rate.}
    \label{supp:fig:regret-plots}
\end{figure*}

In Section~\ref{subsec:exp-regret}, we used a simple latent dynamics and reward function. Especially, we set the matrix $\vA$ diagonal with small spectral radius (which is $\rho(\vA)=0.3$). 
This section is devoted to discuss whether our practical approaches, \texttt{SDP+GW} and \texttt{SignIter}, would work for dense matrices. 

In order to generate dense matrices, $\vA$, $\vB$, and $\vC$ are generated in the same way as we did in the experiment for parameter estimation described in Section~\ref{supp:subsec:sys-id} except that we use $n=3$ and $p=2$ in this experiment.
We set the noise processes to be Gaussian, that is, $\vw_t \distas \Ncal(\vzero_n, (0.01)^2\vI_n)$ and $z_t \distas \Ncal(0, (0,01)^2)$. 
We conducted our experiments for $\rho(\vA) \in \{0.1, 0.3, 0.5, 0.7, 0.9\}$ by scaling the same matrix $\vA$,
and we performed a grid search to determine constants $c_1, c_2$ in $H=c_1T^{2/3}$ and $L=c_2\log T$, calibrated at $T=1500$. The grid was chosen to be $c_1 \in \big\{T_0^\epsilon \colon T_0=1500, \; \epsilon \in \{0, \pm 0.02, \pm 0.05, \pm 0.1\}\big\}$ and $c_2 \in \{0.75, 1.0, 1.25\}$. For all cases of $\rho(\vA)$, an optimal value of $c_1$ is $1500^{-0.1} \approx 0.48$. 
On the other hand, optimal values of $c_2$ for $\rho(\vA)=0.1, 0.3, 0.5, 0.7$, and $0.9$ are 0.75, 0.75, 1.0, 1.25, and 1.25, respectively, which is a consistent result with the experiment in parameter estimation.
Each experiment was repeated 10 times with different noise seeds, except for the grid search.

Recall that a feasible solution to our optimization problems~\eqref{eqn:baseline-regret-form} and \eqref{eqn:commit-obj} is obtained by random rounding methods discussed in Section~\ref{subsec:SDP-GW} and Section~\ref{subsec:sign-iter}. 
Since the matrices are more complicated than the ones in Section~\ref{subsec:exp-regret}, we might expect that rounding methods need more trials to get feasible points that are close to optimal. Hence we increased the number of rounding trials for both Goemans-Williamson random hyperplane rounding method and sign-iteration method as $\rho(\vA)$ increases. Moreover, we also increased the maximum number of iterations for sign-iteration method.
The number of rounding trials is chosen to be 256, 320, 384, 448, and 512 for $\rho(\vA)=0.1, 0.3, 0.5, 0.7$, and $0.9$, respectively, and the maximum number of iterations for the sign-iteration method is chosen to be 200, 250, 300, 350, and 400, respectively. 

In Figure~\ref{supp:fig:regret-benchmark}, we can see that \texttt{SDP+GW} always gives the larger value of the expected cumulative reward. Therefore, we choose \texttt{SDP+GW} as our reward benchmark.
In Figure~\ref{supp:fig:regret-plots}, the regret curves are shown which are computed from two practical approaches against the \texttt{SDP+GW} oracle benchmark with different spectral radii. 
We can see that it tends to accrue more rewards when $\rho(\vA)$ is closer to $1$ because the states decay slower. 
In addition, the numerical experiment verifies that, even though the regret curve is more noisy than the simple case in Section~\ref{subsec:exp-regret}, the curves are close to our theoretical sublinear $\tilde{\Ocal}(T^{2/3})$ rates.

\end{document}